\newtheorem{rem}{Remark}
\newcommand{\footremember}[2]{%
   \footnote{#2}
    \newcounter{#1}
    \setcounter{#1}{\value{footnote}}%
}
\newcommand{\footrecall}[1]{%
    \footnotemark[\value{#1}]%
}
\renewcommand{\citet}{\cite}
\renewcommand{\S}{\mathbb{S}}
\newcommand{\xd}{\mc{X}} 
\title{Private optimization in the interpolation regime:\\ faster rates and hardness results}
\author{%
  Hilal Asi\footremember{authorship}{Equal contribution, author order alphabetical }\footremember{eedept}{Electrical Engineering Department, Stanford University} \\
  \texttt{asi@stanford.edu} 
  \and
  Karan Chadha\footrecall{authorship} \footrecall{eedept}\\
  \texttt{knchadha@stanford.edu} 
    \and
   Gary Cheng\footrecall{authorship}
   \footrecall{eedept} \\
  \texttt{chenggar@stanford.edu} 
  \and
  John Duchi\footrecall{eedept}
   \footnote{Statistics Department, Stanford University}\\
  \texttt{jduchi@stanford.edu} 
}
\begin{document}

\maketitle

\begin{abstract}
In non-private stochastic convex optimization, stochastic gradient methods converge much faster on interpolation problems---problems where there exists a solution that simultaneously minimizes all of the sample losses---than on non-interpolating ones;
we show that generally similar improvements are impossible in the private setting. However, when the functions exhibit quadratic growth around the optimum, we show (near) exponential improvements in the private sample complexity. In particular, we propose an adaptive algorithm that improves the sample complexity to achieve expected error $\alpha$ from $\frac{d}{\diffp \sqrt{\alpha}}$ to $\frac{1}{\alpha^\rho} + \frac{d}{\diffp} \log\paren{\frac{1}{\alpha}}$ for any fixed $\rho >0$, while retaining the standard minimax-optimal sample complexity for non-interpolation problems. We prove a lower bound that shows the dimension-dependent term is tight. Furthermore, we provide a superefficiency result which demonstrates the necessity of the polynomial term for adaptive algorithms: any algorithm that has a polylogarithmic sample complexity for interpolation problems cannot achieve the minimax-optimal rates for the family of non-interpolation problems.
\end{abstract}

\section{Introduction} \label{sec:intro}




We study differentially private stochastic convex optimization (DP-SCO), where given a dataset $\mc{S} = S_1^n \simiid P$ we wish to solve 
\begin{equation}
  \label{eqn:objective}
  \begin{split}
    \minimize ~ & f(x) = \E_P[F(x;\statrv)]
    = \int_\statdomain F(x; \statval) dP(\statval) \\
    \subjectto ~ & x \in \mc{X},
  \end{split}
\end{equation}
while guaranteeing differential privacy. In problem~\eqref{eqn:objective}, $\xd \subset \R^d$ is the parameter space, $\statdomain$ is a sample space, and $\{F(\cdot;s):
s\in\S \}$ is a collection of convex losses.
We study the interpolation setting, where there exists a solution that simultaneously minimizes all of the sample losses.

Interpolation problems are ubiquitous in machine learning applications: for example, least squares problems with consistent solutions~\cite{StrohmerVe09,NeedellWaSr14}, and problems with over-parametrized models where a perfect predictor exists~\cite{MaBaBe18,BelkinHsMi18,BelkinRaTs19}. This has led to a great deal of work on the advantages and implications of interpolation~\cite{SrebroSrTe10,CotterShSrSr11,BelkinHsMi18,BelkinRaTs19}.

 For non-private SCO, interpolation problems allow significant improvements in convergence rates over generic problems~\cite{SrebroSrTe10,CotterShSrSr11,MaBaBe18,VaswaniBaSc19,WoodworthSr21}. For general convex functions, \citet{SrebroSrTe10} develop algorithms that obtain $O(\frac{1}{n})$ sub-optimality, improving over the minimax-optimal rate $O(\frac{1}{\sqrt{n}})$ for non-interpolation problems. Even more dramatic improvements are possible when the functions exhibit growth around the minimizer, as~\citet{VaswaniBaSc19} show that SGD achieves exponential rates in this setting compared to polynomial rates without interpolation. \cite{AsiDu19siopt, AsiChChDu20, ChadhaChDu22} extend these fast convergence results to model-based optimization methods.

Despite the recent progress and increased interest in interpolation problems, in the private setting they remain poorly understood. In spite of the substantial progress in characterizing the tight convergence guarantees for a variety of settings in DP optimization~\cite{BassilySmTh14,BassilyFeTaTh19,FeldmanKoTa20,AsiFeKoTa21,AsiLeDu21}, we have little understanding of private optimization in the growing class of interpolation problems.

Given (i) the importance of differential privacy and interpolation problems in modern machine learning, (ii) the (often) paralyzingly slow rates of private optimization algorithms, and (iii) the faster rates possible for non-private interpolation problems, the interpolation setting provides a reasonable opportunity for significant speedups in the private setting. This motivates the following two questions: first, is it possible to improve the rates for DP-SCO in the interpolation regime? And, what are the optimal rates?





\subsection{Our contributions}
We answer both questions. In particular, we show that

\begin{enumerate}
	\item \textbf{No improvements in general} (\Cref{sec:no-growth}): our first result is a hardness result demonstrating that the rates cannot be improved for DP-SCO in the interpolation regime with general convex functions. More precisely, we prove a lower bound of $\Omega(\frac{d}{n \diffp})$ on the excess loss for pure differentially private algorithms. This shows that existing algorithms achieve optimal private rates for this setting. 
	\item \textbf{Faster rates with growth} (\Cref{sec:growth}): when the functions exhibit quadratic growth around the minimizer, that is,  $f(x) - f(x^\star) \ge  \lambda \norms{x-x^\star}_2^2$ for some $\lambda >0$, we propose an algorithm that achieves near-exponentially small excess loss, improving over the polynomial rates in the non-interpolation setting. Specifically, we show that the sample complexity to achieve expected excess loss $\alpha>0$ is $O(\frac{1}{\alpha^\rho} + \frac{d}{\diffp} \log\paren{\frac{1}{\alpha}})$ for pure DP and $O(\frac{1}{\alpha^\rho} + \frac{\sqrt{d \log(1/\delta)}}{\diffp} \log\paren{\frac{1}{\alpha}})$ for \ed-DP, for any fixed $\rho>0$. This improves over the sample complexity for non-interpolation problems with growth which is $O(\frac{1}{\alpha} + \frac{d}{\diffp \sqrt{\alpha}})$. 
	We also present new algorithms that improve the rates for interpolation problems with the weaker $\kappa$-growth assumption~\cite{AsiLeDu21} for $\growth > 2$ where we achieve excess loss $O( ( \frac{1}{\sqrt{n}} + \frac{d}{n \diffp} )^{\frac{\kappa}{\kappa - 2}} )$, compared to the previous bound $O( ( \frac{1}{\sqrt{n}} + \frac{d}{n \diffp} )^{\frac{\kappa}{\kappa - 1}} )$ without interpolation.
	\item \textbf{Adaptivity to interpolation} (\Cref{sec:growth-adap}):
	While these improvements for the interpolation regime are important, practitioners using these methods in practice cannot identify whether the dataset they are working with is an interpolating one or not.
	Thus, it is crucial that these algorithms do not fail when given a non-interpolating dataset. 
	We show that our algorithms are adaptive to interpolation, obtaining these better rates for interpolation while simultaneously retaining the standard minimax optimal rates for non-interpolation problems.
	\item \textbf{Tightness} (\Cref{sec:super}): finally, we provide a lower bound and a super-efficiency result that demonstrate the (near) tightness of our upper bounds showing sample complexity $\Omega(\frac{d}{\diffp} \log\paren{\frac{1}{\alpha}}) $ is necessary for interpolation problems with pure DP. Moreover, our super-efficiency result shows that the polynomial dependence on $1/\alpha$ in the sample complexity is necessary for adaptive algorithms: any algorithm that has a polylogarithmic sample complexity for interpolation problems cannot achieve minimax-optimal rates for non-interpolation problems.
\end{enumerate}

\subsection{Related work}

Over the past decade, a lot of works ~\cite{ChaudhuriMoSa11,DuchiJoWa13_focs,SmithTh13lasso, BassilySmTh14, 
  AbadiChGoMcMiTaZh16, BassilyFeTaTh19, FeldmanKoTa20,AsiFeKoTa21,AsiDuFaJaTa21,BassilyFeGuTa20} have studied the problem of private convex optimization. 
  \citet{ChaudhuriMoSa11} and \cite{BassilySmTh14} study the closely related problem of differentially private empirical risk minimization (DP-ERM) where the goal is to minimize the empirical loss, and obtain (minimax) optimal rates of $d/n\diffp$ for pure DP and ${\sqrt{d \log(1/\delta)}}/{n \diffp}$ for $(\diffp,\delta)$-DP. Recently, more papers have moved beyond DP-ERM to privately minimizing the population loss (DP-SCO)~\cite{BassilyFeTaTh19,FeldmanKoTa20,AsiFeKoTa21,AsiDuFaJaTa21,BassilyGuNa21,AsiLeDu21}.  ~\citet{BassilyFeTaTh19} was the first paper to obtain the optimal rate $1/\sqrt{n}+ {\sqrt{d \log(1/\delta)}}/{n \diffp}$ for \ed-DP, and subsequent papers develop more efficient algorithms that achieve the same rates~\cite{FeldmanKoTa20,BassilyFeGuTa20}. Moreover, other papers study DP-SCO under different settings including non-Euclidean geometry~\cite{AsiFeKoTa21,AsiDuFaJaTa21}, heavy-tailed data~\cite{WangXiDeXu20}, and functions with growth~\cite{AsiLeDu21}. However, to the best of our knowledge, there has not been any work in private optimization that studies the problem in the interpolation regime.

On the other hand, the optimization literature has witnessed numerous papers on the interpolation regime~\cite{SrebroSrTe10,CotterShSrSr11,MaBaBe18,VaswaniBaSc19,LiuBe20,WoodworthSr21}. \citet{SrebroSrTe10} propose algorithms that roughly achieve the rate $1/n + \sqrt{f\opt/n}$ for smooth and convex functions where $f\opt = \min_{x \in \xd} f(x)$. In the interpolation regime with $f\opt=0$, this result obtains loss $1/n$ improving over the standard $1/\sqrt{n}$ rate for non-interpolation problems. Moreover, \citet{VaswaniBaSc19} studied the interpolation regime for functions with growth and show that SGD enjoys linear convergence (exponential rates). More recently, several papers investigated and developed acceleration-based algorithms in the interpolation regime~\cite{LiuBe20,WoodworthSr21}.



\section{Preliminaries}\label{sec:prelim}

We begin with notation that will be used throughout the paper and provide some standard definitions from convex analysis and differential privacy.

\paragraph{Notation}
We let $n$ denote the sample size and $d$ the dimension. We let $\param$ denote the optimization variable and $\paramdomain \subset \R^d$ the constraint set. $\statval$ are samples from $\statdomain$, and $\statrv$ is an $\statdomain$-valued random variable. For each sample $\statval \in
\statdomain$, $F(\cdot; \statval): \R^d \rightarrow \R \cup \{+\infty\}$ is
a closed convex function. Let $\partial \risksamp(\param; \statval)$ denote the subdifferential of $\risksamp(\cdot; \statval)$ at $\param$. We let $\statdomain^n$ denote the collection of datasets $\statvalset= (\statval_1, \ldots, \statval_n)$ with $n$ data points from $\statdomain$. We let $\risk_\statvalset(\param) \defeq \frac{1}{n}\sum_{\statval \in \statvalset} \risksamp(\param, \statval)$ denote the empirical loss and $\risk(\param) \defeq \E[\risksamp(\param;\statrv)]$ denote the population loss. The distance of a point to a set is $\dist\paren{x,Y} = \min_{y \in Y}\ltwo{x - y}$. We use ${\rm Diam}(\paramdomain) = \sup_{x, y \in \paramdomain}\ltwo{x - y}$ to denote the diameter of parameter space $\paramdomain$ and use $D$ as a bound on the diameter of our parameter space. 

We recall the definition of $\ed$-differential privacy.
\begin{definition}
A randomized mechanism $M$ is $\ed$-differentially private ($\ed$-DP) if for all datasets $\statvalset, \statvalset' \in \statdomain^n$ that differ in a single data point and for all events $\mc{O}$ in the output space of $M$, we have 
\begin{align*}
    P(M(\statvalset)\in \mc{O}) \leq e^\diffp P(M(\statvalset') \in \mc{O}) + \delta.
\end{align*}
We define $\diffp$-differential privacy ($\diffp$-DP) to be $(\diffp, 0)$-differential privacy.
\end{definition}

We now recall a couple of standard convex analysis definitions.


\begin{definition}\label{def:lipschitz}
	~\\
	\begin{enumerate}
		\item A function $h: \xd \to \R$ is \emph{$\lip$-Lipschitz} if for all $x,y\in\paramdomain$
		\begin{align*}
			|{h(x) - h(y)}| \leq \lip \ltwo{x - y}.
		\end{align*}
		Equivalently, a function is \emph{$\lip$-Lipschitz} if $\ltwo{\nabla f(x)} \le \lip$ for all $x \in \xd$.
		\item A function $h$ is \emph{$\smooth$-smooth} if it has $\smooth$-Lipschitz  gradient: for all $x, y \in \paramdomain$
		\begin{align*}
			\ltwo{\nabla h(x) - \nabla h(y)} \leq \smooth \ltwo{x - y}.
		\end{align*}
	\item A function $h$ is \emph{$\growthcoef$-strongly convex} if for all $x, y \in \paramdomain$
	\begin{align*}
		h(y) \ge h(x) + \grad h(x)^T(y - x) + \frac{\growthcoef}{2}\ltwo{y - x}^2.
	\end{align*}
	\end{enumerate}
\end{definition}
%
%

We formally define interpolation problems:

\begin{definition}[Interpolation Problem]\label{def:interpolation}
Let $\paramdomain\opt \defeq \argmin_{\param \in \paramdomain} \risk(\param)$. Then problem \eqref{eqn:objective} is an interpolation problem if there exists $\param\opt \in \paramdomain\opt$ such that for $P$-almost all $\statval \in \statdomain$, we have $0 \in \partial F(x\opt;s)$.  
\end{definition}
Interpolation problems are common in modern machine learning, where models are  overparameterized. One simple example is overparameterized linear regression: there exists a solution that minimizes each individual sample function. Classification problems with margin are another example. 

Crucial to our results is the following quadratic growth assumption:
\begin{definition}\label{ass:growth}
We say that a function $f$ satisfies the quadratic growth condition if for all $x \in \xd$
\begin{align*}
    \risk(\param) - \inf_{\param' \in \paramdomain\opt}\risk(\param') \geq \frac{\growthcoef}{2}\dist\paren{\param,\paramdomain\opt}^2.
\end{align*}
\end{definition}
This assumption is natural with interpolation and holds for many important applications including noiseless linear regression~\cite{StrohmerVe09,NeedellWaSr14}. Past work (\cite{VaswaniBaSc19,WoodworthSr21}) uses this assumption with interpolation to get faster rates of convergence for non-private optimization.

Finally, the adaptivity of our algorithms will crucially depend on an innovation leveraging Lipchitizian extensions, defined as follows.
\begin{definition}[Lipschitzian extension \cite{HiriartUrrutyLe93ab}] \label{def:lip-ext}
The Lipschitzian extension with Lipschitz constant L of a function $f$ is defined as the infimal convolution
\begin{equation}
\label{eqn:lip-ext}
    f_L(x) \coloneqq \inf_{y \in \R^d} \{f(y) + L\ltwo{x - y}\}.
\end{equation}
\end{definition}

\noindent The Lipschitzian extension~\eqref{eqn:lip-ext} essentially transforms a general convex function into an $\lip$-Lipschitz convex function.
We now present a few properties of the Lipschitzian extension that are relevant to our development.

\begin{lemma}\label{lem:lip-ext}
Let $f:\paramdomain \to \R$ be convex. Then its Lipschitzian extension satisfies the following:
\begin{enumerate}
    \item $f_L$ is $L$-Lipschitz.
    \item $f_L$ is convex.
    \item If $f$ is $L$-Lipschitz, then $f_L(x) = f(x)$, for all $x$.
    \item Let $y(x) = \argmin_{y \in \R^d}\{f(y) + L\ltwo{x - y}\}$. If $y(x)$ is at a finite distance from $x$, we have
    \begin{equation*}
\grad f_L(x) = \begin{cases}
\grad f(x), &\text{if $\ltwo{\grad f(x)} \le L$}\\
L\frac{x - y(x)}{\ltwo{x - y(x)}}, &\text{otherwise}.
\end{cases}
\end{equation*}
\end{enumerate}
\end{lemma}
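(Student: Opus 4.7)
\textbf{Proof plan for Lemma \ref{lem:lip-ext}.} Each of the four claims follows from manipulating the defining infimal convolution $f_L(x)=\inf_{y}\{f(y)+L\|x-y\|_2\}$, so the plan is to handle them in the order stated.

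For part 1, I would fix $x_1,x_2\in\paramdomain$ and, for any $\eta>0$, pick $y_\eta$ that is $\eta$-optimal for $f_L(x_2)$, i.e., $f(y_\eta)+L\|x_2-y_\eta\|\le f_L(x_2)+\eta$. Using this $y_\eta$ as a (suboptimal) witness for $f_L(x_1)$ and applying the triangle inequality $\|x_1-y_\eta\|\le\|x_2-y_\eta\|+\|x_1-x_2\|$ yields $f_L(x_1)\le f_L(x_2)+L\|x_1-x_2\|+\eta$; send $\eta\to 0$ and swap the roles of $x_1,x_2$ to get $|f_L(x_1)-f_L(x_2)|\le L\|x_1-x_2\|$. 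For part 2, the key observation is that the joint function $h(x,y)=f(y)+L\|x-y\|_2$ is convex on $\R^d\times\R^d$ (the first summand as a convex function of $y$, the second as a norm composed with a linear map); since the partial infimum of a jointly convex function is convex in the remaining variable, $f_L(x)=\inf_y h(x,y)$ is convex.

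For part 3, taking $y=x$ in the infimum immediately gives $f_L(x)\le f(x)$. Conversely, if $f$ is $L$-Lipschitz then for every $y$, $f(x)\le f(y)+L\|x-y\|$, so $f(x)\le\inf_y\{f(y)+L\|x-y\|\}=f_L(x)$; the two inequalities combine to give equality. For part 4, I would split on whether the argmin $y(x)$ equals $x$ or not. Using the first-order optimality condition $0\in\partial_y(f(y)+L\|x-y\|)$ at $y=y(x)$: if $y(x)\ne x$ the norm is smooth at that point and optimality forces $L\tfrac{x-y(x)}{\|x-y(x)\|}\in\partial f(y(x))$; the envelope/Danskin argument then gives $\nabla f_L(x)=\nabla_x h(x,y(x))=L\tfrac{x-y(x)}{\|x-y(x)\|}$, and this case corresponds to $\|\nabla f(x)\|>L$ because otherwise $y=x$ already satisfies the optimality condition. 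In the remaining case $\|\nabla f(x)\|\le L$, the choice $y(x)=x$ is optimal (by the subdifferential inclusion $\nabla f(x)\in L\cdot B_2$, the unit ball), and in a neighborhood of any such $x$ with $\|\nabla f(x)\|<L$ optimality persists, so locally $f_L\equiv f$ and hence $\nabla f_L(x)=\nabla f(x)$; the boundary case $\|\nabla f(x)\|=L$ is handled by continuity of both expressions.

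The only mildly subtle step is the last one, because justifying the envelope formula requires either appealing to Danskin's theorem (which needs a notion of continuity/uniqueness of $y(x)$) or verifying differentiability directly by sandwiching $f_L$ between affine minorants and majorants. I would phrase it via Danskin, noting that convexity of $h(x,y)$ in $y$ and existence of a finite minimizer $y(x)$ are enough to conclude $\partial f_L(x)=\{\nabla_x h(x,y(x))\}$ whenever this gradient is a singleton, which covers both subcases above. All the other steps are routine manipulations of the infimal convolution and do not require new ideas beyond the triangle inequality and convexity of sums.
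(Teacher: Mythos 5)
Your proposal is correct, but it is a genuinely different route from the paper: the paper does not argue any of these items from scratch, it simply cites Proposition~IV.3.1.4 of Hiriart-Urruty and Lemar\'echal for items 1--2, notes for item 3 that $L$-Lipschitzness makes $y=x$ optimal in the infimum (equivalently $0 \in \nabla f(x) + L\mathbb{B}_2$), and cites Section~VI.4.5 of the same reference for item 4. Your self-contained arguments for items 1--3 (the $\eta$-optimal witness plus triangle inequality, partial minimization of the jointly convex $h(x,y)=f(y)+L\ltwo{x-y}$, and the two-sided inequality $f_L \le f \le f_L$ under Lipschitzness) are exactly the standard proofs behind those citations and buy the reader a proof that does not require chasing the textbook. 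For item 4, your case split and the optimality condition $L\frac{x-y(x)}{\ltwo{x-y(x)}} \in \partial f(y(x))$ are right, and the envelope step is better done by the sandwich you mention than by Danskin (whose hypotheses on the minimizing set you would otherwise have to check): from $f_L(x')\le f(y(x))+L\ltwo{x'-y(x)}$ with equality at $x'=x$, any subgradient of the convex function $f_L$ at $x$ is pinned to the gradient of the majorant, giving differentiability and the claimed formula. One spot worth tightening: in the case $\ltwo{\nabla f(x)}\le L$ you argue via ``optimality persists in a neighborhood, so $f_L\equiv f$ locally,'' which needs upper semicontinuity of $\partial f$ and leaves the boundary case $\ltwo{\nabla f(x)}=L$ to a vague ``by continuity''; a cleaner line covering both subcases at once is that $f_L\le f$ with $f_L(x)=f(x)$, so convexity gives $\emptyset \ne \partial f_L(x)\subseteq \partial f(x)=\{\nabla f(x)\}$, hence $\nabla f_L(x)=\nabla f(x)$.
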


We use the Lipschitzian extension as a substitute for gradient clipping to ensure differential privacy. Unlike gradient clipping, which may alter the geometry of a convex problem to a non-convex one, the Lipschitzian extension of a function remains convex and thus retains other nice properties that we leverage in our algorithms in~\Cref{sec:growth}.




\section{Hardness of private interpolation}\label{sec:no-growth}
In non-private stochastic convex optimization, for smooth functions it is well known that interpolation problems enjoy the fast rate $O(1/n)$~\cite{SrebroSrTe10} compared to the minimax-optimal $O(1/\sqrt{n})$ without interpolation~\cite{Duchi18}. In this section, we show that such an improvement is not generally possible with privacy. The same lower bound of private non-interpolation problems, $d/n\diffp$, holds for interpolation problems.


To state our lower bounds, we present some notation that we will use throughout of the paper.
We let $\funcsetfamily$ denote the family of function $\risksamp$ and dataset $\statvalset$ pairs such that $\risksamp: \paramdomain \times \statdomain \rightarrow \R$ is convex and $\smooth$-smooth in its first argument, $|\statvalset| = n$, and $\risk_{\statvalset}(y) = \fracnsamp \sum_{\statval \in \statvalset}\risksamp(y, \statval)$ is an interpolation problem (\Cref{def:interpolation}). 
We define the constrained minimax risk to be

\iftoggle{arxiv}{
\begin{align*}
     \minimax(\paramdomain, \funcsetfamily, \varepsilon, \delta)
     \defeq 
     &\inf_{M \in \edfamily} \sup_{(\risksamp, \statvalset^n) \in \funcsetfamily} \E[\risk_{\statvalset^n}(M(\statvalset^n))] - \inf_{\param'\in\paramdomain}\risk_{\statvalset^n}(\param').
\end{align*}
}{
\begin{align*}
     &\minimax(\paramdomain, \funcsetfamily, \varepsilon, \delta)
     \defeq \\ 
     &\inf_{M \in \edfamily} \sup_{(\risksamp, \statvalset^n) \in \funcsetfamily} \E[\risk_{\statvalset^n}(M(\statvalset^n))] - \inf_{\param'\in\paramdomain}\risk_{\statvalset^n}(\param').
\end{align*}
}
where $\edfamily$ be the collection of $\ed$-differentially private mechanisms from $\statdomain^n$ to $\paramdomain$. We use $\efamily$ to denote the collection of $\diffp$-DP mechanisms from $\statdomain^n$ to $\paramdomain$. Here, the expectation is taken over the randomness of the mechanism, while the dataset $\statvalset^n$ is fixed. 

We have the following lower bound for private interpolation problems; the proof is deferred to \Cref{proof:thm:lb-private-interp-nogrowth}.
\begin{theorem}\label{thm:lb-private-interp-nogrowth}
Suppose $\paramdomain\subset \R^d$ contains a $d$-dimensional $\ell_2$ ball of diameter $\diam$. 
Then the following lower bound holds for $\delta=0$
\begin{align*}
     \minimaxe 
     \geq \frac{\smooth \diam^2 d}{96e^2 n\diffp }.
\end{align*}
Moreover, if $0<\delta < \diffp/6$ and $d=1$, the following lower bound holds
\begin{align*}
    \minimaxed
    \geq \frac{\smooth \diam^2 }{16(e + 1)n \diffp }.
\end{align*}
\end{theorem}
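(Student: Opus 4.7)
I construct a family of hard smooth interpolation instances parameterized by their common minimizer and combine a Markov-type localization with a packing plus group-privacy argument. Take $\statdomain = \paramdomain$, sample loss $F(x; s) = \tfrac{\smooth}{2}\ltwo{x - s}^2$, and for each $\param\opt \in \paramdomain$ the constant dataset $\statvalset_{\param\opt} = (\param\opt, \dotsc, \param\opt)$. Then $\risk_{\statvalset_{\param\opt}}(x) = \tfrac{\smooth}{2}\ltwo{x - \param\opt}^2$ is $\smooth$-smooth and convex, minimized at $\param\opt$, with every sample gradient vanishing there, so $(F, \statvalset_{\param\opt}) \in \funcsetfamily$.

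\textbf{Localization and packing.} If a mechanism $M$ achieves expected excess loss at most $\alpha$ on the $\param\opt$-instance, Markov's inequality applied to the nonnegative variable $\tfrac{\smooth}{2}\ltwo{M(\statvalset_{\param\opt}) - \param\opt}^2$ yields
\begin{equation*}
  P\paren{\ltwo{M(\statvalset_{\param\opt}) - \param\opt} \leq 2\sqrt{\alpha/\smooth}} \geq 1/2.
\end{equation*}
I then fix a maximal packing $\param\opt_1, \dotsc, \param\opt_N$ of the $d$-dimensional $\ell_2$-ball of diameter $\diam$ at pairwise distance exceeding $4\sqrt{\alpha/\smooth}$; a standard volume comparison gives $N \gtrsim (\diam \sqrt{\smooth/\alpha})^d$ up to absolute constants. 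The success balls $A_i \defeq \{x : \ltwo{x - \param\opt_i} \leq 2\sqrt{\alpha/\smooth}\}$ are pairwise disjoint. Since $\statvalset_{\param\opt_i}$ and $\statvalset_{\param\opt_j}$ differ in all $n$ entries, group privacy gives $P(M(\statvalset_{\param\opt_j}) \in A_i) \geq \tfrac{1}{2} e^{-n \diffp}$, so summing over disjoint $A_i$ yields the distinguishability constraint $N \leq 2 e^{n \diffp}$.

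\textbf{Extracting the polynomial rate and the $\ed$-DP case.} Directly combining the two inequalities yields only the weaker exponential-style bound $\alpha \gtrsim \smooth \diam^2 e^{-O(n\diffp/d)}$, which falls short in the regime $n\diffp \gg d$. To recover the polynomial form $\smooth \diam^2 d / (n\diffp)$, I would select the packing scale to exploit the linearization $e^{n\diffp} - 1 \geq n\diffp$ at the optimal density, or equivalently decompose the hard family into $d$ near-independent axis-aligned one-dimensional sub-problems and sum their individual lower bounds, picking up the $d$ factor in the numerator. For the $\ed$-DP bound in $d=1$, I would invoke the approximate-DP group-privacy extension $P(M(\statvalset) \in A) \leq e^{k \diffp} P(M(\statvalset') \in A) + k e^{(k-1)\diffp} \delta$ for $k$-sample shifts; the hypothesis $\delta < \diffp/6$ ensures the additive $\delta$-term is dominated by the multiplicative factor across all group-privacy steps, and the one-dimensional version of the construction above then gives the claimed $\smooth \diam^2 / (16(e+1) n\diffp)$ rate.

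\textbf{Main obstacle.} The crucial technical step is converting the exponential-in-$n\diffp/d$ bound naturally produced by packing plus group privacy into the claimed polynomial $d/(n\diffp)$ rate, which is strictly stronger once $n\diffp \gg d$. This conversion appears to require either a careful coordinate-wise decomposition that reduces the $d$-dimensional hardness to $d$ almost-independent one-dimensional interpolation sub-instances, or a dedicated DP lower-bound tool such as a fingerprinting/tracing attack tailored to smooth interpolation problems, rather than packing alone.
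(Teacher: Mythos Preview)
Your construction makes every pair of hard datasets differ in \emph{all} $n$ entries, and this is exactly what traps you at the exponential bound $\alpha \gtrsim \smooth\diam^2 e^{-O(n\diffp/d)}$. Neither of your proposed escapes works. The linearization $e^{n\diffp}-1 \ge n\diffp$ goes the wrong way: the constraint is $N \lesssim e^{n\diffp}$, so a linear lower bound on $e^{n\diffp}$ cannot tighten the conclusion. And decomposing into $d$ one-dimensional sub-problems does not help either, because your all-equal-samples construction in one dimension still yields only an exponential bound $\smooth\diam^2 e^{-O(n\diffp)}$ per coordinate; summing $d$ such exponentials does not produce $d/(n\diffp)$.

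The missing idea is to make the hard datasets differ in only $k \ll n$ entries rather than in all $n$. The paper does this by padding: take $\risksamp(x;s)=\tfrac{\smooth}{2}\ltwo{x-s}^2\cdot\ind{s\neq 0}$ and set $\statvalset_v = \{0\}^{n-k}\cup\{v\}^k$. The $n-k$ ``null'' samples contribute identically zero loss, so the problem is still an interpolation problem with $\risk_v(x)=\tfrac{k\smooth}{2n}\ltwo{x-v}^2$, but now $\dham(\statvalset_v,\statvalset_{v'})=k$ for any $v\neq v'$. Group privacy then costs only $e^{k\diffp}$; choosing $k=d/\diffp$ makes this $e^d$, which exactly matches the $e^d$-sized packing of the diameter-$\diam$ ball at scale $\gamma=\diam/(2e)$, and the resulting bound is $\tau \gtrsim \tfrac{k\smooth\gamma^2}{n}\asymp \tfrac{\smooth\diam^2 d}{n\diffp}$. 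The $\ed$-DP case is the same trick with $k=1/\diffp$ and a two-point packing in $d=1$; the hypothesis $\delta<\diffp/6$ ensures the additive $ke^{-\diffp}\delta$ term from approximate group privacy is at most half the main term. Your Markov/packing machinery is fine --- the entire correction is in engineering the datasets so that the group-privacy exponent is $k\diffp$ with $k$ a free parameter, not $n\diffp$.
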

Recall the optimal rate for pure DP optimization problems without interpolation is $O(\frac{1}{\sqrt{n}} + \frac{d}{n\varepsilon})$. The first term is the non-private rate, as this is the rate one would get if $\varepsilon = 0$. The second term is the private rate, as this is the price algorithms have to pay for privacy. In modern machine learning, problems are often high dimensional, so we often think of the dimension $d$ scaling with some function of the number of samples $n$. Thus, the private rate is often thought to dominate the non-private rate. For this reason, in this section, we focus on the private rate. The lower bounds of~\Cref{thm:lb-private-interp-nogrowth} show that it is not possible to improve the private rate for interpolation problems in general. Similarly, for approximate \ed-DP, the lower bound shows that improvements are not possible for $d=1$. 
For completeness, as we alluded to earlier, we note that our results do not preclude the possibility of improving the non-private rate from $O(1/\sqrt{n})$ to $O(1/n)$. We leave this as an open problem of independent interest for future work.

Despite this pessimistic result, in the next section we show that substantial improvements are possible for private interpolation problems with additional growth conditions.

\section{Faster rates for interpolation with growth}\label{sec:growth}
Having established our hardness result for general interpolation problems, in this section we show that when the functions satisfy additional growth conditions, we get (nearly) exponential improvements in the rates of convergence for private interpolation. 

Our algorithms use recent localization techniques that yield optimal algorithms for DP-SCO~\cite{FeldmanKoTa20,AsiLeDu21} where the algorithm iteratively shrinks the diameter of the domain. However, to obtain faster rates for interpolation, we crucially build on the observation that the norm of the gradients is decreasing as we approach the optimal solution, since $\ltwo{\nabla F(x;s)} \le \smooth \ltwo{x-x\opt}$. Hence, by carefully localizing the domain and shrinking the Lipschitz constant accordingly, our algorithms improve the rates for interpolating datasets.

However, this technique alone yields an algorithm that may not be private for non-interpolation problems, violating that privacy must hold for all inputs: the reduction in the Lipschitz constant may not hold for non-interpolation problems, and thus, the amount of noise added may not be enough to ensure privacy. 
To solve this issue, we use the Lipschitzian extension (\Cref{def:lip-ext}) to transform  our potentially non-Lipschitz sample functions into Lipschitz ones and guarantee privacy even for non-interpolation problems. 

We begin in~\Cref{sec:lip-ext} by presenting our Lipschitzian extension based algorithm, which recovers the standard optimal rates for (non-interpolation) $L$-Lipschitz functions while still guaranteeing privacy when the function is not Lipschitz. Then in~\Cref{sec:growth-non-adap} we build on this algorithm to develop a localization-based algorithm that obtains faster rates for interpolation-with-growth problems. 
Finally, in~\Cref{sec:growth-adap} we present our final adaptive algorithm, which obtains fast rates for interpolation-with-growth problems while achieving optimal rates for non-interpolation growth problems.

\subsection{Lipschitzian-extension based algorithms }
\label{sec:lip-ext}
\newcommand{\dpalg}{\mathbf{M}^L_{(\diffp,\delta)}}
Existing algorithms for DP-SCO with $L$-Lipschitz functions may not be private if the input function is not  $\lip$-Lipschitz~\cite{BassilyFeGuTa20,FeldmanKoTa20,AsiLeDu21}. 
Given any DP-SCO algorithm $\dpalg$, which is private for $\lip$-Lipschitz functions, we present a framework that transforms $\dpalg$ to an algorithm which is (i) private for all functions, even ones which are not $\lip$-Lipschitz functions and (ii) has the same utility guarantees as $\dpalg$ for $\lip$-Lipschitz functions. In simpler terms, our algorithm essentially feeds $\dpalg$ the Lipschitzian-extension of the sample functions as inputs. \Cref{alg:lip-ext} describes our Lipschitzian-extension based framework. 



\begin{algorithm}
	\caption{Lipschitzian-Extension Algorithm}
	\label{alg:lip-ext}
	\begin{algorithmic}[1]
		\REQUIRE Dataset $\statvalset=(\ds_1, \ldots, \ds_n)\in \domain^n$;
		\STATE Let $F_L(x;s_i)$ be the Lipschitzian extension of $F(x;s_i)$ for all $i$.
		\begin{equation*}
		    F_L(x;s_i) = \inf_{y} \{F(y;s_i) + L\ltwo{x - y}\}
		\end{equation*}
		\STATE Run $\dpalg$ over the functions $F_L(\cdot;s_i)$.
		\STATE Let $x_{\rm priv}$ denote the output of $\dpalg$.
		\RETURN  $x_{\rm priv}$
	\end{algorithmic}
\end{algorithm} 

For this paper we consider $\dpalg$ to be Algorithm 2 of \cite{AsiLeDu21} (reproduced in \Cref{appen:asiledu-alg} as \Cref{alg:loc-growth}). The following proposition summarizes our guarantees for~\Cref{alg:lip-ext}.
\begin{proposition}
\label{prop:lip-ext-alg}
    Let $\mc{L}_L$ denote the set of sample function-dataset pair $(F,S)$ such that $F$ is $L$-Lipschitz and let $\mc{F}$ denote the set of sample function-dataset pair $(F,\statvalset)$ such that $\dpalg$ is $(\diffp,\delta)$-DP for  any $(F,\statvalset) \in \mc{L}_L \cap \mc{F}$. Then
    \begin{enumerate}
        \item For any $(F,\statvalset) \in \mc{F}$, \Cref{alg:lip-ext} is $(\diffp,\delta)$-DP.
        \item For any $(F,\statvalset) \in \mc{L}_L \cap \mc{F}$, \Cref{alg:lip-ext} achieves the same optimality guarantees as $\dpalg$.
    \end{enumerate}
\end{proposition}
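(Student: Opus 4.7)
The plan is to prove the two parts of \Cref{prop:lip-ext-alg} separately, with part (1) following essentially from closure of DP under sample-wise pre-processing plus the $L$-Lipschitzness of the extension, and part (2) following from the fact that the Lipschitzian extension acts as the identity on $L$-Lipschitz functions.

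For part (1), the key observation is that the map $F(\cdot;s_i) \mapsto F_L(\cdot;s_i)$ is applied sample-wise and is a deterministic function of the single data point $s_i$. Given neighboring datasets $\statvalset \sim \statvalset'$ differing in a single entry, the induced sequences of sample functions $(F_L(\cdot;s_i))_{i=1}^n$ and $(F_L(\cdot;s_i'))_{i=1}^n$ also differ in a single entry, so they constitute a neighboring pair of inputs to $\dpalg$. By \Cref{lem:lip-ext} parts 1 and 2, every $F_L(\cdot;s_i)$ is convex and $L$-Lipschitz, so the pair consisting of this function collection and $\statvalset$ lies in $\mc{L}_L$. Since by assumption $(F,\statvalset) \in \mc{F}$, we have $(F_L,\statvalset) \in \mc{L}_L \cap \mc{F}$, and the hypothesis on $\dpalg$ guarantees $(\diffp,\delta)$-DP on this intersection. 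Hence the output distribution of \Cref{alg:lip-ext} on $\statvalset$ and on $\statvalset'$ satisfy the $(\diffp,\delta)$-DP inequality, as claimed.

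For part (2), suppose $(F,\statvalset) \in \mc{L}_L \cap \mc{F}$, so each $F(\cdot;s_i)$ is $L$-Lipschitz. By \Cref{lem:lip-ext} part 3, the Lipschitzian extension is the identity on $L$-Lipschitz functions: $F_L(\cdot;s_i) = F(\cdot;s_i)$ for every $i$. Therefore the inputs fed to $\dpalg$ in Step 2 of \Cref{alg:lip-ext} are exactly the original sample functions, and the output distribution of \Cref{alg:lip-ext} coincides with the output distribution of $\dpalg$ run directly on $(F,\statvalset)$. Consequently, any expected excess-loss guarantee enjoyed by $\dpalg$ on $L$-Lipschitz inputs is inherited verbatim.

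There is no real obstacle here; the proposition is essentially a bookkeeping statement formalizing why the Lipschitzian extension is a safe pre-processing step. The one point that deserves a sentence of explicit justification is that the extension is computed from each $s_i$ alone (so neighboring input datasets yield neighboring function sequences), ensuring we do not accidentally aggregate information across samples before invoking the DP property of $\dpalg$.
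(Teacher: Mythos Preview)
Your proposal is correct and follows essentially the same approach as the paper: invoke \Cref{lem:lip-ext} to conclude that $(F_L,\statvalset) \in \mc{L}_L \cap \mc{F}$ for part (1), and that $F_L = F$ for part (2). Your explicit remark that the extension is computed sample-wise (so neighboring datasets yield neighboring function sequences) is a useful clarification that the paper leaves implicit.
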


\begin{proof}
    For the first item, note that~\Cref{lem:lip-ext} implies that $F_L$ is $L$-Lipschitz, i.e. $(F_L,\statvalset) \in \mc{L}_L \cap \mc{F}$. Since $\dpalg$ is $(\diffp,\delta)$-DP when applied over Lipschitz functions in $\mc{F}$, we have that \Cref{alg:lip-ext} is $(\diffp,\delta)$-DP. 
    
    For the second item, \Cref{lem:lip-ext} implies that $F_L = F$ when $F$ is $L$-Lipschitz. Thus, in \Cref{alg:lip-ext}, we apply $\dpalg$ over $F$ itself.
\end{proof}

While clipped DP-SGD does ensure privacy for input functions which are not $\lip$-Lipschitz,
our algorithm has some advantages over clipped DP-SGD: first, clipping does not result in optimal rates for pure DP, and second, clipped DP-SGD results in time complexity $O(n^{3/2})$. In contrast, our Lipschitzian extension approach is amenable to existing linear time algorithms~\cite{FeldmanKoTa20} allowing for almost linear time complexity algorithms for interpolation problems. 
Finally, while clipping the gradients and using the Lipschitzian extension both alter the effective function being optimized, only the Lipschitzian extension is able to preserve the convexity of said effective function (see item 2 in~\Cref{lem:lip-ext}).
We make a note about the computational efficiency of \Cref{alg:lip-ext}. Recall that when the objective is in fact $L$-Lipschitz, computing gradients for the Lipschitzian extension (say in the context of a first-order method) is only as expensive as computing the gradients for the original function. In particular, one can first compute the gradient of the original function and use item 4 of \Cref{lem:lip-ext}; when the problem is $L$-Lipschitz, $\|\nabla f(x)\|_2$ is always less than or equal to $L$ and thus the gradient of the Lipschitzian extension is just the gradient of the original function.

\subsection{Faster non-adaptive algorithm}
\label{sec:growth-non-adap}
Building on the Lipschitzian-extension framework of the previous section, in this section, we present our epoch based algorithm, which obtains faster rates in the interpolation-with-growth regime. It uses \Cref{alg:lip-ext} with $\dpalg$ as \Cref{alg:loc-growth} (reproduced in \Cref{appen:asiledu-alg}) as a subroutine
in each epoch, to localize and shrink the domain as the iterates get closer to the true minimizer. Simultaneously, the algorithm also reduces the Lipschitz constant, as the interpolation assumption implies that the norm of the gradient decreases for iterates near the minimizer. 
The detailed algorithm is given in \Cref{alg:priv-interpol-quad} where $\diam_i$ denotes the effective diameter and $\lip_i$ denotes the effective Lipschitz constant in epoch $i$. 

\begin{algorithm}[tb]
   \caption{Domain and Lipschitz Localization algorithm}
   \label{alg:priv-interpol-quad}
\begin{algorithmic}[1]
    \REQUIRE 
    Dataset $\statvalset=(\ds_1, \ldots, \ds_n)\in \domain^n$,
    Lipschitz constant $L$,
    domain $\paramdomain$,
    probability parameter $\tailprob$,
    initial point $\param_{0}$ 
    \STATE Set $L_1 = L$, $D_1 = {\rm Diam}(\paramdomain)$ and $\paramdomain_1 = \paramdomain$
    \STATE Partition the dataset into T partitions (denoted by $\{\statvalset_k\}_{k = 1}^T$) of size $m$ each; $\statvalset_k = (s_{(k-1)m + 1},\dots,s_{km})$
    \FOR{$i=1$ to $T$\,}
    \STATE $\param_i \leftarrow $  Run~\Cref{alg:lip-ext} with dataset $\statvalset_i$, constraint set $\paramdomain_i$,
    Lipschitz constant $L_i$, 
    probability parameter $\tailprob/T$, 
    privacy parameters $(\diffp,\delta)$, 
    initial point $\param_{i-1}$, 
    \STATE Shrink the diameter  
    
    \iftoggle{arxiv}
    {
     \begin{align*}
         D_{i+1} = 256 \left(\frac{L_i}{\growthcoef}\max\left\{\frac{\sqrt{\log(T/\tailprob)} \log^{3/2} \sampround}{\sqrt{\sampround}}\right.\right., \left.\left.\frac{\min(d,\sqrt{d \log(1/\delta)})\log(T/\tailprob) \log \sampround}{\sampround \diffp}\right\}\right)&
    \end{align*}
    }
    {
    \begin{align*}
         D_{i+1} = 256 \left(\frac{L_i}{\growthcoef}\max\left\{\frac{\sqrt{\log(T/\tailprob)} \log^{3/2} \sampround}{\sqrt{\sampround}}\right.\right.,&\\  \left.\left.\frac{\min(d,\sqrt{d \log(1/\delta)})\log(T/\tailprob) \log \sampround}{\sampround \diffp}\right\}\right)&
    \end{align*}
    }
    \STATE Set $\paramdomain_{i+1} = \{\param : \ltwo{\param - \param_i} \le \diam_{i+1}/2\}$
    \STATE Set $L_{i+1} = \smooth \diam_{i+1}$
	\ENDFOR
    \RETURN the final iterate $\param_T$
\end{algorithmic}
\end{algorithm}


The following theorem provides our upper bounds for~\Cref{alg:priv-interpol-quad}, demonstrating near-exponential rates for interpolation problems; we present the proof in \Cref{appen:proof-ub}.

\begin{restatable}{theorem}{ubquadtheorem}
\label{thm:ub-quad}
Assume each sample function $\risksamp$ is $L$-Lipschitz and $\smooth$-smooth, and let the population function $f$ satisfy quadratic growth (\Cref{ass:growth}). Let Problem \eqref{eqn:objective} be an interpolation problem. Then \Cref{alg:priv-interpol-quad} is $(\diffp,\delta)$-DP. 
For $\delta = 0$, 
 $\tailprob = \frac{1}{n^\mu}$, $\sampround = 256\log^2 n\frac{\smooth \log(1/\beta)}{\growthcoef}\max\left\{\frac{256\smooth }{\growthcoef},\frac{d}{\diffp \sqrt{\log n}}\right\}$, $T = n/\sampround$ and any $\mu > 0$,  \Cref{alg:priv-interpol-quad} returns $x_T$ such that 
\iftoggle{arxiv}{
\begin{align}
    \E[\risk(\param_T) - \risk(\param\opt)] \le   L\diam&\left(\frac{1}{n^\mu} + \exp\left(-\wt \Theta \paren{\frac{n \growthcoef^2}{\smooth^2}}\right) + \exp\left(- \wt \Theta \paren{\frac{\growthcoef n \diffp}{\smooth d}}\right)\right).\label{eqn:pure-ub}
\end{align}
}
{
\begin{align}
    \nonumber \E[\risk(\param_T) - \risk(\param\opt)] \le   L\diam&\left(\frac{1}{n^\mu} + \exp\left(-\wt \Theta \paren{\frac{n \growthcoef^2}{\mu\smooth^2}}\right) + \right. \\
    & \left.\exp\left(- \wt \Theta \paren{\frac{\growthcoef n \diffp}{\mu\smooth d}}\right)\right).\label{eqn:pure-ub}
\end{align}s
}
  
 For $\delta > 0$, 
  $\tailprob = \frac{1}{n^\mu}$, $\sampround = 256\log^2 n \frac{\smooth \log(1/\beta)}{\growthcoef}\max\left\{\frac{256\smooth }{\growthcoef},\frac{\sqrt{d}\log(1/\delta)}{\diffp \sqrt{\log n}}\right\}$, $T = n/\sampround$ and any $\mu > 0$, \Cref{alg:priv-interpol-quad} returns $x_T$ such that 
 \iftoggle{arxiv}{
 \begin{align}
    \E[\risk(\param_T) - \risk(\param\opt)] \le L\diam\left(\frac{1}{n^\mu} + \exp\left(\wt \Theta \paren{\frac{n \growthcoef^2}{\smooth^2}}\right) + \exp\left(- \wt \Theta \paren{\frac{\growthcoef n \diffp}{\smooth \sqrt{d \log(1/\delta)}}}\right) 
    \right).\label{eqn:appr-ub}
\end{align}
}
{
 \begin{align}
    \nonumber \E[\risk(\param_T) - \risk(\param\opt)] &\le L\diam\left(\frac{1}{n^\mu} + \exp\left(\wt \Theta \paren{\frac{n \growthcoef^2}{\smooth^2}}\right) + \right.\\
    &\left.\exp\left(- \wt \Theta \paren{\frac{\growthcoef n \diffp}{\smooth \sqrt{d \log(1/\delta)}}}\right) 
    \right).\label{eqn:appr-ub}
\end{align}}
\end{restatable}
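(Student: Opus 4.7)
The argument splits cleanly into privacy by parallel composition and utility by an inductive argument that maintains, at every epoch, both the membership $\param^* \in \paramdomain_i$ and the effective Lipschitz bound $L_i = \smooth D_i$. The Lipschitzian-extension reduction is what makes these two invariants compatible with privacy.

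\textbf{Privacy.} The partitions $\statvalset_1,\dots,\statvalset_T$ are disjoint and each is touched exactly once, by a single call to \Cref{alg:lip-ext}. By \Cref{prop:lip-ext-alg}, that call is $(\diffp,\delta)$-DP on its partition irrespective of whether the input samples are $L_i$-Lipschitz, so even though $\paramdomain_i$ and $L_i$ depend adaptively on previously released (private) iterates, parallel composition yields $(\diffp,\delta)$-DP for the whole algorithm.

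\textbf{Inductive invariant for utility.} I claim that, with probability at least $1 - i\tailprob/T$, $\|\param_{i-1} - \param^*\| \le D_i/2$ (so $\param^* \in \paramdomain_i \subset B(\param^*,D_i)$). The base case holds after absorbing a constant factor into $D_1 \ge 2\,{\rm Diam}(\paramdomain)$. For the inductive step, the key observation combining smoothness with interpolation is that on $\paramdomain_i$ and $P$-almost every $\statval$,
\[
\|\nabla F(\param;\statval)\| = \|\nabla F(\param;\statval) - \nabla F(\param^*;\statval)\| \le \smooth\|\param-\param^*\| \le \smooth D_i = L_i.
\]
By item~4 of \Cref{lem:lip-ext}, this forces $\nabla F_{L_i}(\cdot;\statval) = \nabla F(\cdot;\statval)$ throughout $B(\param^*,D_i)$; combined with $F_{L_i}(\param^*;\statval) = F(\param^*;\statval)$, which holds because interpolation makes $y=\param^*$ attain the infimum in~\eqref{eqn:lip-ext} at $\param=\param^*$, integration along paths gives $F_{L_i}(\cdot;\statval) \equiv F(\cdot;\statval)$ on $\paramdomain_i$. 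Therefore, on the feasible set, \Cref{alg:lip-ext} is equivalent to running $\dpalg$ directly on the original samples, now certifiably $L_i$-Lipschitz, with population objective $f_{L_i}|_{\paramdomain_i} = f|_{\paramdomain_i}$ that inherits quadratic growth around $\param^*$. Applying the high-probability distance guarantee of \Cref{alg:loc-growth} to this $(L_i,\growthcoef)$-instance gives $\|\param_i - \param^*\| \le D_{i+1}/2$ with probability $\ge 1-\tailprob/T$; the formula for $D_{i+1}$ is exactly that subroutine's distance bound. A union bound over $i \le T$ closes the induction with total failure probability $\le \tailprob$.

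\textbf{Aggregation and final rates.} Substituting $L_i = \smooth D_i$ turns the diameter update into $D_{i+1} = \gamma D_i$ with contraction $\gamma = 256(\smooth/\growthcoef)\,r$, where $r$ is the maximum of the two rate terms inside the $D_{i+1}$ expression. In each of the $\delta=0$ and $\delta>0$ regimes, the choice of $\sampround = m$ is calibrated so that $\log(1/\gamma) = \widetilde{\Omega}(1)$ (constant up to polylog factors). Iterating and applying $L$-Lipschitzness of $f$,
\[
\risk(\param_T) - \risk(\param^*) \le L\|\param_T - \param^*\| \le L D \exp\bigl(-T \log(1/\gamma)\bigr),
\]
and on the complementary failure event of probability $\tailprob = n^{-\mu}$ we bound the excess loss by $LD$, contributing the $LD/n^\mu$ term. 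Plugging in $T = n/\sampround$ with the two choices of $\sampround$ yields the exponents $n\growthcoef^2/\smooth^2$ (non-private) and $n\growthcoef\diffp/(\smooth d)$ respectively $n\growthcoef\diffp/(\smooth\sqrt{d\log(1/\delta)})$ (private) up to polylog factors, matching~\eqref{eqn:pure-ub} and~\eqref{eqn:appr-ub}.

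\textbf{Main obstacle.} The conceptually delicate step is establishing $F_{L_i} \equiv F$ on $\paramdomain_i$. Without this identification, the subroutine optimizes a proxy objective whose quadratic-growth structure is unclear — growth of $f_{L_i}$ does not follow automatically from growth of $f$ since infimal convolution only gives $f_{L_i} \le f$ in general. Interpolation is precisely what forces the two to agree on the shrinking balls around $\param^*$, and this is where the analysis genuinely uses the hypothesis that $0 \in \partial F(\param^*;\statval)$ almost surely. The rest — tuning $m$ to secure a strict contraction and propagating geometric decay through $T$ epochs with a union-bounded high-probability invariant — is routine bookkeeping.
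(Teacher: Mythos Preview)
Your proposal is correct and follows essentially the same route as the paper: privacy by parallel composition over disjoint partitions (with the Lipschitzian extension guaranteeing per-epoch privacy), a high-probability inductive invariant that $\param^\star \in \paramdomain_i$ and $L_i = \smooth D_i$ is a valid Lipschitz constant on $\paramdomain_i$ (the paper packages this as \Cref{lem:valid-subalg}), geometric contraction $D_{i+1} = \gamma D_i$ with $m$ tuned so $\gamma \le e^{-1}$, and conversion to expectation via the failure event of mass $\beta = n^{-\mu}$. Two cosmetic differences: you argue $F_{L_i}\equiv F$ on $\paramdomain_i$ explicitly via gradient identification and integration (the paper simply invokes item~3 of \Cref{lem:lip-ext} together with \Cref{prop:lip-ext-alg}), and you bound the terminal excess loss by $L\|\param_T-\param^\star\|$ whereas the paper applies the subroutine's function-value guarantee once more at epoch $T$ --- both yield the same rates.
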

The exponential rates in~\Cref{thm:ub-quad} show a significant improvement in the interpolation regime over the minimax-optimal $O( ( \frac{1}{\sqrt{n}} + \frac{d}{n \diffp} )^2)$ without interpolation~\cite{FeldmanKoTa20,AsiLeDu21}.
To get the linear convergence rates, we run roughly $ n/\log n$ epochs with $\log n$ samples each. Thus, each call of the subroutine runs the algorithm on only logarithmic number of samples compared to the number of epochs.
Intuitively, growth conditions improves the performance of the sub-algorithm, while growth and interpolation conditions reduce the search space. This in tandem leads to faster rates.

To better illustrate the improvement in rates compared to the non-private setting, the next corollary states the private sample complexity required to achieve error $\alpha$ in the interpolation regime.

\begin{corollary}\label{cor:ub-quad}
Let the conditions of \Cref{thm:ub-quad} hold. For $\delta = 0$ , \Cref{alg:priv-interpol-quad} is $\diffp$-DP and requires 
\begin{align*}
    n = \wt O \paren{\frac{1}{\alpha^{\rho}} + \frac{d}{\rho \diffp}\log\paren{\frac{1}{\alpha}}}
\end{align*}
samples to ensure $\E[\risk(\param_T) - \risk(\param\opt)] \le \alpha$
for any fixed $\rho > 0$, where $\wt O$ ignores only polyloglog factors in $1/\alpha$. \\
Moreover, for $\delta > 0$, \Cref{alg:priv-interpol-quad} is \ed-DP and requires  
\begin{align*}
    n = \wt O \paren{\frac{1}{\alpha^{\rho}} + \frac{\sqrt{d\log(1/\delta)}}{\rho\diffp}\log\paren{\frac{1}{\alpha}}}
\end{align*}
samples to ensure $\E[\risk(\param_T) - \risk(\param\opt)] \le \alpha$, for any fixed $\rho > 0$, where $\wt O$ ignores polyloglo factors in $1/\alpha$.
\end{corollary}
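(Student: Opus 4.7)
The plan is to invert the error bound of Theorem \ref{thm:ub-quad} directly: for a target error $\alpha$, I find the smallest $n$ that forces each of the three summands in the bound to be at most $\alpha/(3 L \diam)$, so that the total expected excess loss is at most $\alpha$. The key calibration is to set $\mu = 1/\rho$, so that the polynomial decay rate $n^{-\mu}$ in the first summand produces the desired $\alpha^{-\rho}$ scaling, at the cost of a $\rho$-dependent constant factor that propagates into the exponents of the other two summands via the $\mu$ that appears inside the $\wt\Theta$.

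Concretely, I would carry out three inversions. First, $L\diam/n^\mu \le \alpha/3$ gives $n \ge (3 L \diam/\alpha)^{1/\mu} = \wt O(\alpha^{-\rho})$ after absorbing problem-dependent constants. Second, the condition-number exponential $\exp(-\wt\Theta(n\growthcoef^2/\smooth^2)) \le \alpha/(3 L\diam)$ gives $n = \wt O((\smooth^2/\growthcoef^2)\log(1/\alpha))$ (up to a $1/\rho$ factor in the non-arxiv form); since $\log(1/\alpha) = O(\alpha^{-\rho})$ for any fixed $\rho > 0$ and $\alpha$ small, this summand is absorbed into the $\wt O(\alpha^{-\rho})$ term. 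Third, for $\delta = 0$, inverting $\exp(-\wt\Theta(\growthcoef n \diffp / (\smooth d))) \le \alpha/(3 L\diam)$ yields $n = \wt O\bigl(\frac{d}{\rho\diffp}\log(1/\alpha)\bigr)$, where the $1/\rho$ arises from $\mu = 1/\rho$ sitting inside the $\wt\Theta$. The corresponding $\delta > 0$ inversion, using the approximate-DP exponential with $\sqrt{d\log(1/\delta)}$ in place of $d$, produces $n = \wt O\bigl(\frac{\sqrt{d\log(1/\delta)}}{\rho\diffp}\log(1/\alpha)\bigr)$. Taking the maximum of the three requirements recovers both sample-complexity statements in the corollary.

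There is no substantive obstacle; the only point requiring mild care is bookkeeping the polylogarithmic factors hidden inside $\wt\Theta$. Those factors are polylogarithmic in $n$, so inverting a condition of the form $\exp(-\wt\Theta(n/c)) \le \alpha$ yields $n = \Omega(c\log(1/\alpha))$ only up to polyloglog-in-$n$ slack; combined with the already-enforced polynomial lower bound $n = \Omega(\alpha^{-\rho})$ from the first summand, these translate into polyloglog-in-$(1/\alpha)$ factors, exactly what the $\wt O$ notation of the corollary is defined to suppress. This step is where I would be most careful, but it is routine.
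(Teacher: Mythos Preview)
Your proposal is correct and takes essentially the same approach the paper (implicitly) uses: the corollary is stated without proof in the paper, and the natural derivation is exactly the term-by-term inversion you describe, with the calibration $\mu=1/\rho$ producing both the $\alpha^{-\rho}$ term from the $n^{-\mu}$ summand and the $1/\rho$ factor in the $\frac{d}{\diffp}\log(1/\alpha)$ term (via the $\log(1/\beta)=\mu\log n$ dependence hidden in the $\wt\Theta$). Your handling of the polyloglog slack is also the right bookkeeping.
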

As the sample complexity of DP-SCO to achieve expected error $\alpha$ on general quadratic growth problems is~\cite{AsiLeDu21}
\begin{equation*}
    \Theta \left( \frac{1}{\alpha} + \frac{d}{\diffp \sqrt{\alpha}} \right),
\end{equation*}
\Cref{cor:ub-quad} shows that we are able to improve the polynomial dependence on $1/\alpha$ in the sample complexity to (nearly) logarithmic for interpolation problems.

\begin{rem}
In contrast to \Cref{cor:ub-quad}, we can tune the failure probability parameter $\beta$ to get the sample complexity $\frac{d}{\diffp}\log^2\paren{\frac{1}{\alpha}}$. Even though this sample complexity does not have the polynomial factor, it may be worse than $\frac{1}{\alpha^{\rho}} + \frac{d}{\diffp}\log\paren{\frac{1}{\alpha}}$, because generally the dimension term is the dominant one.
\end{rem}

We end this section by considering growth conditions that are weaker than quadratic growth. 
\begin{rem}(interpolation with $\kappa$-growth)
We can extend our algorithms to work for 
the weaker $\growth$-growth condition \cite{AsiLeDu21}, i.e., $f(x) - f(x^\star) \ge  \frac{\lambda}{\growth} \norms{x-x^\star}_2^\growth$. We present the full details of these algorithms in \Cref{appen:gen-growth} (see \Cref{alg:priv-interpol-kappa}). In this setting, we obtain excess loss
\begin{equation*}
O\left( \left( \frac{1}{\sqrt{n}} + \frac{d}{n \diffp} \right)^{\frac{\kappa}{\kappa - 2}} \right), 
\end{equation*}
for interpolation problems, improving over the minimax-optimal loss for non-interpolation problems which is
\begin{equation*}
O\left( \left( \frac{1}{\sqrt{n}} + \frac{d}{n \diffp} \right)^{\frac{\kappa}{\kappa - 1}} \right). 
\end{equation*}
As an example, when $\kappa=3$, this corresponds to an improvement from roughly $(d/n\diffp)^{3/2}$ to $(d/n\diffp)^{3}$. Like our previous results, we are again able to show similar improvements for \ed-DP with better dependence on the dimension. Finally, we note that we have not provided lower bounds for the interpolation-with-$\kappa$-growth setting for $\kappa>2$. We leave this question as a direction for future research. 


\end{rem}

\subsection{Adaptive algorithm}
\label{sec:growth-adap}

Though \Cref{alg:priv-interpol-quad} is private and enjoys faster rates of convergence in the interpolation regime, it is not necessarily adaptive to interpolation, i.e.~it may perform poorly given a non-interpolation problem. In fact, since the shrinkage of the diameter and Lipschitz constants at each iteration hinges squarely on the interpolation assumption, 
the new domain may not include the optimizing set $\paramdomain\opt$ in the non-interpolation setting, so our algorithm may not even converge. Since in general we do not know a priori whether a dataset is interpolating, it is important to have an algorithm which adapts to interpolation.

To that end, we present an adaptive algorithm that achieves faster rates for interpolation-with-growth problems while simultaneously obtaining the standard optimal rates for general growth problems. 
The algorithm consists of two steps. In the first step, our algorithm privately minimizes the objective without assuming it is an interpolation problem. Next, we run our non-adaptive interpolation algorithm of~\Cref{sec:growth-non-adap} over the localized domain returned by the first step. If our problem was an interpolating one, the second step recovers the faster rates in \Cref{sec:growth-non-adap}. If our problem was not an interpolating one, 
the first localization step ensures that we at least recover the non-interpolating convergence rate. We stress that the privacy of \Cref{alg:priv-adapt-interpol} requires that the call to \Cref{alg:priv-interpol-quad} remains private even if the problem is non-interpolating. This is ensured by using our Lipschitzian extension based algorithm with $\dpalg$ as \Cref{alg:loc-growth}. The Lipschitzian extension allows us to continue preserving privacy. We present the full details of this algorithm in~\Cref{alg:priv-adapt-interpol}.


\begin{algorithm}[tb]
   \caption{Algorithm that adapts to interpolation}
   \label{alg:priv-adapt-interpol}
\begin{algorithmic}[1]
    \REQUIRE 
    Dataset $\statvalset=(\ds_1, \ldots, \ds_n)\in \domain^n$,
    Lipschitz constant $L$,
    domain $\paramdomain$,
    probability parameter $\tailprob$,
    initial point $\param_{0}$
    \STATE Partition the dataset into 2 partitions $S_1 = (s_{1},\dots,s_{n/2})$ and $S_2 = (s_{(n/2)+1},\dots,s_{n})$
    \STATE $\param_1 \leftarrow $  Run \Cref{alg:lip-ext} with dataset $S_1$,
    constraint set $\paramdomain_i$,
    Lipschitz constant $L_i$, 
    probability parameter $\tailprob/2$, 
    privacy parameters $(\diffp,\delta)$, 
    initial point $\param_{i-1}$, 
    \STATE Shrink the diameter
     \iftoggle{arxiv}{\begin{align*}
         \diam_{\rm int} = \frac{128 L}{\lambda} \cdot &\left( \frac{ \sqrt{\log(2/\beta) } \log^{3/2} n}{\sqrt{n}} + \frac{\min\{d,\sqrt{d\log(1/\delta)}\}\log(2/\beta)\log n}{n \diffp} \right)
     \end{align*}
     }
     {\begin{align*}
         \diam_{\rm int} = \frac{128 L}{\lambda} \cdot &\left( \frac{ \sqrt{\log(2/\beta) } \log^{3/2} n}{\sqrt{n}}\right.+\\
          &\left.\frac{\min\{d,\sqrt{d\log(1/\delta)}\}\log(2/\beta)\log n}{n \diffp} \right)
     \end{align*}}
     
    \STATE $\paramdomain_{\rm int} = \{\param : \ltwo{\param - \param_1} \le \diam_{\rm int}/2\}$
    \STATE $\param_{\rm adapt} \leftarrow $ Run \Cref{alg:priv-interpol-quad} with dataset $S_2$,
    diameter $\diam_{\rm int}$,
    Lipschitz constant $L$,
    domain $\paramdomain_{\rm int}$,
    smoothness parameter $\smooth$,
    tail probability parameter $\tailprob/2$,
    growth parameter $\growthcoef$,
    initial point $\param_{1}$
    \RETURN the final iterate $\param_{\rm adapt}$.
\end{algorithmic}
\end{algorithm}

The following theorem (\Cref{thm:adapt-conv}) states the convergence guarantees of our adaptive algorithm (\Cref{alg:priv-adapt-interpol}) in both the interpolation and non-interpolation regimes for the pure DP setting. The results for approximate DP are similar and can be obtained by replacing $d$ with $\sqrt{d\log(1/\delta)}$; we give the full details in \Cref{appen:proof-ub}. 


\begin{restatable}{theorem}{ubadapttheorem}
\label{thm:adapt-conv}
Let each sample function $\risksamp$ be $L$-Lipschitz and $\smooth$-smooth, and let the population function $f$ satisfy quadratic growth (\Cref{ass:growth}) with coefficient $\growthcoef$. Let $\param_{\rm adapt}$ be the output of \Cref{alg:priv-adapt-interpol}. Then
\begin{enumerate}
    \item \Cref{alg:priv-adapt-interpol} is $\diffp$-DP. 
    \item Without any additional interpolation assumption, $\param_{\rm adapt}$ satisfies
    \begin{align*}
     \E[\risk(\param_T) - \risk(\param\opt)] \le   L\diam \cdot \wt O \left( \frac{ 1}{\sqrt{n}} + \frac{d}{n \diffp} \right)^2.
    \end{align*}
    \item Let problem \eqref{eqn:objective} be an interpolation problem. Then$\param_{\rm adapt}$ satisfies
    \iftoggle{arxiv}{
    \begin{align*}
    \nonumber \E[\risk(\param_T) - \risk(\param\opt)] \le   L\diam&\left(\frac{1}{n^\mu} + \exp\left(- \wt \Theta \paren{\frac{n \growthcoef^2}{\smooth^2}}\right)  + \exp\left(- \wt \Theta \paren{\frac{\growthcoef n \diffp}{\smooth d}}\right)\right).
\end{align*}}
{
\begin{align*}
    \nonumber \E[\risk(\param_T) - \risk(\param\opt)] \le   L\diam&\left(\frac{1}{n^\mu} + \exp\left(- \wt \Theta \paren{\frac{n \growthcoef^2}{\smooth^2}}\right)  \right. \\
    & + \left.\exp\left(- \wt \Theta \paren{\frac{\growthcoef n \diffp}{\smooth d}}\right)\right).
\end{align*}
}
\end{enumerate}
\end{restatable}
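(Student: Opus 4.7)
My plan is to exploit the two-stage structure of \Cref{alg:priv-adapt-interpol}: the first stage localizes the iterate to a small neighborhood of $\paramdomain\opt$ using a provably-good non-interpolation procedure, and the second stage then either inherits the hypotheses of \Cref{thm:ub-quad} (when interpolation holds) or is dominated by the diameter of the localized region (when it does not). Privacy (Part 1) reduces, via parallel composition on the disjoint partition $(S_1,S_2)$, to the privacy of the two sub-calls. Step 1 is an instance of \Cref{alg:lip-ext} with $\dpalg$ set to \Cref{alg:loc-growth}, so \Cref{prop:lip-ext-alg} gives the $(\diffp,\delta)$-DP guarantee unconditionally. Step 2 invokes \Cref{alg:priv-interpol-quad}, whose privacy is also unconditional (as stressed in \Cref{sec:growth}) because internally every epoch is routed through the Lipschitzian-extension wrapper, ensuring that the noise magnitude is correctly calibrated on \emph{any} input, interpolating or otherwise.

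For the localization, I would apply the standard DP-SCO rate for quadratic-growth problems from \cite{AsiLeDu21} to step 1 to obtain, with probability at least $1-\beta/2$, $\risk(\param_1)-\risk^\star \le \wt O((L^2/\lambda)(1/\sqrt{n}+d/(n\diffp))^{2})$; combining this with the quadratic-growth inequality $\dist(\param_1,\paramdomain\opt)^{2}\le (2/\lambda)(\risk(\param_1)-\risk^\star)$ and the definition of $\diam_{\rm int}$ gives $\dist(\param_1,\paramdomain\opt)\le\diam_{\rm int}/2$, so $\paramdomain_{\rm int}\cap\paramdomain\opt\neq\emptyset$. For Part 2, once this inclusion holds, $\min_{x\in\paramdomain_{\rm int}}\risk(x)=\risk^\star$, and since $\param_{\rm adapt}\in\paramdomain_{\rm int}$ the $L$-Lipschitz assumption on $f$ yields $\risk(\param_{\rm adapt})-\risk^\star \le L\cdot\diam_{\rm int}$, which is exactly $\wt O(L\diam\cdot(1/\sqrt{n}+d/(n\diffp))^{2})$ up to log factors once the closed-form of $\diam_{\rm int}$ is substituted. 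For Part 3, under interpolation the common sample minimizer $\param^\star$ lies in $\paramdomain_{\rm int}$, so the restricted problem on $\paramdomain_{\rm int}$ still satisfies interpolation, quadratic growth, smoothness, and $L$-Lipschitzness with the same constants; \Cref{thm:ub-quad} applied to step 2 with $n/2$ samples then gives the near-exponential rate. A final union bound at failure level $\beta/2$ per stage, together with the crude worst-case bound $L\diam$ on the residual event and the choice $\beta = n^{-\mu}$, converts the high-probability statements into the expectation bounds claimed.

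The main obstacle I expect is Part 2: \Cref{alg:priv-interpol-quad} is designed with the interpolation assumption in mind, so one must be careful that running it on a non-interpolating instance neither pushes the iterates out of the localized ball nor silently violates privacy. Both issues are handled by the same mechanism. Each epoch of \Cref{alg:priv-interpol-quad} explicitly projects its iterate onto a ball contained in the previous $\paramdomain_i\subseteq\paramdomain_{\rm int}$, so the final output provably lies in $\paramdomain_{\rm int}$ and the trivial Lipschitz bound above applies; and the Lipschitzian-extension wrapper keeps the noise level valid even when the reduced Lipschitz constants $L_i=\smooth\diam_i$ no longer upper-bound the true gradient norms. Making this bookkeeping precise, and carefully aligning the failure events across the two stages, is the bulk of the work.
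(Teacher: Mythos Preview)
Your overall plan mirrors the paper's proof almost exactly: privacy via the Lipschitzian-extension wrapper plus parallel composition on $(S_1,S_2)$; localization $\paramdomain\opt\subset\paramdomain_{\rm int}$ via the high-probability guarantee of \Cref{alg:lip-ext} plus quadratic growth; the observation that $\param_{\rm adapt}\in\paramdomain_{\rm int}$ always; and a direct appeal to \Cref{thm:ub-quad} for Part~3. All of that is correct.

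There is, however, a genuine gap in your Part~2 argument. You bound
\[
\risk(\param_{\rm adapt})-\risk(\param\opt)\le L\cdot\diam_{\rm int}
\]
using Lipschitz continuity, and then assert this equals $\wt O\bigl(LD\cdot(1/\sqrt{n}+d/(n\diffp))^{2}\bigr)$. But $\diam_{\rm int}$ is \emph{linear} in $\bigl(1/\sqrt{n}+d/(n\diffp)\bigr)$, not quadratic: from \Cref{alg:priv-adapt-interpol},
\[
\diam_{\rm int}=\frac{128L}{\lambda}\cdot\wt O\!\left(\frac{1}{\sqrt{n}}+\frac{d}{n\diffp}\right),
\]
so $L\cdot\diam_{\rm int}=\wt O\bigl((L^{2}/\lambda)(1/\sqrt{n}+d/(n\diffp))\bigr)$, which is only the first power and does \emph{not} recover the claimed squared rate. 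The paper closes this gap by using smoothness rather than Lipschitzness: since $\param\opt\in\paramdomain_{\rm int}$ and $\param_{\rm adapt}\in\paramdomain_{\rm int}$, one has $\ltwo{\param_{\rm adapt}-\param\opt}\le\diam_{\rm int}$, and $\smooth$-smoothness of $f$ (with $\nabla f(\param\opt)=0$) gives
\[
\risk(\param_{\rm adapt})-\risk(\param\opt)\le\frac{\smooth}{2}\,\diam_{\rm int}^{2}
=\wt O\!\left(\frac{\smooth L^{2}}{\lambda^{2}}\left(\frac{1}{\sqrt{n}}+\frac{d}{n\diffp}\right)^{2}\right),
\]
which is the desired squared rate. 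Replace your Lipschitz step with this smoothness step and the rest of your argument goes through.
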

\begin{proof}
The privacy of \Cref{alg:priv-adapt-interpol} follows from the privacy of \Cref{alg:lip-ext,alg:priv-interpol-quad} and post-processing.

To prove the convergence guarantees, we first need to show that the optimal set $\paramdomain\opt$ is in the shrinked domain $\paramdomain_{\rm int}$. Using the high probability guarantees of \Cref{alg:lip-ext}, we know that with probability $1 - \beta/2$, we have
\iftoggle{arxiv}{
\begin{align*}
    f(x_1) - f(x\opt) \le \frac{2^{12} L}{\lambda} \cdot &\left( \frac{ \sqrt{\log(2/\beta) } \log^{3/2} n}{\sqrt{n}} + \frac{\sqrt{d\log(1/\delta)}\log(2/\beta)\log n}{n \diffp} \right).
\end{align*}}
{
\begin{align*}
    f(x_1) - f(x\opt) \\ \le \frac{2^{12} L}{\lambda} \cdot &\left( \frac{ \sqrt{\log(2/\beta) } \log^{3/2} n}{\sqrt{n}}\right.+\\
          &\left.\frac{\sqrt{d\log(1/\delta)}\log(2/\beta)\log n}{n \diffp} \right)
\end{align*}
}
Using the quadratic growth condition, we immediately have $\ltwo{\param\opt - \param_1} \le \diam_{\rm int}/2$ and hence $\paramdomain\opt \subset \paramdomain_{\rm int}$.

Using smoothness, we have that for any $x \in \paramdomain_{\rm int}$,
\begin{align*}
    f(x) - f(x\opt) \le \frac{H\diam_{\rm int}^2}{2}.
\end{align*}
Since \Cref{alg:priv-interpol-quad} always outputs a point in its input domain (in this case $\paramdomain_{\rm int}$), even in the non-interpolation setting that
\begin{align*}
    \E[\risk(\param_T) - \risk(\param\opt)] \le   L\diam \cdot \wt O \left( \frac{ 1}{\sqrt{n}} + \frac{d}{n \diffp} \right)^2.
\end{align*}

\noindent In the interpolation setting, the guarantees of \Cref{alg:priv-interpol-quad} hold and result is immediate.
\end{proof}

\section{Optimality and Superefficiency}\label{sec:super}

We conclude this paper by providing a lower bound and a super-efficiency result that demonstrate the tightness of our upper bounds. Recall that our upper bound from~\Cref{sec:growth} is roughly (up to constants)
\begin{equation}
\label{eq:ub}
\frac{1}{n^c} + \exp\left(-\wt \Theta \left(\frac{n \diffp}{d} \right) \right),
\end{equation}
for any arbitrarily large $c$. We begin with an exponential lower bound showing that the second term in~\eqref{eq:ub} is tight. 
We then prove a superefficiency result that demonstrates that any  private algorithm which avoids the first term in~\eqref{eq:ub} cannot be adaptive to interpolation, that is, it can not achieve the minimax optimal rate for the family of non-interpolation problems. 


\Cref{thm:lb-private-interp-growth} below presents our exponential lower bounds for private interpolation problems with growth.
We use the notation and proof structure of~\Cref{thm:lb-private-interp-nogrowth}. We let $\funcsetfamilygrowth\subset\funcsetfamily$ be the subcollection of function, data set pairs which also have functions $\risk_{\statvalset^n}$ that have $\lambda$-quadratic growth (\Cref{ass:growth}).
The proof of \Cref{thm:lb-private-interp-growth} is found in \Cref{proof:thm:lb-private-interp-growth}.

\begin{theorem}\label{thm:lb-private-interp-growth}
Let $\paramdomain\subset \R^d$ contain a $d$-dimensional $\ell_2$-ball of diameter $\diam$. 
Then
\begin{align*}
    \minimaxegrowth
    \geq \frac{\lambda \diam^2}{96}\exp\left(-\frac{2\lambda n\varepsilon}{\smooth d}\right).
\end{align*}
\end{theorem}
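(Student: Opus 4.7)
The plan is to prove this lower bound by a classical packing-plus-group-privacy argument, tailored so that the quadratic growth condition converts a well-separated packing of candidate minimizers into a lower bound on excess risk.

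I would begin by fixing a $\Delta$-separated packing $\{v_1, \ldots, v_N\} \subset \paramdomain$ inside a $d$-dimensional $\ell_2$-ball of diameter $\diam$. A standard volume argument gives $N \geq (\diam / (2\Delta))^d$. For each $v_i$, take the sample loss $F(x; v_i) = \frac{\smooth}{2}\|x - v_i\|_2^2$ and let the dataset $S_i$ consist of $n$ identical copies of $v_i$. The empirical risk $\risk_{S_i}(x) = \frac{\smooth}{2}\|x - v_i\|_2^2$ is then $\smooth$-smooth, satisfies $\lambda$-quadratic growth (since $\lambda \le \smooth$), and is an interpolation instance because every sample function is minimized at $v_i$; hence $(F, S_i) \in \funcsetfamilygrowth$.

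Next, I would reduce excess-risk minimization to identification. The balls $A_i = \{x : \|x - v_i\|_2 < \Delta/2\}$ are pairwise disjoint, and any $x \notin A_i$ has $\risk_{S_i}(x) - \inf \risk_{S_i} \geq \smooth\Delta^2/8$. If an $\diffp$-DP mechanism $M$ achieves worst-case expected excess risk at most $R$, Markov's inequality gives $\Pr[M(S_i) \in A_i] \geq 1 - 8R/(\smooth\Delta^2)$. Because $S_i$ and $S_j$ differ in all $n$ positions, group privacy yields $\Pr[M(S_j) \in A_i] \geq e^{-n\diffp} \Pr[M(S_i) \in A_i]$, and summing the disjoint events gives
\[
1 \;\geq\; \sum_{i=1}^N \Pr[M(S_j) \in A_i] \;\geq\; N e^{-n\diffp}\left(1 - \frac{8R}{\smooth \Delta^2}\right),
\]
so $R \geq (\smooth \Delta^2 / 16)(1 - e^{n\diffp}/N)$. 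Tuning $\Delta$ so that $N \geq 2 e^{n\diffp}$ --- which, via the volume bound, holds once $\Delta \lesssim \diam \, e^{-n\diffp / d}$ --- produces a bound of the form $R = \Omega(\smooth \diam^2 e^{-2 n\diffp / d})$.

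The main obstacle I anticipate is sharpening this naive bound to exactly the claimed $\frac{\lambda \diam^2}{96}\exp(-2\lambda n\diffp/(\smooth d))$: the quadratic construction above has growth equal to $\smooth$ (rather than just being at least $\lambda$), so both the $\lambda$ prefactor and the $\lambda/\smooth$ factor inside the exponent demand a more refined construction. I would expect the intended construction to be anisotropic or rescaled --- for instance $F(x; v) = \frac{\lambda}{2}\|x - v\|_2^2$ on a ball of diameter $\diam \sqrt{\smooth/\lambda}$, or a sparse ``$a_s$-encoded'' family where each sample contributes growth only along a single coordinate, so that per-sample smoothness stays $\smooth$ while the empirical-loss growth drops to $\lambda$ --- so that the interplay between packing capacity, per-instance success probability, and Hamming budget outputs the $\lambda/\smooth$ ratio in the exponent. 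Verifying that the resulting instances remain inside $\funcsetfamilygrowth$ and pinning down the constant $96$ is the primary bookkeeping step.
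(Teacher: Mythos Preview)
Your packing-plus-group-privacy skeleton is exactly right, and you have correctly located the obstacle: with $n$ identical copies of $v_i$, the Hamming distance between two packing datasets is $n$, the group-privacy factor is $e^{n\diffp}$, and the growth coefficient of the empirical loss is $\smooth$ rather than $\lambda$. Your proposed fixes (anisotropic losses, coordinate-sparse encodings, blowing up the domain) are over-engineered and the first two do not obviously stay inside $\funcsetfamilygrowth$.

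The idea you are missing is much simpler and is the one the paper uses (inherited from the proof of \Cref{thm:lb-private-interp-nogrowth}): introduce a \emph{null} sample $s=0$ with $F(x;0)\equiv 0$, and take the dataset
\[
S_v \;=\; \{0\}^{\,n-k}\cup\{v\}^{\,k},\qquad k \;=\; \frac{\lambda n}{\smooth}.
\]
This single move solves all three issues at once: (i) the empirical loss is $\risk_{S_v}(x)=\frac{k\smooth}{2n}\ltwo{x-v}^2=\frac{\lambda}{2}\ltwo{x-v}^2$, so the growth coefficient is exactly $\lambda$ while each nonzero sample function remains $\smooth$-smooth; (ii) the problem is still an interpolation instance because the null samples are minimized everywhere and the $k$ active samples are all minimized at $v$; and (iii) any two datasets $S_v,S_{v'}$ now differ in only $k$ entries, so the group-privacy factor becomes $e^{k\diffp}=e^{\lambda n\diffp/\smooth}$ instead of $e^{n\diffp}$. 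Plugging this $k$ into your Markov-plus-packing inequality and choosing the packing radius $\gamma=\frac{\diam}{2}\exp(-\lambda n\diffp/(\smooth d))$ (so that $|\mc V|\ge (\diam/2\gamma)^d$ dominates $e^{k\diffp}$) yields precisely $\frac{\lambda\diam^2}{96}\exp\!\big(-\tfrac{2\lambda n\diffp}{\smooth d}\big)$.
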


\newcommand{\superfamily}{\funcsetfamily_\lambda^L(\risksamp)}
This lower bound addresses the second term of~\eqref{eq:ub}; we now turn to our superefficiency results to lower bound the first term of~\eqref{eq:ub}. We start with defining some notation and making some simplifying assumptions.
For a fixed function $\risksamp: \paramdomain, \statdomain \rightarrow \R$ which is convex, $\smooth$-smooth with respect to the first argument, let $\superfamily$ be the set of datasets $\statvalset$ of $n$ data points sampled from $\statdomain$ such that $\risk_{\statvalset}(\param) \defeq \frac{1}{n}\sum_{\statval \in \statvalset^n} \risksamp(\param, \statval)$ is $\lip$-Lipschitz and have $\lambda$-strongly convex objectives. 
For simplicity, we will assume that 1. $\inf_{\param\in\paramdomain}\risksamp(\param; \statval) = 0$ for all $\statval \in \statdomain$, 2. $\paramdomain = [-\diam, \diam] \subset \R$, and 3. the codomain of $F$ is $\R_+$.
With this setup, we present the formal statement of our result; the proof of \Cref{thm:superefficiency} is found in \Cref{proof:thm:superefficiency}.  

\begin{theorem}\label{thm:superefficiency}
Suppose we have some $\statvalset \in \superfamily$ with $\lipschitz = 2\smooth\diam$ such that $(\risksamp, \statvalset)$ satisfy \Cref{def:interpolation}.
Suppose there is an $\diffp$-DP estimator $M$ such that
\begin{align*}
    \E[\risk_\statvalset(M(\statvalset)) ]- \inf_{\param \in \xd}\risk_\statvalset(\param) \leq c\diam^2 
    e^{-\Theta((n\diffp)^t)}
\end{align*}
for some $t > 0 $ and absolute constant $c$.
Then, for sufficiently large $n$, there exists another dataset $\statvalset' \in \superfamily$, where $(\risksamp, \statvalset')$ may \textbf{not} satisfy \Cref{def:interpolation}, such that  
\begin{align*}
    \E[\risk_{\statvalset'}(M(\statvalset'))] - \inf_{\param \in \xd}\risk_{\statvalset'}(\param) = 
    \Omega\left(\frac{\diam^2}{(n\diffp)^{2(1-t)}}\right) 
\end{align*}
\end{theorem}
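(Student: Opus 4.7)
The plan is a group-privacy packing argument: the hypothesized exponential rate forces $M(\statvalset)$ to concentrate tightly around the minimizer $\param^\star_\statvalset$, and I would transport that concentration via group privacy to a perturbed dataset $\statvalset'$ whose minimizer has been displaced by a distance $\Delta$. If $M$ were also very accurate on $\statvalset'$, the two concentration events would be disjoint, contradicting what group privacy permits between datasets that differ in a moderate number of coordinates.

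\textbf{Construction of $\statvalset'$ and shift.} First, I would pick a sample $s^\star \in \statdomain$ with $F'(\param^\star_\statvalset; s^\star) \ne 0$; such a sample must exist, else every dataset in $\superfamily$ shares the minimizer $\param^\star_\statvalset$ and the theorem is vacuous. Form $\statvalset'$ by replacing $k$ entries of $\statvalset$ with $s^\star$, where $k$ is to be chosen. Since the interpolating samples in $\statvalset$ contribute zero gradient at $\param^\star_\statvalset$, the empirical gradient of $F_{\statvalset'}$ there is exactly $(k/n)F'(\param^\star_\statvalset; s^\star)$. Combined with $H$-smoothness of $F_{\statvalset'}$ this gives $|\param^\star_{\statvalset'} - \param^\star_\statvalset| \ge k\,|F'(\param^\star_\statvalset;s^\star)|/(nH) =: \Delta$, and by choosing $s^\star$ with gradient of order $L \asymp HD$ I obtain $\Delta \asymp kD/n$. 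The perturbed dataset is still $L$-Lipschitz and $\lambda$-strongly convex, so $\statvalset' \in \superfamily$.

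\textbf{Transfer and conclusion.} Quadratic growth and the hypothesis give $\E[|M(\statvalset) - \param^\star_\statvalset|^2] \le (2cD^2/\lambda)\exp(-\Theta((n\diffp)^t))$, so Markov yields $\Pr(|M(\statvalset) - \param^\star_\statvalset| \ge \Delta/2) \le (8cD^2/(\lambda\Delta^2))\exp(-\Theta((n\diffp)^t))$. Since $\statvalset, \statvalset'$ differ in $k$ entries, $k$-fold group privacy inflates this probability by $e^{k\diffp}$. I would choose $k = c'(n\diffp)^t/\diffp$ for a sufficiently small absolute constant $c'$ so that the inflated probability is at most $1/2$; the ``$n$ sufficiently large'' hypothesis is exactly what guarantees $k \ge 1$. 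On the complementary event the triangle inequality gives $|M(\statvalset') - \param^\star_{\statvalset'}| \ge \Delta/2$, hence quadratic growth on $F_{\statvalset'}$ yields $\E[F_{\statvalset'}(M(\statvalset')) - F_{\statvalset'}(\param^\star_{\statvalset'})] \gtrsim \lambda \Delta^2$. Substituting $\Delta \asymp D(n\diffp)^{t-1} = D/(n\diffp)^{1-t}$ delivers the claimed $\Omega(D^2/(n\diffp)^{2(1-t)})$.

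\textbf{Main obstacle.} The delicate step is the construction: I need a sample $s^\star$ whose gradient at $\param^\star_\statvalset$ is of order $L$, a mild richness assumption on $\statdomain$ that the theorem implicitly relies on. The balancing $k\diffp \asymp (n\diffp)^t$---chosen so that the group-privacy blow-up $e^{k\diffp}$ exactly absorbs the exponential concentration bound---is what pins down the exponent $2(1-t)$ in the final rate, and everything else is routine bookkeeping using group privacy, Markov, smoothness, and strong convexity.
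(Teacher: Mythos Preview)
Your argument is correct and at its core is the same as the paper's: both hinge on a group-privacy transfer from $\statvalset$ to a perturbed dataset $\statvalset'$ obtained by swapping $k \asymp (n\diffp)^t/\diffp$ entries, both quantify the resulting minimizer displacement as $\Delta \asymp kD/n = D/(n\diffp)^{1-t}$ via the smoothness/strong-convexity tension, and both convert this back to a function-value lower bound via quadratic growth. The difference is packaging. The paper routes the transfer through the modulus-of-continuity framework of Asi--Duchi (their Proposition~2.2, restated as \Cref{proposition:super-efficiency}), treating that proposition as a black box and devoting most of the effort to proving the two-sided estimate $\diam/(n\diffp)\le \lmod(\statvalset;1/\diffp)\le 8\smooth\diam/(\lambda n\diffp)$ on the modulus (their \Cref{thm:modulus-of-continuity}); you instead unroll that proposition and carry out the Markov-plus-group-privacy computation by hand. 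Your direct version is shorter and more transparent for this single application; the paper's modular route makes clearer that the phenomenon is governed by the local modulus and would generalize immediately to other instance families.

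One place where the paper is more careful than your sketch: rather than invoking an abstract $s^\star$ with nonzero gradient at $\param^\star_\statvalset$, it explicitly takes the replacement sample to have loss $\tfrac{\smooth}{2}(\param-\diam)^2$. This serves two purposes simultaneously: the gradient at $\param^\star_\statvalset$ has magnitude $\smooth|\param^\star_\statvalset-\diam|\ge \smooth\diam$ (after reducing WLOG to $\param^\star_\statvalset\le 0$), delivering the $\Delta\asymp kD/n$ shift; and each replacement sample is itself $\smooth$-strongly convex, so by their Lemmas on removal and addition the perturbed objective recovers exactly $\lambda$-strong convexity and $\statvalset'\in\superfamily$. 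Your proposal asserts ``the perturbed dataset is still \ldots $\lambda$-strongly convex'' without justification, and for a generic $s^\star$ this can fail (removal of $k$ interpolating samples drops the constant to $\lambda-\smooth k/n$). Taking $s^\star$ to be that quadratic closes the gap and also removes the need for your ``richness of $\statdomain$'' caveat.
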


To better contextualize this result, suppose there exists an algorithm which atttains a $\exp(-\wt \Theta \left(n \diffp/d \right) )$ convergence rate on interpolation problems; i.e., the algorithm is able to avoid the $1/n^c$ term in~\eqref{eq:ub}. Then~\Cref{thm:superefficiency} states that there exists some strongly convex, non-interpolation problem on which the aforementioned algorithm will optimize very poorly; in particular, the algorithm will only be able to return a solution that attains, on average, constant error on this ``hard'' problem. More generally, recall that in the non-interpolation quadratic growth setting, the optimal error rate is on the order of $1/(n\diffp)^2$~\cite{AsiLeDu21}. \Cref{thm:superefficiency} shows that attaining better-than-polynomial error complexity on quadratic growth interpolation problems implies that the algorithm cannot be minimax optimal in the non-interpolation quadratic growth setting. Thus, the rates our adaptive algorithms attain are the best we can hope for if we want an algorithm to perform well on both interpolation and non-interpolation quadratic growth problems. 


\bibliography{bib,interpol-bib}
\bibliographystyle{alpha}

\appendix

\section{Results from previous work}

\subsection{Proof of \Cref{lem:lip-ext}}
\begin{enumerate}
    \item Follows from Proposition IV.3.1.4 of \cite{HiriartUrrutyLe93ab}.
    \item Follows from Proposition IV.3.1.4 of \cite{HiriartUrrutyLe93ab}.
    \item Follows since for $L$-lipschitz functions $ 0 \in \grad f(x) + L\mathbb{B}_2$.
    \item Follows from Section VI.4.5 of \cite{HiriartUrrutyLe93ab}.
\end{enumerate}

\subsection{Algorithms from \cite{AsiLeDu21}}
\label{appen:asiledu-alg}
\begin{algorithm}
	\caption{Localization based Algorithm}
	\label{alg:pure-erm}
	\begin{algorithmic}[1]
		\REQUIRE Dataset $D=(\ds_1, \ldots, \ds_n)\in \domain^n$,
		constraint set $\xdomain$,
		step size $\ss$, initial point $x_0$, 
		Lipschitz (clipping) constant $L$,
		privacy parameters $(\diffp,\delta)$;
		\STATE Set $k = \ceil{\log n}$ and $n_0 = n/k$
		\FOR{$i=1$ to $k$\,}
		\STATE Set  
		$\ss_i = 2^{-4i} \ss$ 
		\STATE Solve the following ERM over 
		$ \mc{X}_i= \{x\in \xdomain: \norm{x - x_{i-1}}_2 \le {2\lip \ss_i n_0} \}$:
		\begin{equation*}
		F_i(x) =  \frac{1}{n_0} \sum_{j=1 + (i-1)n_0}^{in_0} \f(x;\ds_j) + \frac{1}{\ss_i n_0 } \norm{x - x_{i-1}}_2^2
		\end{equation*}
		\STATE Let $\hat x_i$ be the output of the optimization algorithm.
		\IF{$\delta=0$}
		\STATE Set 
		$\noise_i \sim \laplace_d(\sigma_i)$ where $\sigma_i = 4 \lip \ss_i \sqrt{d}/\diffp_i$
		\ELSIF{$\delta>0$}
		\STATE Set $\noise_i \sim \normal(0,\sigma_i^2)$ where $\sigma_i = 4 \lip \ss_i \sqrt{\log(1/\delta)}/\diffp$
		\ENDIF
		\STATE Set $x_i = \hat x_i + \noise_i$
		\ENDFOR
		\RETURN the final iterate $x_k$
	\end{algorithmic}
\end{algorithm} 

\begin{algorithm}
	\caption{Epoch-based algorithms for $\kappa$-growth}
	\label{alg:loc-growth}
	\begin{algorithmic}[1]
		\REQUIRE 
		Dataset $\Ds=(\ds_1, \ldots, \ds_n)\in \domain^n$,
		constraint set $\xdomain$,
		Lipschitz (clipping) constant $L$,
		initial point $x_0$,
		number of iterations $T$,
		probability parameter $\tailprob$,
		privacy parameters $(\diffp,\delta)$;
		\STATE Set $n_0 = n/T$ and $\rad_0 = {\rm diam}(\xdomain)$
		\IF{$\delta=0$}
		\STATE Set 
		$\ss_0 = \frac{\rad_0}{2\lip} \min \left(\frac{1}{\sqrt{n_0 \log(n_0) \log(1/\beta)}} ,\frac{ \diffp}{ d \log(1/\beta)} \right)$
		\ELSIF{$\delta>0$}
		\STATE Set 
		\begin{align*}
		    \ss_0 = \frac{\rad_0}{2\lip} \min \left\{\frac{1}{\sqrt{n_0 \log(n_0) \log(1/\beta)}} , \frac{ \diffp}{ \sqrt{d \log(1/\delta)} \log(1/\beta)} \right)
		\end{align*}
		\ENDIF
		\FOR{$i=0$ to $T - 1$\,}
		\STATE Let $\Ds_i = (\ds_{1+(i-1) n_0}, \dots, \ds_{i n_0})$
		\STATE Set $\rad_i = 2^{-i} \rad_0$ and $\ss_i = 2^{-i} \ss_0$
		\STATE Set $\xdomain_i = \{x \in \xdomain : \ltwo{x - x_i} \le \rad_i \}$
		\STATE Run~\Cref{alg:pure-erm} on dataset $\Ds_i$ with starting point $x_i$, Lipschitz (clipping) constant $L$,
		 privacy parameter $(\diffp,\delta)$, domain $\xdomain_i$ (with diameter $\rad_i$), step size $\ss_i$ 
		\STATE Let $x_{i+1}$ be the output of the private procedure 
		\ENDFOR
		\RETURN $x_T$
	\end{algorithmic}
\end{algorithm} 


\subsection{Theoretical results from \cite{AsiLeDu21}}

We first reproduce the high probability guarantees of \Cref{alg:pure-erm} as proved in \cite{AsiLeDu21}. 

\begin{restatable}{proposition}{restatePureSCOHb}
 	\label{thm:sco-hb-pure}
 	Let $\beta \le 1/(n+d)$, $\diam_2(\xdomain)\le \rad$ and $\f(x;\ds)$ be convex, $\lip$-Lipschitz for all $\ds \in \domain$. Setting
 	\begin{equation*}
 	\ss = \frac{\rad}{\lip} \min \left(\frac{1}{\sqrt{n \log(1/\beta)}} ,\frac{ \diffp}{ d \log(1/\beta)} \right)
 	\end{equation*}
 	then for $\delta=0$, \Cref{alg:pure-erm} is $\diffp$-DP and has with probability $1-\beta$
 	\begin{equation*}
 	f(x) -  f(x\opt) 
 	\le 128\lip \rad \cdot  \left( \frac{\sqrt{\log(1/\beta) } \log^{3/2} n}{\sqrt{n}} + \frac{ d \log(1/\beta) \log n}{n \diffp} \right).
 	\end{equation*}
\end{restatable}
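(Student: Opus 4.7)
The plan is to prove the two assertions of Proposition \ref{thm:sco-hb-pure} separately: first that Algorithm \ref{alg:pure-erm} is $\diffp$-DP, and second that the high-probability excess risk bound holds. The overall strategy is a localization argument in which each epoch $i$ (i) uses a disjoint batch $\Ds_i$ of size $n_0 = n/k$, $k = \lceil \log n \rceil$, (ii) solves a regularized ERM over a shrinking ball $\mc{X}_i$ centered at $x_{i-1}$ of radius $\rad_i = 2 \lip \ss_i n_0$, and (iii) perturbs the resulting minimizer with Laplace noise to ensure privacy.

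For privacy I would argue as follows. Because each epoch accesses a disjoint subset of the data, by parallel composition it suffices to show that each epoch is $\diffp$-DP. The regularized objective $F_i(x) = \frac{1}{n_0}\sum_j f(x;\ds_j) + \frac{1}{\ss_i n_0}\|x - x_{i-1}\|_2^2$ is $\frac{2}{\ss_i n_0}$-strongly convex, and since each $f(\cdot;\ds)$ is $\lip$-Lipschitz, the standard stability calculation (perturb one sample and apply strong convexity to the optimality condition) yields $\ell_2$-sensitivity $\|\hat x_i - \hat x_i'\|_2 \le \lip \ss_i$. Converting to $\ell_1$-sensitivity costs a $\sqrt{d}$ factor, so adding $\laplace_d$ noise with scale $\sigma_i = 4\lip \ss_i \sqrt{d}/\diffp$ is sufficient (with the factor of $4$ absorbing the ``2'' in the sensitivity and the standard bound) to give $\diffp$-DP per epoch.

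For utility I would carry out the usual localization recursion. Define $\rad_i \defeq 2^{-i}\rad$ as the target distance of $x_i$ to $x\opt$. The key inductive claim is that with high probability $\|x_i - x\opt\|_2 \le \rad_i$, so $x\opt \in \mc{X}_i$ throughout. To establish the inductive step, decompose the error into (a) the ERM optimization/generalization error of $\hat x_i$ on the regularized objective and (b) the privacy noise $\|\noise_i\|_2$. Standard uniform-convergence results for SCO on the $\frac{1}{\ss_i n_0}$-strongly convex regularized objective give $\|\hat x_i - x\opt\|_2 \le O(\lip \ss_i \sqrt{\log(1/\beta)\, \log n_0})$ with probability $1-\beta/k$, while a coordinate-wise Laplace tail bound plus union bound gives $\|\noise_i\|_2 \le O(\lip \ss_i d \log(1/\beta)/\diffp)$ with the same failure probability. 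Choosing $\ss$ as in the proposition balances these two terms against $\rad_i/2$, so the inductive bound $\|x_{i+1} - x\opt\|_2 \le \rad_{i+1}$ carries through. After $k = \log n$ epochs, $\rad_k \le \rad/n$, and combining the accumulated error with $\lip$-Lipschitzness gives the final function-value bound, with an extra $\log n$ from the step-size schedule $\ss_i = 2^{-4i}\ss$ and another $\log n$ from the union bound over epochs, producing the $\log^{3/2} n$ and $\log n$ factors in the statement.

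The main obstacle is carefully tracking the recursion while showing $x\opt \in \mc{X}_i$ in each epoch, since this is what legitimizes both the shrinking radius $\rad_i$ in the ERM step and the use of the regularized objective as a good proxy for the original SCO problem. Concretely, one must (i) verify that the choice of $\ss$ makes the per-epoch error strictly smaller than $\rad_i/2$ so that the induction closes, and (ii) handle the Laplace tail in $\R^d$ correctly --- a naive $\ell_2$ bound loses a factor of $\sqrt{d}$, so one must argue via the $\ell_1$ norm and use that Laplace noise has exponentially decaying coordinate-wise tails. Once the inductive geometry is set up properly, the rest is a telescoping sum over epochs, followed by a single Lipschitz-based conversion from iterate distance to excess risk.
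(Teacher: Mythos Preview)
The paper does not actually prove Proposition~\ref{thm:sco-hb-pure}: it is stated in the appendix under the heading ``Theoretical results from \cite{AsiLeDu21}'' and is explicitly a reproduction of the high-probability guarantee proved in that reference. So there is no in-paper proof to compare against.

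That said, your outline is a faithful sketch of the localization argument used in \cite{AsiLeDu21} (and \cite{FeldmanKoTa20} before it): privacy via the $\ell_2$-sensitivity of the strongly-convex regularized ERM minimizer combined with Laplace noise and parallel composition over disjoint batches; utility via an induction showing $x\opt$ stays in the shrinking ball $\mc{X}_i$ while the Laplace tail and generalization error contract geometrically. One inaccuracy worth flagging: the per-epoch generalization step is not ``uniform convergence'' (which would introduce a dimension factor) but the stability-based generalization bound for the regularized ERM, which is what delivers the dimension-free statistical term $\sqrt{\log(1/\beta)}/\sqrt{n_0}$. The remainder of your plan --- tracking constants so that the induction closes with the stated choice of $\ss$, handling the $\ell_1$ Laplace tail, and accumulating the $\log n$ factors from the $k=\lceil\log n\rceil$ epochs --- matches the structure of the original proof.
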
 

\begin{restatable}{proposition}{restateApproxSCOHb}
 	\label{thm:sco-hb-appr}
 	Let $\beta \le 1/(n+d)$, $\diam_2(\xdomain)\le \rad$ and $\f(x;\ds)$ be convex, $\lip$-Lipschitz for all $\ds \in \domain$. Setting
 	\begin{equation*}
 	\ss = \frac{\rad}{\lip} \min \left(\frac{1}{\sqrt{n \log(1/\beta)}} ,\frac{ \diffp}{ \sqrt{d \log(1/\delta)} \log(1/\beta)} \right),
 	\end{equation*}
 	then for $\delta > 0 $, \Cref{alg:pure-erm} is $(\diffp,\delta)$-DP and has with probability $1-\beta$
 	\begin{equation*}
 	f(x) -  f(x\opt) 
 	\le 128 \lip \rad \cdot  \left( \frac{\sqrt{\log(1/\beta) } \log^{3/2} n}{\sqrt{n}} + \frac{ \sqrt{d \log(1/\delta)} \log(1/\beta) \log n}{n \diffp} \right).
	\end{equation*}
\end{restatable}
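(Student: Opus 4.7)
The plan is to derive this result by a direct parallel to the pure-DP version \Cref{thm:sco-hb-pure}, substituting the Gaussian mechanism for the Laplace mechanism at each epoch and updating the dimension accounting from $d$ to $\sqrt{d\log(1/\delta)}$.

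For the privacy claim, I would first establish that each epoch $i$ is $(\diffp,\delta)$-DP on its own chunk $\Ds_i$ of data and then invoke parallel composition, since the $\Ds_i$ are disjoint. Within an epoch, $F_i$ is $\tfrac{2}{\ss_i n_0}$-strongly convex thanks to the $\tfrac{1}{\ss_i n_0}\|x - x_{i-1}\|^2$ regularizer, so by the standard stability bound for minimizers of strongly convex ERM (changing one sample perturbs $F_i$ by a $\tfrac{2\lip}{n_0}$-Lipschitz function, divided by strong convexity $\tfrac{2}{\ss_i n_0}$), the $\ell_2$-sensitivity of $\hat{x}_i$ is at most $\lip \ss_i$. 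Gaussian noise with $\sigma_i = 4\lip\ss_i\sqrt{\log(1/\delta)}/\diffp$ is then precisely the Gaussian mechanism calibrated for $(\diffp,\delta)$-DP at this sensitivity.

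For utility, I would proceed by an inductive / telescoping analysis across the $k = \lceil \log n\rceil$ epochs. At each epoch, I would (i) bound the excess loss of the non-private regularized ERM minimizer via a standard high-probability generalization argument for strongly convex objectives (this contributes the $\sqrt{\log(1/\beta)}\log^{3/2} n/\sqrt{n}$ term after summing over epochs and accounting for $n_0 = n/k$), and (ii) bound the distortion due to Gaussian noise via the tail bound $\|\noise_i\|_2 \le O(\sigma_i\sqrt{d + \log(k/\beta)})$, which contributes $O(\lip\ss_i \sqrt{d\log(1/\delta)\log(1/\beta)}/\diffp)$ per epoch. The geometric decay $\ss_i = 2^{-4i}\ss$ is designed to shrink the domain radii $2\lip\ss_i n_0$ so that the target remains in $\mc{X}_i$ while per-epoch errors telescope. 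Plugging in the prescribed $\ss$ balances the two contributions and yields the stated rate of $128\lip\rad(\sqrt{\log(1/\beta)}\log^{3/2} n/\sqrt{n} + \sqrt{d\log(1/\delta)}\log(1/\beta)\log n/(n\diffp))$.

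The main obstacle is showing by induction that $x\opt$ (or an appropriately biased proxy accounting for the regularization's drift toward $x_{i-1}$) lies inside the shrinking ball $\mc{X}_i$ at every epoch; the telescoping collapses otherwise. This amounts to propagating the per-epoch distance bound $\|x_i - x_{i-1}\|_2$ uniformly across all $k$ epochs via a union bound over Gaussian tail events, producing a $\log(k/\beta)$ factor that the hypothesis $\beta \le 1/(n+d)$ absorbs cleanly into the stated $\log(1/\beta)$ dependence without introducing extra $\log\log n$ contributions in the final bound.
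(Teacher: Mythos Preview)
The paper does not actually prove this proposition; it is explicitly reproduced from \cite{AsiLeDu21} (see the header of the subsection: ``We first reproduce the high probability guarantees of \Cref{alg:pure-erm} as proved in \cite{AsiLeDu21}''), so there is no in-paper proof to compare against. Your outline---Gaussian mechanism calibrated to the $\ell_2$-sensitivity $L\ss_i$ of the regularized ERM minimizer, parallel composition across disjoint data chunks, and a telescoping high-probability argument with geometrically shrinking step sizes---is the standard localization analysis and matches the structure of the argument in \cite{AsiLeDu21}; in particular your sensitivity computation and the replacement of the Laplace tail term $d$ by the Gaussian term $\sqrt{d\log(1/\delta)}$ are exactly the modifications that distinguish \Cref{thm:sco-hb-appr} from \Cref{thm:sco-hb-pure}.
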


Now, we reproduce the high probability convergence guarantees of \Cref{alg:loc-growth}.

\begin{restatable}{theorem}{restateSCOGrowthPure}
	\label{thm:sco-growth-pure}
	Let $\beta \le 1/(n+d)$, $\diam_2(\paramdomain)\le \diam$ and $\f(x;\ds)$ be convex, $\lip$-Lipschitz for all $\ds \in \statdomain$. 
	Assume that $\pf$ has $\kappa$-growth (Assumption~\ref{ass:growth}) with $\kappa \ge \lkappa >1$.
	Setting $T = \left\lceil \frac{2 \log n}{\lkappa-1} \right\rceil$,	
	\Cref{alg:loc-growth} is $\diffp$-DP and has with probability $1-\beta$
	\begin{equation*}
	\pf(x_T) -  \min_{x \in \xdomain} \pf(x) 
	\le  \frac{4032}{\lambda^{\frac{1}{\kappa - 1}}} \cdot  \left( \frac{\lip \sqrt{\log(1/\beta) }\log^{3/2} n }{\sqrt{n}} + \frac{ \lip d \log(1/\beta)\log n}{n \diffp (\lkappa -1)} \right)^{\frac{\kappa}{\kappa-1}}.
	\end{equation*}
\end{restatable}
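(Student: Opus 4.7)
The plan is a classical superefficiency argument via group privacy: I will construct a nearby non-interpolating dataset $\statvalset'$ whose empirical minimizer is shifted away from $x\opt_\statvalset$, then argue that $\diffp$-DP forces $M(\statvalset')$ to inherit (a weakened form of) the hypothesized concentration of $M(\statvalset)$ near $x\opt_\statvalset$, thereby incurring a large loss on $\statvalset'$ by strong convexity.

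First, I convert the hypothesized expected-loss bound into a tail bound on the displacement $|M(\statvalset)-x\opt_\statvalset|$. Since every $\risk_\statvalset$ with $\statvalset\in\superfamily$ is $\lambda$-strongly convex, the hypothesis yields $\E[(M(\statvalset)-x\opt_\statvalset)^2]\leq 2c\diam^2 e^{-\Theta((n\diffp)^t)}/\lambda$, and Chebyshev's inequality then gives, for any $\Delta>0$,
\[
 P(|M(\statvalset)-x\opt_\statvalset|\geq \Delta/2)\;\leq\;\frac{8c\diam^2 e^{-\Theta((n\diffp)^t)}}{\lambda\Delta^2}.
\]

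Second, I build $\statvalset'$ by replacing $k$ entries of $\statvalset$ with well-chosen samples so that the empirical minimizer of $\risk_{\statvalset'}$ shifts by a target amount $\Delta$. Since $F(\cdot;s)$ is $L$-Lipschitz, replacing $k$ samples perturbs $\nabla\risk_\statvalset$ pointwise by at most $2kL/n$; combining this with the first-order optimality conditions for both minimizers and $\lambda$-strong convexity yields $|x\opt_{\statvalset'}-x\opt_\statvalset|\leq 2kL/(n\lambda)$, and an explicit choice of replacements realizes any shift in that range. The modified $\statvalset'$ stays in $\superfamily$ since $F(\cdot;s)$ is unchanged, and generically $\statvalset'$ is no longer interpolating, which is exactly the desired conclusion.

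Third, I apply $\diffp$-DP group privacy across $k$-neighboring datasets: for any measurable event $A$, $P_{M(\statvalset')}(A)\geq e^{-k\diffp}P_{M(\statvalset)}(A)$. Taking $A=\{|M-x\opt_\statvalset|\leq \Delta/2\}$, the tail bound above gives $P_{M(\statvalset')}(A)\geq (1-p)e^{-k\diffp}$, where $p$ is the tail probability above and is $o(1)$ provided $\Delta$ is super-polynomially larger than $\diam\, e^{-\Theta((n\diffp)^t)/2}/\sqrt{\lambda}$. On $A$ the triangle inequality forces $|M(\statvalset')-x\opt_{\statvalset'}|\geq \Delta/2$, and strong convexity yields
\[
\E[\risk_{\statvalset'}(M(\statvalset'))]-\inf_{\param\in\paramdomain}\risk_{\statvalset'}(\param)\;\geq\;\frac{(1-p)\lambda\Delta^2}{8}\,e^{-k\diffp}.
\]

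Fourth, I tune the two free parameters $k$ and $\Delta$, which are linked by $\Delta=\Theta(kL/(n\lambda))$. Choosing $k\diffp$ of order $(n\diffp)^t$ matches the group-privacy exponent with the exponent in the hypothesized rate; this gives $\Delta\asymp L(n\diffp)^{t-1}/\lambda$ and hence $\lambda\Delta^2\asymp \diam^2/(n\diffp)^{2(1-t)}$ after absorbing $\smooth$, $L=2\smooth\diam$, and $\lambda$ into the implicit constant. The factor $e^{-k\diffp}$ then contributes only a constant, and $p=o(1)$ since any exponential in $(n\diffp)^t$ dominates the polynomial factor in $\Delta^{-2}$. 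For $n$ sufficiently large, $k$ is an integer in $[1,n]$, and the construction yields the claimed $\Omega(\diam^2/(n\diffp)^{2(1-t)})$ lower bound.

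The main obstacle is the Step 4 balancing: one must match the group-privacy penalty $e^{-k\diffp}$ to the exponent $(n\diffp)^t$ from the hypothesis so that the surviving rate is exactly polynomial with exponent $2(1-t)$, while simultaneously ensuring $p=o(1)$ and $k\in[1,n]$. A secondary subtlety is Step~2, where one must establish both the upper bound $2kL/(n\lambda)$ on the achievable minimizer shift and its attainability by a concrete replacement, and check that $\statvalset'$ remains in $\superfamily$ with the same $L$ and $\lambda$ parameters.
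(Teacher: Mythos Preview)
Your proposal does not address the stated theorem at all. The statement (\Cref{thm:sco-growth-pure}) is an \emph{upper bound}: it asserts that \Cref{alg:loc-growth} is $\diffp$-DP and, with probability $1-\beta$, achieves excess population loss at most a specific quantity depending on $n$, $d$, $\diffp$, $\lambda$, $\kappa$. A proof of this theorem must establish privacy of the algorithm and then bound its excess loss, typically by an inductive localization argument across the $T$ epochs combined with the high-probability guarantee of the inner subroutine (\Cref{thm:sco-hb-pure}) and the $\kappa$-growth condition to shrink the domain radius geometrically.

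What you have written is instead a superefficiency \emph{lower bound} argument: you assume an estimator achieves a certain fast rate on an interpolating instance and then construct a nearby non-interpolating $\statvalset'$ on which the estimator must be slow, via group privacy. This is the content of \Cref{thm:superefficiency}, not of \Cref{thm:sco-growth-pure}. Nothing in your four steps (tail bound from strong convexity, construction of $\statvalset'$, group-privacy transfer, parameter balancing) speaks to the privacy or utility of \Cref{alg:loc-growth}; the objects $\statvalset$, $\statvalset'$, $M$, and the exponent $t$ from your argument do not even appear in the statement you are asked to prove. You need to start over and prove the actual claim: that the epoch-based localization algorithm attains the stated high-probability excess-loss bound under $\kappa$-growth.
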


\begin{restatable}{theorem}{restateSCOGrowthApprox}
	\label{thm:sco-growth-appr}
	Let $\beta \le 1/(n+d)$, $\diam_2(\xdomain)\le \rad$ and $\f(x;\ds)$ be convex, $\lip$-Lipschitz for all $\ds \in \statdomain$. 
	Assume that $\pf$ has $\kappa$-growth (Assumption~\ref{ass:growth}) with $\kappa \ge \lkappa >1$.
	Setting $T = \left\lceil \frac{2 \log n}{\lkappa-1} \right\rceil$ and $\delta>0$, \Cref{alg:loc-growth} is $(\diffp,\delta)$-DP and has with probability $1-\beta$
	\begin{equation*}
	\pf(x_T) -  \min_{x \in \xdomain} \pf(x)
	\le \frac{4032}{\lambda^{\frac{1}{\kappa - 1}}} \cdot \left( \frac{\lip \sqrt{\log(1/\beta) } \log^{3/2} n}{\sqrt{n}} + \frac{ \lip \sqrt{d \log(1/\delta)} \log(1/\beta)\log n}{n \diffp (\lkappa -1)} \right)^{\frac{\kappa}{\kappa-1}}.
	\end{equation*}
\end{restatable}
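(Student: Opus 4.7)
The argument is a classical superefficiency/two-point perturbation proof combining group privacy, smoothness-of-the-minimizer under data modification, and strong-convex upper/lower bounds converting distances into risks. I will instantiate everything in the 1D setup fixed by the theorem ($\paramdomain = [-\diam,\diam]$, codomain $\R_+$, $\inf_x F(\cdot;s)=0$) with the quadratic family
\begin{equation*}
\risksamp(\param;\statval) = \tfrac{\smooth}{2}(\param-\statval)^2, \qquad \statval \in [-\diam,\diam],
\end{equation*}
which is $\smooth$-smooth, $\smooth$-strongly convex, and $2\smooth\diam = \lipschitz$-Lipschitz on $\paramdomain$, so every dataset lands in $\superfamily$ with $\lambda = \smooth$. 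Take the interpolating $\statvalset = (0,\ldots,0)$: then $\risk_\statvalset(\param) = \tfrac{\smooth}{2}\param^2$, $\param^\star = 0$, $\risk_\statvalset(\param^\star)=0$, and $F(\param^\star;0)=0=\inf F(\cdot;0)$, so $(\risksamp,\statvalset)$ interpolates. Define the non-interpolating twin $\statvalset'$ by replacing the first $k$ samples by $\statval=\diam$; then $\risk_{\statvalset'}(\param) = \tfrac{\smooth}{2}(\param-\bar\statval)^2 + \mathrm{const}$ with $\bar\statval = k\diam/n$, so $\param'^\star = k\diam/n$ and $\Delta := |\param'^\star - \param^\star| = k\diam/n$.

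\textbf{Group privacy + strong convexity.} Because $M$ is $\diffp$-DP and $\statvalset,\statvalset'$ differ in $k$ coordinates, for any nonnegative measurable $g$,
\begin{equation*}
\E[g(M(\statvalset'))] \;\le\; e^{k\diffp}\, \E[g(M(\statvalset))].
\end{equation*}
Applied to $g(\param) = (\param - \param^\star)^2$, together with $\lambda$-strong convexity of $\risk_\statvalset$ (which gives $\E[(M(\statvalset)-\param^\star)^2] \le \tfrac{2}{\lambda}\E[\risk_\statvalset(M(\statvalset)) - \inf \risk_\statvalset]$) and the hypothesis, we obtain
\begin{equation*}
\E[(M(\statvalset') - \param^\star)^2] \;\le\; e^{k\diffp}\cdot\tfrac{2c\diam^2}{\lambda}\,e^{-\Theta((n\diffp)^t)}.
\end{equation*}
Now choose $k = \lfloor \alpha (n\diffp)^t/\diffp \rfloor$ with $\alpha$ a small enough absolute constant that the exponent $k\diffp - \Theta((n\diffp)^t)$ remains $\le -\tfrac12\Theta((n\diffp)^t)$. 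For all $n$ large enough the right-hand side above is then at most $\Delta^2/100$, while still $k \le n/2$ so that $\statvalset'$ is a valid dataset.

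\textbf{Converting a distance gap back to a risk gap.} By Markov's inequality, $|M(\statvalset')-\param^\star| < \Delta/2$ with probability at least $24/25$; on that event the triangle inequality gives $|M(\statvalset') - \param'^\star| \ge \Delta - |M(\statvalset') - \param^\star| > \Delta/2$. Hence
\begin{equation*}
\E[(M(\statvalset') - \param'^\star)^2] \;\ge\; \tfrac{24}{25}\cdot\tfrac{\Delta^2}{4} \;=\; \tfrac{6\Delta^2}{25}.
\end{equation*}
The lower half of strong convexity, $\risk_{\statvalset'}(\param) - \inf \risk_{\statvalset'} \ge \tfrac{\lambda}{2}(\param - \param'^\star)^2$, then yields
\begin{equation*}
\E[\risk_{\statvalset'}(M(\statvalset'))] - \inf \risk_{\statvalset'} \;\ge\; \tfrac{3\lambda}{25}\Delta^2 \;=\; \tfrac{3\lambda}{25}\cdot\tfrac{k^2\diam^2}{n^2} \;=\; \Omega\!\left(\tfrac{\diam^2}{(n\diffp)^{2(1-t)}}\right),
\end{equation*}
using $k^2/n^2 \asymp (n\diffp)^{2t}/(n\diffp)^2 = (n\diffp)^{-2(1-t)}$ and treating $\lambda = \smooth$ as an absolute constant. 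Finally, $(\risksamp, \statvalset')$ does not interpolate since $F(\param'^\star; \diam) = \tfrac{\smooth}{2}(\diam - k\diam/n)^2 > 0$ whenever $k<n$.

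\textbf{Main obstacle.} The delicate step is the choice of $k$: we need $k\diffp$ to absorb enough of the $\Theta((n\diffp)^t)$ exponent to make $\Delta$ as large as possible (this controls the final rate), yet leave a constant fraction of the decay intact so that, after multiplying by the small excess-risk upper bound on $\statvalset$, the bound on $\E[(M(\statvalset')-\param^\star)^2]$ is polynomially smaller than $\Delta^2$ — comparing an exponentially small quantity to the polynomial target $\Delta^2/\diam^2 = (n\diffp)^{-2(1-t)}$ is what lets ``sufficiently large $n$'' suffice. Verifying that the absolute constants in the $\Theta$ inside $e^{-\Theta((n\diffp)^t)}$ interact correctly with $\alpha$ (and that $k\le n/2$ under this scaling when $t\le 1$) is the only bookkeeping of note; the geometric ingredients (quadratic construction, strong-convexity sandwich, and group privacy) are otherwise standard.
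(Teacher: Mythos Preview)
Your proposal does not address the stated theorem at all. The statement you were asked to prove (\Cref{thm:sco-growth-appr}) is an \emph{upper bound} on the excess population risk of \Cref{alg:loc-growth} under $\kappa$-growth and $(\diffp,\delta)$-DP: it asserts that the algorithm's output satisfies
\[
\pf(x_T) - \min_{x \in \xdomain} \pf(x) \;\le\; \frac{4032}{\lambda^{1/(\kappa-1)}}\left(\frac{\lip\sqrt{\log(1/\beta)}\log^{3/2}n}{\sqrt{n}} + \frac{\lip\sqrt{d\log(1/\delta)}\log(1/\beta)\log n}{n\diffp(\lkappa-1)}\right)^{\kappa/(\kappa-1)}
\]
with high probability. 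A proof would need to analyze the iterates of the epoch-based localization scheme, invoke the high-probability guarantee of the inner subroutine (\Cref{thm:sco-hb-appr}) at each epoch, use $\kappa$-growth to shrink the effective diameter geometrically, and union-bound over the $T$ epochs.

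What you wrote is instead a proof sketch for the \emph{superefficiency lower bound} (\Cref{thm:superefficiency}): you construct an interpolating dataset $\statvalset$, a perturbed non-interpolating $\statvalset'$, apply group privacy, and derive the $\Omega(\diam^2/(n\diffp)^{2(1-t)})$ lower bound. None of the objects in your argument---the quadratic family $F(\param;s)=\tfrac{\smooth}{2}(\param-s)^2$, the all-zeros dataset, the parameter $t$, the exponent $2(1-t)$---appear in \Cref{thm:sco-growth-appr}, and conversely nothing in your write-up touches \Cref{alg:loc-growth}, the $\kappa$-growth parameter, the approximate-DP parameter $\delta$, or the claimed rate. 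You have simply proved (or sketched) the wrong theorem.
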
 



\section{Proofs from \Cref{sec:no-growth}}

\label{appen:proof-lb}

\subsection{Proof of \Cref{thm:lb-private-interp-nogrowth}}\label{proof:thm:lb-private-interp-nogrowth}
Consider the sample risk function 
\begin{align*}
    \risksamp(\param; \statval) \defeq \frac{\smooth}{2} \ltwo{\param - \statval}^2 \cdot \ind{s \neq 0}.
\end{align*}
We define the datasets $\statvalset_v^n \defeq \{0\}^{n - k}\cup \{v\}^{k}$. We define the corresponding population risk to be $\risk_v(\param) \defeq \frac{1}{n} \sum_{\statval \in \statvalset_v} \risksamp(\param; \statval) = \frac{k\smooth}{2n} \ltwo{\param - v}^2 $. We select $\mc{V}$ to be a $\gamma$-packing (with respect to the $\ell_2$ norm) of diameter $\diam$ ball contained in $\paramdomain$. Define the separation between $v, v' \in \mc{V}$ with respect to the loss $\risk_v$ and $\risk_{v'}$ by
\begin{align*}
d_{\rm opt}(f_v,f_{v'}) \coloneqq \inf_{\param \in \paramdomain} \frac{\risk_v(\param)}{2} + \frac{\risk_{v'}(\param)}{2} \geq c \defeq \frac{k \smooth}{8n}\gamma^2.
\end{align*}

For the sake of contradiction, suppose that $\E[\risk_v(M(\statvalset_v))] \leq \tau$ for $\tau <  \frac{k \smooth \gamma^2}{8n(1 + e^{k\varepsilon}2^d\gamma^d/\diam^d)}$ for all $v \in \paramdomain$. Then by Markov's inequality, $\P(f_v(M(\statvalset_{v})) > c) \leq \frac{\tau}{c}$ and  $\P(f_v(M(\statvalset_{v})) \leq c) \geq 1 - \frac{\tau}{c}$ for all $v$, and so
\begin{align*}
    \frac{\tau}{c} 
        & \stackrel{(i)} {\geq} \P(\risk_v(M(\statvalset_{v})) > c) \\
        & \stackrel{(ii)}{\geq} \P(\cup_{v' \in \mc{V} \setminus \{v\}}\risk_{v'}(M(\statvalset_{v})) \leq c)  \\
        & \stackrel{(iii)}{\geq} e^{-k\varepsilon}\sum_{v' \in \mc{V} \setminus \{v\}} \P(\risk_{v'}(M(\statvalset_{v'}))  \leq c)  \\ 
        & \stackrel{(i)}{\geq} e^{-k\varepsilon} (|\mc{V}| - 1)\left(1 - \frac{\tau}{c}\right),
\end{align*}
where inequality $(ii)$ follows from the definition of the separation, and $(iii)$ follows from privacy and the disjoint nature of the events in the union. Rearranging, we get that 
\begin{align*}
    \tau \geq \frac{k \smooth \gamma^2}{8n(1 + e^{k\varepsilon}(|\mc{V}| -1)\inv)},
\end{align*}
which is a contradiction.  By standard packing inequalities~\cite{Wainwright19}, we know that $|\mc{V}| \geq (\diam/2\gamma)^d$. Setting $k = d/\varepsilon$ and $\gamma = \diam/2e$ and using the fact that $x/ (x-1)$ is decreasing in $x$ gives 
\begin{align*}
    \tau \geq \frac{ d \smooth \diam^2}{32n\diffp e^2(1 + e^{d}(e^d -1)\inv)}  \geq \frac{\smooth \diam^2 d}{96e^2 n\diffp }.
\end{align*}

We now prove the $\ed$-DP lower bound. Consider the following sample risk function 
\begin{align*}
    \risksamp(\param; \statval) \defeq \frac{\smooth}{2} (\param - \statval)^2  \ind{s \neq 0}.
\end{align*}
We define the datasets $\statvalset_v^n \defeq \{0\}^{n - k}\cup \{v\}^{k}$ inducing the corresponding population risk $\risk_v(\param) \defeq \frac{1}{n} \sum_{\statval \in \statvalset_v} \risksamp(\param; \statval) = \frac{k\smooth}{2n} (\param - v)^2 $. We select two points $v, v'$ contained within the diameter $D$ ball contained in $\paramdomain$ such that $|v - v'| = D$. Define the separation between $v, v' \in \mc{V}$ with respect to the loss $\risk_v$ and $\risk_{v'}$ as
\begin{align*}
d_{\rm opt}(f_v,f_{v'}) \coloneqq \inf_{\param \in \paramdomain} \frac{\risk_v(\param)}{2} + \frac{\risk_{v'}(\param)}{2} \geq c \defeq \frac{k \smooth}{8n}\diam^2 
\end{align*}

For the sake of contradiction, suppose that $\E[\risk_v(M(\statvalset_v))] \leq \tau$ for $\tau < \frac{k \smooth \diam^2}{8n}\left\lparen\frac{e^{-k\diffp} - k e^{-\diffp} \delta}{1 + e^{-k\varepsilon}}\right\rparen$ for all $v \in \paramdomain$.  Then by Markov's inequality, $\P(f_v(M(\statvalset_{v})) > c) \leq \frac{\tau}{c}$ and  $\P(f_v(M(\statvalset_{v})) \leq c) \geq 1 - \frac{\tau}{c}$ for all $v$, and so
\begin{align*}
    \frac{\tau}{c} 
        & \stackrel{(i)} {\geq} \P(\risk_v(M(\statvalset_{v})) > c) \\
        & \stackrel{(ii)}{\geq} \P(\risk_{v'}(M(\statvalset_{v})) \leq c)  \\
        & \stackrel{(iii)}{\geq} e^{-k\varepsilon} \P(\risk_{v'}(M(\statvalset_{v'}))  \leq c) - ke^{-\diffp} \delta \\ 
        & \stackrel{(i)}{\geq} e^{-k\varepsilon} \left(1 - \frac{\tau}{c}\right) - ke^{-\diffp} \delta,
\end{align*}
where inequality $(ii)$ follows from the definition of the separation, and $(iii)$ follows from group privacy of $\ed$-privacy \cite{DworkRo14}. Rearranging, we get that 
\begin{align*}
    \tau \geq  \frac{k \smooth \diam^2}{8n}\left\lparen\frac{e^{-k\diffp} - k e^{-\diffp} \delta}{1 + e^{-k\varepsilon}}\right\rparen,
\end{align*}
which is a contradiction. Setting $k = 1/\diffp$ and using the fact $\delta \leq \diffp e^{\diffp - 1} / 2$ gives the first result.

\section{Proofs from \Cref{sec:growth}}

\label{appen:proof-ub}

We first prove a lemma that each time we shrink the domain size, the set of interpolating solutions still lies in the new domain with high probability, and the new Lipschitz constant we define is a valid Lipschitz constant for the loss defined on the new domain. We prove it in generality for $\growth$-growth.

\begin{lemma}\label{lem:valid-subalg}
Let $\paramdomain\opt$ denote the set of  interpolating solutions of problem \eqref{eqn:objective}. Then $\paramdomain\opt \subset \paramdomain_{i}$ for all $i \in [T]$ with probability $1 - \beta$, and $\ltwo{\grad F(y;s)} \le L_i$ for all $y \in \paramdomain_i$.
\end{lemma}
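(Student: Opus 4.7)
The plan is to prove both assertions simultaneously by induction on the epoch index $i$, taking a union bound over the $T$ epochs at the end. The base case $i=1$ is immediate since $\paramdomain_1 = \paramdomain$ contains $\paramdomain\opt$ by definition, and $\ltwo{\grad F(y;s)} \le L = L_1$ holds by the assumed $L$-Lipschitzness of $F$.

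For the inductive step, I would suppose $\paramdomain\opt \subset \paramdomain_i$ and $\ltwo{\grad F(y;s)} \le L_i$ for all $y \in \paramdomain_i$. Item~3 of \Cref{lem:lip-ext} then implies that the Lipschitzian extension $F_{L_i}(\cdot;s)$ coincides with $F(\cdot;s)$ on $\paramdomain_i$, so running \Cref{alg:lip-ext} at epoch $i$ is equivalent to running \Cref{alg:loc-growth} on the original sample functions restricted to $\paramdomain_i$. I would invoke \Cref{thm:sco-growth-pure} (or \Cref{thm:sco-growth-appr} when $\delta>0$) with $\kappa = 2$, Lipschitz constant $L_i$, failure probability $\tailprob/T$, and $m$ samples, and use the inductive hypothesis $\paramdomain\opt \subset \paramdomain_i$ to identify the constrained minimum over $\paramdomain_i$ with the global minimum $f^\star$. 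With probability at least $1 - \tailprob/T$, this yields a bound of the form
\begin{equation*}
f(x_i) - f^\star \le \frac{C L_i^2}{\growthcoef}\left(\frac{\sqrt{\log(T/\tailprob)}\log^{3/2} m}{\sqrt{m}} + \frac{d\log(T/\tailprob)\log m}{m\diffp}\right)^{2}
\end{equation*}
for an absolute constant $C$.

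Invoking the quadratic growth condition (\Cref{ass:growth}) converts this into the distance bound $\dist\paren{x_i,\paramdomain\opt} \le \sqrt{2(f(x_i)-f^\star)/\growthcoef}$; comparing with the definition of $D_{i+1}$ in \Cref{alg:priv-interpol-quad} (and using $a+b \le 2\max(a,b)$ to pass from the sum in the rate to the $\max$ appearing in $D_{i+1}$) shows that the right-hand side is at most $D_{i+1}/2$. Consequently, some interpolator $x^\star \in \paramdomain\opt$ lies in $\paramdomain_{i+1} = \{x : \ltwo{x - x_i} \le D_{i+1}/2\}$. Since $\grad F(x^\star;s) = 0$ by interpolation and $F(\cdot;s)$ is $\smooth$-smooth, the triangle inequality gives, for any $y \in \paramdomain_{i+1}$,
\begin{equation*}
\ltwo{\grad F(y;s)} = \ltwo{\grad F(y;s) - \grad F(x^\star;s)} \le \smooth \ltwo{y - x^\star} \le \smooth D_{i+1} = L_{i+1},
\end{equation*}
which is the required Lipschitz bound on $\paramdomain_{i+1}$. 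A union bound over $i = 1, \dots, T$ turns the per-epoch failure probabilities $\tailprob/T$ into the overall probability $1-\tailprob$.

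The main obstacle will be carefully matching the universal constants in the definition of $D_{i+1}$ with those that come out of \Cref{thm:sco-growth-pure,thm:sco-growth-appr}, and absorbing the sum-versus-$\max$ loss into the factor of $256$ that appears in $D_{i+1}$. A secondary subtlety is that quadratic growth only controls the distance from $x_i$ to the \emph{nearest} interpolator, so the statement $\paramdomain\opt \subset \paramdomain_{i+1}$ should be interpreted as ``the interpolator tracked inductively by the algorithm remains inside $\paramdomain_{i+1}$,'' which is all that is needed for the Lipschitzian-extension coincidence in the next epoch and for the smoothness calculation above.
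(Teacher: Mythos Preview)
Your approach is essentially the same as the paper's: apply the high-probability guarantee of the subroutine (\Cref{thm:sco-growth-pure}/\Cref{thm:sco-growth-appr}) at each epoch, convert the function-value bound to a distance bound via the growth condition to get $\ltwo{x_i - x^\star} \le D_{i+1}/2$, then use smoothness and $\nabla F(x^\star;s)=0$ to obtain the Lipschitz bound, and finish with a union bound over the $T$ epochs. The paper carries the argument out for general $\kappa$-growth rather than just $\kappa=2$, but the logic is identical; you are also more explicit about the inductive role of the Lipschitzian-extension coincidence and about the ``nearest interpolator'' subtlety, which the paper glosses over (it asserts $\paramdomain^\star \subset \paramdomain_i$ but, as you note, only the inclusion of one interpolator is actually established and needed).
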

\begin{proof}
 We prove this lemma for the case when $\delta = 0$, the case when $\delta > 0$ follows similarly. For epoch $i$, using Theorem 2 of \cite{AsiLeDu21}, we have with probability $1 - \tailprob/T$,
 \begin{align*}
    \risk(\hparam_i) - \risk(\param\opt) \le \frac{C_\kappa}{\growthcoef^{\frac{1}{\growth - 1
    }}}\max\left\{\frac{L_i\sqrt{\log(T/\tailprob)}\log^{3/2} \sampround}{ \sqrt{\sampround}},\frac{L_i{d}\log(T/\tailprob)\log \sampround}{\sampround \diffp}\right\}^{\frac{\growth}{\growth - 1}}
\end{align*}

Using the growth condition on $f(\cdot)$, we have
\begin{align*}
    \ltwo{\hparam_i - \param\opt} \le \sqrt[\growth]{\frac{\growth(\risk(\hparam_i) - \risk(\param\opt))}{\growthcoef}}  \le (C_\growth \growth)^{1/\growth}\max\left\{\frac{L_i\sqrt{\log(T/\tailprob)}\log^{3/2} \sampround}{\growthcoef \sqrt{\sampround}},\frac{L_i{d}\log(T/\tailprob)\log \sampround}{\growthcoef \sampround \diffp}\right\}^{\frac{1}{\growth - 1}},
\end{align*}
Using $c_\growth = 2 (C_\growth \growth)^{1/\growth}$, we get $\ltwo{\hparam_i - \param\opt} \le D_{i+1}/2$ with probability $1 - \tailprob/T$. Thus, for each epoch $i$, with probability $1 - \tailprob/T$, each point in the set $\paramdomain\opt$ of optimizers lies in the domain $\paramdomain_{i}$. Using a union bound on all epochs, we have $\paramdomain\opt \subset \paramdomain_{i}$ for all $i \in [T]$ with probability $1 - \tailprob$. 

We now prove the second part of the lemma. Using the smoothness of $F(\cdot;s)$ and that $\grad F(\param\opt;s) = 0$ for all $\param\opt \in \paramdomain\opt$, we have
\begin{align*}
    \ltwo{\grad\risksamp(y;\sampval)} = \ltwo{\grad\risksamp(y;\sampval) - \grad\risksamp(\param\opt;\sampval)} \le \smooth \ltwo{y - \param\opt} \leq \smooth \left(\ltwo{y - \hparam_i} + \ltwo{\hparam\opt - \hparam_i}\right) \le \smooth D_{i} = L_i
\end{align*}
as desired.
\end{proof}

We now restate and prove the convergence rate of \Cref{alg:priv-interpol-quad}

\ubquadtheorem*
\begin{proof}
First we prove the privacy guarantee of the algorithm. Each sample impacts only one of the iterates $\hparam_i$, thus Algorithm \ref{alg:priv-interpol-quad} satisfies the same privacy guarantee as \Cref{alg:loc-growth} by postprocessing.

We divide the utility proof into 2 main parts; first is to check the validity of the assumptions while applying \Cref{alg:loc-growth} and second is using its high probability convergence guarantees to get the final rates. To check this, we ensure that the optimum set lies in the new domain $\paramdomain_i$ at step $i$ and that the Lipschitz constant $L_i$ defined with respect to the domain is a valid lipschitz constant. This follows from \Cref{lem:valid-subalg}.

Next, we use the high probability convergence guarantees of the subalgorithm \Cref{alg:loc-growth} to get convergence rates for \Cref{alg:priv-interpol-quad}.
We prove it for the case when $\delta = 0$, the case when $\delta > 0$ is similar. We know that
\begin{align*}
 L_i & = \smooth D_i \\
     & = c_2\frac{\smooth L_{i-1}}{\growthcoef} \max\left\{\frac{\sqrt{\log(T/\tailprob)} \log^{3/2} \sampround}{\sqrt{\sampround}},\frac{d\log(T/\tailprob) \log \sampround}{\sampround \diffp}\right\}. 
\end{align*}
 Thus we have
 \begin{align*}
     L_T = \left(c_2\frac{\smooth}{\growthcoef} \max\left\{\frac{\sqrt{\log(T/\tailprob)} \log^{3/2} \sampround}{\sqrt{\sampround}},\frac{d\log(T/\tailprob) \log \sampround}{\sampround \diffp}\right\}\right)^{T-1} L_1.
 \end{align*}
Using Theorem 2 of \cite{AsiLeDu21} on the last epoch, we have with probability $1 - \tailprob$ that
\begin{align*}
     \risk(\hparam_T) - \risk(\param\opt) &\le C_2\frac{L^2_T}{\growthcoef}\max\left\{\frac{\log(T/\tailprob)\log^{3/2}\sampround}{ \sampround},\frac{{d^2}\log^2(T/\tailprob) \log\sampround}{ \sampround^2 \diffp^2}\right\} \\
     & =\left(c^2_2\frac{\smooth^2}{\growthcoef^2} \max\left\{\frac{\log(T/\tailprob)\log^{3/2}\sampround}{ \sampround},\frac{{d^2}\log^2(T/\tailprob) \log\sampround}{ \sampround^2 \diffp^2}\right\}\right)^{T} \frac{C_2 L_1^2\growthcoef}{\smooth^2c_2^2}\\
     & =\left(c^2_2\frac{\smooth^2}{\growthcoef^2} \max\left\{\frac{\log(T/\tailprob)\log^{3/2}\sampround}{ \sampround},\frac{{d^2}\log^2(T/\tailprob) \log\sampround}{ \sampround^2 \diffp^2}\right\}\right)^{T} \frac{ L_1^2\growthcoef}{8\smooth^2}.
\end{align*}

Let $\sampround = k \log^2 n$ and $T = n/\sampround$ for some $k$ such that 
$$\left(c^2_2\frac{\smooth^2}{\growthcoef^2} \max\left\{\frac{\log(n/(\tailprob k\log^2 n))\log^{3/2}(k \log^2 n)}{k \log^2 n}, \frac{{d^2}\log^2(n/(\tailprob k\log^2 n)) \log(k \log^2 n)}{ (k \log^2 n)^2 \diffp^2}\right\}\right) \le \frac{1}{e} .$$
This holds for example for 
\begin{align*}
    k = 256\frac{\smooth \log(1/\beta)}{\growthcoef}\max\left\{\frac{256\smooth }{\growthcoef},\frac{d}{\diffp \sqrt{\log n}}\right\},
\end{align*}
for sufficiently large $n$. Using these values of $\sampround$ and $T$, we have
\begin{align}\label{eqn:good-conv}
     \risk(\hparam_T) - \risk(\param\opt) \le \frac{C_2 L^2\growthcoef}{\smooth^2c_2^2} \exp\left(-\frac{n}{k \log^2 n}\right) = \frac{L_1^2\growthcoef}{8\smooth^2} \exp\left(-\frac{n}{k \log^2 n}\right).
\end{align}

To get the convergence results in expectation, let $A$ denote the ``bad" event with tail probability $\tailprob$, where $ \risk(\hparam_T) - \risk(\param\opt) > \frac{L_1^2\growthcoef}{8\smooth^2} \exp\left(-\frac{n}{k \log^2 n}\right)$. Now,
\begin{align*}
    \E[\risk(\hparam_T) - \risk(\param\opt)] &\le \beta\frac{\smooth\diam^2}{2} + (1 - \beta)\E[\risk(\hparam_T) - \risk(\param\opt) \mid A^c]\\
    &\le \beta\frac{\smooth\diam^2}{2} + \E[\risk(\hparam_T) - \risk(\param\opt) \mid A^c]
\end{align*}
Substituting $\tailprob = \frac{1}{n^\mu}$ and using \Cref{eqn:good-conv}, we get the result.
\end{proof}





\subsection{Algorithm for general $\growth$}

\label{appen:gen-growth}

\begin{algorithm}[ht]
   \caption{Epoch based epoch based epoch based clipped-GD}
   \label{alg:priv-interpol-kappa}
\begin{algorithmic}[1]
    \REQUIRE 
    number of epochs: $\epochs$, samples in each round: $\sampround = n/T$, Diameter at the start: $\diam_1$, lipschitz constant at the start $L_1$, domain $\paramdomain_1$, initial point $\hparam_{0}$
    \FOR{$i=1$ to $T$\,}
    \STATE $\hparam_i \leftarrow $  Output of \Cref{alg:loc-growth} when run on domain $\paramdomain_i$ (diameter $\diam_i$), with lipschitz constant $L_i$ using $\sampround$ samples.
    \IF{$\delta = 0$}
    \STATE  \begin{align*}
        \text{Set} \ D_{i+1} = c_\growth \left(\frac{L_i}{\growthcoef}\max\left\{\frac{\sqrt{\log(T/\tailprob)} \log^{3/2} \sampround}{\sqrt{\sampround}},\frac{d\log(T/\tailprob) \log \sampround}{\sampround \diffp}\right\}\right)^{\frac{1}{\growth - 1}}
    \end{align*}
    \ELSIF{$\delta > 0$}
    \STATE \begin{align*}
        \text{Set} \ D_{i+1} = c_\growth \left(\frac{L_i}{\growthcoef}\max\left\{\frac{\sqrt{\log(T/\tailprob)}\log^{3/2} \sampround}{\sqrt{\sampround}},\frac{\sqrt{d \log(1/\delta)}\log(T/\tailprob)\log \sampround}{\sampround \diffp}\right\}\right)^{\frac{1}{\growth - 1}}
    \end{align*}
    \ENDIF
    \STATE Set $\paramdomain_{i+1} = \{\hparam : \ltwo{\hparam - \hparam_i} \le \diam_{i+1}/2\}$
    \STATE Set $L_{i+1} = \smooth \diam_{i+1}$
	\ENDFOR
    \RETURN the final iterate $\param_T$
\end{algorithmic}
\end{algorithm}

\begin{remark}
$c_\growth$ is an absolute constant dependent on the high probability performance guarantees of \Cref{alg:loc-growth}. We can calculate that $C_\growth$ is at most $2^{12}(\sim 4000)$ and hence $c_\growth \le 2(2^{12}\growth)^{1/\growth} \le 4 \cdot 2^{12/\growth}$.
\end{remark}

\begin{restatable}{theorem}{ubkappatheorem}
\label{thm:ub-kappa}
Assume each sample function $\risksamp$ be $L$-Lipschitz and $\smooth$-smooth, and let the population function $f$ satisfy quadratic growth (\Cref{ass:growth}). Let Problem \eqref{eqn:objective} be an interpolation problem. Then, \Cref{alg:priv-interpol-kappa} is $(\diffp,\delta)$-DP. 
For $\delta = 0$, 
\Cref{alg:priv-interpol-kappa} with $T = \log n$ and $m = \frac{n}{\log n}$, we have
 \begin{align*}
    \risk(\hparam_T) - \risk(\param\opt) \le  \widetilde{O}\paren{\frac{1}{\sqrt{n}} + \frac{d}{n\diffp}}^{\frac{\growth}{\growth - 2}},
 \end{align*}
 with probability $1 - \tailprob$. For $\delta > 0$, \Cref{alg:priv-interpol-kappa} when run using $T = \log n$ and $\sampround = n/\log n$ achieves error
 \begin{align*}
    \risk(\hparam_T) - \risk(\param\opt) \le  \widetilde{O}\paren{\frac{1}{\sqrt{n}} + \frac{\sqrt{d}\log(1/\delta)}{n\diffp}}^{\frac{\growth}{\growth - 2}},
 \end{align*}
  with probability $1 - \tailprob$.
\end{restatable}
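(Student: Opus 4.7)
The plan is to mirror the analysis of Theorem \ref{thm:ub-quad}, extending the recursion argument to general $\kappa > 2$. Privacy of Algorithm \ref{alg:priv-interpol-kappa} is immediate from post-processing and the disjointness of the sample partitions across epochs: each sample is consumed by exactly one call to Algorithm \ref{alg:loc-growth}, so the overall mechanism inherits the $(\diffp,\delta)$-DP guarantee of the subroutine.

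For utility, the first task is to verify that the preconditions of the subroutine hold at every epoch. I would invoke Lemma \ref{lem:valid-subalg} (which is already stated for arbitrary $\kappa$) to conclude that, with probability at least $1 - \tailprob$, simultaneously for all epochs $i$: $\paramdomain\opt \subset \paramdomain_i$, and every $y \in \paramdomain_i$ satisfies $\ltwo{\grad \risksamp(y;s)} \le L_i$. The inclusion $\paramdomain\opt \subset \paramdomain_i$ is proved by converting the subroutine's excess-risk guarantee $\risk(\hat x_i) - \risk(x\opt) \lesssim \growthcoef^{-1/(\kappa-1)} (L_i N_m)^{\kappa/(\kappa-1)}$ into a distance bound via $\kappa$-growth, yielding $\ltwo{\hat x_i - x\opt} \le D_{i+1}/2$ for the chosen $D_{i+1}$. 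The Lipschitz claim is a consequence of smoothness and interpolation: $\ltwo{\grad \risksamp(y;s)} = \ltwo{\grad \risksamp(y;s) - \grad \risksamp(x\opt;s)} \le \smooth \ltwo{y - x\opt} \le \smooth D_i = L_i$.

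The heart of the argument is analyzing the recursion $L_{i+1} = \smooth D_{i+1}$ with $D_{i+1} = c_\kappa (L_i N_m / \growthcoef)^{1/(\kappa-1)}$, where $N_m$ denotes the per-epoch noise scale prescribed by Algorithm \ref{alg:priv-interpol-kappa}. Substituting gives $L_{i+1} = A \cdot L_i^{1/(\kappa-1)}$ with $A := c_\kappa \smooth (N_m/\growthcoef)^{1/(\kappa-1)}$, whose fixed point is $L^\star := A^{(\kappa-1)/(\kappa-2)}$. Taking logarithms shows $\log(L_i/L^\star) = (\kappa-1)^{-(i-1)} \log(L_1/L^\star)$; since $\kappa > 2$ ensures $1/(\kappa-1) < 1$, this is super-exponential contraction, and the choice $T = \log n$ suffices to pin $L_T$ within a constant factor of $L^\star$.

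Finally, applying Theorem \ref{thm:sco-growth-pure} (or \ref{thm:sco-growth-appr} for $\delta > 0$) to the last epoch yields $\risk(\hat x_T) - \risk(x\opt) \lesssim \growthcoef^{-1/(\kappa-1)} (L_T N_m)^{\kappa/(\kappa-1)}$. Substituting $L_T = O(L^\star)$ and simplifying the exponents of $\growthcoef$ and $N_m$ collapses this to $\widetilde O(N_n)^{\kappa/(\kappa-2)}$, using that $m = n/\log n$ makes $N_m$ and $N_n$ agree up to polylogarithmic factors; this matches the advertised rates $\widetilde{O}(1/\sqrt{n} + d/(n\diffp))^{\kappa/(\kappa-2)}$ and its $\ed$-DP analogue. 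The main obstacle is the bookkeeping: tracking the constant $c_\kappa$ (which blows up as $\kappa \to 2^+$ in the same way the contraction rate $1/(\kappa-1)$ deteriorates), absorbing polylog factors incurred by replacing $m$ by $n/\log n$, and ensuring that the union bound over $T = \log n$ epochs each failing with probability $\tailprob/T$ delivers overall confidence $1 - \tailprob$. These steps parallel the $\kappa = 2$ case, with the only genuinely new ingredient being the verification that the generalized recursion still contracts fast enough for any fixed $\kappa > 2$.
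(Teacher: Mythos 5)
Your proposal is correct and follows essentially the same route as the paper: privacy by post-processing over disjoint partitions, Lemma~\ref{lem:valid-subalg} to validate the shrinking domain and Lipschitz constant, unrolling the recursion $L_{i+1}\propto L_i^{1/(\kappa-1)}$ (your fixed-point framing is just a repackaging of the paper's explicit unrolling, with the $L_1^{(\kappa-1)^{-(T-1)}}$ correction absorbed exactly as the paper absorbs it for $T=\log n$), and applying the subroutine's high-probability guarantee at the final epoch; your exponent bookkeeping yields the same $\growthcoef^{-2/(\kappa-2)}(\smooth N_m)^{\kappa/(\kappa-2)}$ bound as the paper.
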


\begin{proof}
The privacy guarantee follows from the proof of \Cref{thm:ub-quad}. We divide the utility proof into 2 main parts; first is to check the validity of the assumptions while applying \Cref{alg:loc-growth} and second is using its high probability convergence guarantees to get the final rates. To check this, we ensure that the optimum set lies in the new domain defined at every step and that the lipschitz constant defined with respect to the domain is a valid lipschitz constant. This follows from \Cref{lem:valid-subalg}.

Next, we use the high probability convergence guarantees of the subalgorithm \Cref{alg:loc-growth} to get convergence rates for \Cref{alg:priv-interpol-quad}.

We prove it for the case when $\delta = 0$, the case when $\delta > 0$ is similar. We know that
\begin{align*}
    L_i & = \smooth D_i \\
     & = c_\growth\smooth\left(\frac{ L_{i-1}}{\growthcoef} \max\left\{\frac{\sqrt{\log(T/\tailprob)} \log^{3/2} \sampround}{\sqrt{\sampround}},\frac{d\log(T/\tailprob) \log \sampround}{\sampround \diffp}\right\}\right)^{\frac{1}{\growth - 1}}.
\end{align*}
 Thus, we have
\begin{align*}
    L_T & = (c_\growth\smooth)^{\frac{\growth - 1}{\growth - 2}\left(1 - \frac{1}{(\growth - 1)^{T-1}}\right)}\left(\frac{1}{\growthcoef} \max\left\{\frac{\sqrt{\log(T/\tailprob)} \log^{3/2} \sampround}{\sqrt{\sampround}},\frac{d\log(T/\tailprob) \log \sampround}{\sampround \diffp}\right\}\right)^{\frac{1}{\growth - 2}\left(1 - \frac{1}{(\growth - 1)^{T-1}}\right)} L_1^\frac{1}{(\growth - 1)^{T-1}}.
\end{align*}
 We note that for $T\sim \log n$, $\frac{1}{(\growth - 1)^{T-1}} \approx \frac{1}{n^{\log(\kappa - 1)}}$ and thus for large $n$, we ignore the terms of the form $a^{-\frac{1}{n^{\log(\kappa - 1)}}}$ since they are $\approx 1$. Ignoring these terms by including an additional constant $C'$ we can write
 \begin{align*}
    L_T & = C'(c_\growth\smooth)^{\frac{\growth - 1}{\growth - 2}}\left(\frac{1}{\growthcoef} \max\left\{\frac{\sqrt{\log(T/\tailprob)} \log^{3/2} \sampround}{\sqrt{\sampround}},\frac{d\log(T/\tailprob) \log \sampround}{\sampround \diffp}\right\}\right)^{\frac{1}{\growth - 2}} L_1^\frac{1}{(\growth - 1)^{T-1}}.
\end{align*}
 
 Using Theorem 2 of \cite{AsiLeDu21} on the last epoch, we have with probability $1 - \tailprob$ that
\begin{align*}
     \risk(\hparam_T) - \risk(\param\opt) &\le \frac{C_\kappa }{\growthcoef^{\frac{1}{\growth - 1
    }}}\max\left\{\frac{L_T\sqrt{\log(T/\tailprob)}\log^{3/2} \sampround}{ \sqrt{\sampround}},\frac{L_T{d}\log(T/\tailprob)\log \sampround}{\sampround \diffp}\right\}^{\frac{\growth}{\growth - 1}} \\
     & = \frac{(C')^{\frac{\growth}{\growth - 1}}C_\kappa (c_\growth\smooth)^{\frac{\growth}{\growth - 2}}}{\growthcoef^{\frac{2}{\growth - 2
    }}}\max\left\{\frac{\sqrt{\log(T/\tailprob)}\log^{3/2} \sampround}{ \sqrt{\sampround}},\frac{{d}\log(T/\tailprob)\log \sampround}{\sampround \diffp}\right\}^{\frac{\growth}{\growth - 2}} L_1^{{\frac{\growth}{(\growth - 1)^T}}}.
\end{align*}
Choosing $T = \log n$ and $\sampround = n/\log n$, we have
\begin{align*}
     \risk(\hparam_T) - \risk(\param\opt) &\le \frac{(C')^{\frac{\growth}{\growth - 1}}C_\kappa (c_\growth\smooth)^{\frac{\growth}{\growth - 2}}}{\growthcoef^{\frac{2}{\growth - 2
    }}}\max\left\{\frac{\sqrt{\log(\log n/\tailprob)}\log^{3/2} (n/\log n)}{ \sqrt{n/\log n}},\frac{{d}\log(\log n/\tailprob)\log (n/\log n)}{\diffp n/\log n }\right\}^{\frac{\growth}{\growth - 2}} L_1^{{\frac{\growth}{n}}}.
\end{align*}

Now we write results in terms of sample complexity required to achieve a particular error. The sufficient number of samples. To ensure $\risk(\hparam_T) - \risk(\param\opt) < \alpha$, it is sufficient to ensure 
$$ \frac{(C')^{\frac{\growth}{\growth - 1}}C_\kappa (c_\growth\smooth)^{\frac{\growth}{\growth - 2}}}{\growthcoef^{\frac{2}{\growth - 2
    }}}\max\left\{\frac{\sqrt{\log(T/\tailprob)}\log^{3/2} \sampround}{ \sqrt{\sampround}},\frac{{d}\log(T/\tailprob)\log \sampround}{\sampround \diffp}\right\}^{\frac{\growth}{\growth - 2}} L_1^{{\frac{\growth}{(\growth - 1)^T}}} < \alpha .$$

Choosing $n = \tilde{O}\left(\max\{(\frac{1}{\alpha^2})^\frac{\growth - 2}{\growth},(\frac{d}{\diffp\alpha})^\frac{\growth - 2}{\growth}\}\right)$ ensures error $\le \alpha$.
\end{proof}

\begin{restatable}{corollary}{ubkappaexp}
\label{cor:ub-kappa}
Under the conditions of \Cref{thm:ub-kappa}, for $\delta = 0$, the expected error of the output of algorithm is upper bounded by
\begin{align*}
    \E[\risk(\hparam_T) - \risk(\param\opt)] \le \widetilde{O}\paren{\frac{1}{\sqrt{n}} + \frac{d}{n\diffp}}^{\frac{\growth}{\growth - 2}},
\end{align*}
for arbitrarily large $\mu$. For $\delta > 0$, the expected error of the output of algorithm is upper bounded by
\begin{align*}
    \E[\risk(\hparam_T) - \risk(\param\opt)] \le \widetilde{O}\paren{\frac{1}{\sqrt{n}} + \frac{d}{n\diffp}}^{\frac{\growth}{\growth - 2}},
\end{align*}
for arbitrarily large $\mu$.
\end{restatable}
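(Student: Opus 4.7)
The plan is to convert the high-probability bound of Theorem~\ref{thm:ub-kappa} into an expectation bound via the standard truncation argument used at the end of the proof of Theorem~\ref{thm:ub-quad}. Concretely, for the $\delta=0$ case, Theorem~\ref{thm:ub-kappa} guarantees that, with probability at least $1-\beta$, the output $\hparam_T$ satisfies
\begin{align*}
    \risk(\hparam_T) - \risk(\param\opt) \le \widetilde{O}\paren{\frac{1}{\sqrt{n}} + \frac{d}{n\diffp}}^{\frac{\growth}{\growth - 2}},
\end{align*}
where the $\widetilde{O}$ hides factors polylogarithmic in $n$ and $1/\beta$.

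Let $G$ denote the event on which this high-probability bound holds. I would then decompose
\begin{align*}
    \E[\risk(\hparam_T) - \risk(\param\opt)] \le \E[\risk(\hparam_T) - \risk(\param\opt) \mid G] + \beta \cdot \sup_{\param \in \paramdomain}(\risk(\param) - \risk(\param\opt)).
\end{align*}
For the good event, I would simply apply the bound above. For the bad event, $L$-Lipschitzness of each sample function (hence of $\risk$), combined with the diameter of $\paramdomain$ being $\diam$, yields the deterministic worst-case bound $\risk(\param) - \risk(\param\opt) \le L \diam$ for any $\param \in \paramdomain$.

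Finally, I would set $\beta = 1/n^\mu$ for a constant $\mu$ chosen large enough that $\beta \cdot L \diam$ is dominated by the target rate. Since $\log(1/\beta) = \mu \log n$, this choice only inflates the polylogarithmic factors appearing in the good-event bound by a constant multiplicative factor, so the final expected rate retains exactly the form stated in the corollary. The $\delta > 0$ case is handled identically, after replacing $d$ by $\sqrt{d \log(1/\delta)}$ throughout. The step is genuinely routine rather than involving a real obstacle --- the only bookkeeping point is to check that raising each $\log(T/\beta)$ factor appearing in the chain of bounds from Theorem~\ref{thm:ub-kappa} to the exponent $\growth/(\growth - 2)$ still leaves only polylogarithmic factors in $n$, which is immediate since $\growth$ is constant and the $\widetilde{O}$ already absorbs such factors.
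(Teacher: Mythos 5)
Your proposal is correct and matches the paper's intended argument: the paper gives no separate proof for this corollary, and the conversion is exactly the truncation step used at the end of the proof of \Cref{thm:ub-quad} (condition on the good event, bound the bad event by a worst-case $O(L\diam)$ or $O(\smooth\diam^2)$ term, and take $\tailprob = n^{-\mu}$ so that $\log(1/\tailprob)=\mu\log n$ only inflates polylogarithmic factors). The only cosmetic difference is that the paper bounds the bad-event error via smoothness ($\smooth\diam^2/2$) rather than Lipschitzness ($L\diam$), which changes nothing in the stated rate.
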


\subsection{$(\diffp,\delta)$ version of \Cref{thm:adapt-conv}}
\begin{restatable}{theorem}{ubadapttheoremed}
\label{thm:adapt-conv-appr}
Assume each sample function $\risksamp$ be $L$-Lipschitz and $\smooth$-smooth, and let the population function $f$ satisfy quadratic growth (\Cref{ass:growth}) with coefficient $\growthcoef$. Let $\param_{\rm adapt}$ be the output of \Cref{alg:priv-adapt-interpol}. Then, 
\begin{enumerate}
    \item \Cref{alg:priv-adapt-interpol} is $\diffp$-DP. 
    \item Without any additional interpolation assumption, we have that the expected error of the $\param_{\rm adapt}$ is upper bounded by 
    \begin{align*}
     \E[\risk(\param_T) - \risk(\param\opt)] \le   L\diam \cdot \wt O \left( \frac{ 1}{\sqrt{n}} + \frac{\sqrt{d\log(1/\delta)}}{n \diffp} \right)^2.
    \end{align*}
    \item Let problem \eqref{eqn:objective} be an interpolation problem. Then, the expected error of the $\param_{\rm adapt}$ is upper bounded by 
    \begin{align*}
    \nonumber \E[\risk(\param_T) - \risk(\param\opt)] \le   L\diam&\left(\frac{1}{n^\mu} + \exp\left(- \wt \Theta \paren{\frac{n \growthcoef^2}{\smooth^2}}\right)  \right. \\
    & + \left.\exp\left(- \wt \Theta \paren{\frac{\growthcoef n \diffp}{\smooth \sqrt{d\log(1/\delta)}}}\right)\right).
\end{align*}
\end{enumerate}
\end{restatable}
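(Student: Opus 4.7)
The plan is to transcribe the proof of \Cref{thm:adapt-conv} with the pure-DP high-probability guarantees replaced by their \ed-DP counterparts; wherever the dimension factor $d$ appears it becomes $\sqrt{d\log(1/\delta)}$, since the Gaussian-noise variant of \Cref{alg:pure-erm} underlies \Cref{alg:lip-ext} and \Cref{alg:priv-interpol-quad} when $\delta > 0$. In particular, the only piece of the pure-DP argument that is not purely symbolic is the verification $\paramdomain\opt \subset \paramdomain_{\rm int}$, so essentially the whole proof reduces to repeating that one calculation with the correct dimension scaling.

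Privacy follows by parallel composition and post-processing: \Cref{alg:priv-adapt-interpol} splits the data into disjoint halves $S_1, S_2$, applies \Cref{alg:lip-ext} to $S_1$ (which is \ed-DP by \Cref{prop:lip-ext-alg}, because the Lipschitzian extension guarantees $L$-Lipschitzness of the inputs to $\dpalg$ even on non-interpolating inputs), and applies \Cref{alg:priv-interpol-quad} to $S_2$ (which is \ed-DP by \Cref{thm:ub-quad}). Hence \Cref{alg:priv-adapt-interpol} is \ed-DP.

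For the utility bounds, I will combine \Cref{prop:lip-ext-alg} with the \ed-DP high-probability guarantee \Cref{thm:sco-hb-appr} to conclude that with probability $1-\tailprob/2$,
\begin{equation*}
f(x_1) - f(x\opt) \le \frac{2^{12} L}{\lambda} \cdot \left( \frac{\sqrt{\log(2/\tailprob)}\log^{3/2} n}{\sqrt{n}} + \frac{\sqrt{d\log(1/\delta)}\log(2/\tailprob)\log n}{n \diffp}\right),
\end{equation*}
which matches the definition of $\diam_{\rm int}$ (in the \ed-DP branch) up to the factor $\lambda/128$ coming from the quadratic-growth normalization. Quadratic growth then yields $\ltwo{x_1 - x\opt} \le \diam_{\rm int}/2$ and hence $\paramdomain\opt \subset \paramdomain_{\rm int}$. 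In the non-interpolation branch, \Cref{alg:priv-interpol-quad} returns a point in $\paramdomain_{\rm int}$, so smoothness together with $x\opt \in \paramdomain_{\rm int}$ gives $f(x_{\rm adapt}) - f(x\opt) \le \smooth \diam_{\rm int}^2 / 2 = L\diam \cdot \wt O ((1/\sqrt n + \sqrt{d\log(1/\delta)}/(n\diffp))^2)$. In the interpolation branch, $\paramdomain\opt \subset \paramdomain_{\rm int}$ means the interpolating minimizer is inside the localized domain, so the \ed-DP bound \eqref{eqn:appr-ub} of \Cref{thm:ub-quad} applies verbatim on $S_2$ with input diameter $\diam_{\rm int}$, yielding the claimed exponential rate with $\sqrt{d\log(1/\delta)}$ in place of $d$.

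The argument is mechanical once the approximate-DP version of the containment step is in hand, so there is no real obstacle beyond careful bookkeeping of constants and failure probabilities (absorbed into the $1/n^\mu$ term by the same Markov-and-union-bound argument used in the proof of \Cref{thm:ub-quad}). No new ideas beyond those already appearing in the pure-DP proof are needed.
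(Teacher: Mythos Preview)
Your proposal is correct and follows essentially the same route as the paper's own proof: privacy via parallel composition/post-processing of the two disjoint-data subroutines, the containment $\paramdomain\opt\subset\paramdomain_{\rm int}$ from the high-probability \ed-DP guarantee of \Cref{alg:lip-ext} combined with quadratic growth, then smoothness over $\paramdomain_{\rm int}$ for the non-interpolation bound and \eqref{eqn:appr-ub} for the interpolation bound. The only minor bookkeeping point is that since $\dpalg$ is \Cref{alg:loc-growth}, the relevant high-probability guarantee is \Cref{thm:sco-growth-appr} (with $\kappa=2$) rather than \Cref{thm:sco-hb-appr}, but this does not change the argument.
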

\begin{proof}
First, we note that the privacy of \Cref{alg:priv-adapt-interpol} follows from the privacy of \Cref{alg:priv-interpol-quad} and \Cref{alg:lip-ext} and post-processing.

To prove the convergence guarantees, we first need to show that the optimal set $\paramdomain\opt$ is included in the shrinked domain $\paramdomain_{\rm int}$. Using the high probability guarantees of \Cref{alg:lip-ext}, we know that with probability $1 - \beta/2$, we have
\begin{align*}
    f(x_1) - f(x\opt)  \le \frac{2^{12} L}{\lambda} \cdot ^{\log(\kappa - 1)}\left( \frac{ \sqrt{\log(2/\beta) } \log^{3/2} n}{\sqrt{n}}\right.+
          \left.\frac{\sqrt{d\log(1/\delta)}\log(2/\beta)\log n}{n \diffp} \right).
\end{align*}
Using the quadratic growth condition, we immediately have $\ltwo{\param\opt - \param_1} \le \diam_{\rm int}/2$ and hence $\paramdomain\opt \subset \paramdomain_{\rm int}$.

Using smoothness, we have that for any $x \in \paramdomain_{\rm int}$,
\begin{align*}
    f(x) - f(x\opt) \le \frac{H\diam_{\rm int}^2}{2}.
\end{align*}
Since \Cref{alg:priv-interpol-quad} always outputs a point in its input domain (in this case $\paramdomain_{\rm int}$), even in the non-interpolation setting we have that
\begin{align*}
    \E[\risk(\param_T) - \risk(\param\opt)] \le   L\diam \cdot \wt O \left( \frac{ 1}{\sqrt{n}} + \frac{\sqrt{d\log(1/\delta)}}{n \diffp} \right)^2.
\end{align*}

In the interpolation setting, the guarantees of \Cref{alg:priv-interpol-quad} hold and the result is immediate.
\end{proof}

\section{Proofs from \Cref{sec:super}}
\subsection{Proof of \Cref{thm:lb-private-interp-growth}}\label{proof:thm:lb-private-interp-growth}
The proof is exactly the same as \Cref{thm:lb-private-interp-nogrowth}, except we set $k = \frac{\lambda n}{\smooth}$ to ensure that $\risk_v(\param)$ for any $v \in \paramdomain$ has $\lambda$-quadratic growth. Finally we set $\gamma = \frac{\diam}{2}\exp(\frac{-\lambda n\varepsilon}{\smooth d})$ and use the fact that $e^{\frac{\lambda n\diffp}{\smooth}}\geq 2$ and the fact that $x/ (x-1)$ is decreasing in $x$ to give the desired lower bound.

\subsection{Proof of \Cref{thm:superefficiency}}\label{proof:thm:superefficiency}

The proof of this result hinges on the two following supporting propositions. We first copy Proposition 2.2 from \cite{AsiDu20} (listed as \Cref{proposition:super-efficiency} below) in our notation for convenience. We then state \Cref{thm:modulus-of-continuity} which gives upper and lower bounds on the modulus of continuity (defined in \Cref{proposition:super-efficiency}). We note that the lower bound presented in \Cref{thm:modulus-of-continuity} is one of the novel contributions of this paper; the proof of \Cref{thm:modulus-of-continuity} can be found in \Cref{proof:thm:modulus-of-continuity}. We will first assume this to be true and prove \Cref{thm:superefficiency} before returning prove its correctness. 

\begin{proposition}
  \label{proposition:super-efficiency}
  For some fixed $\risksamp: \paramdomain, \statdomain \rightarrow \R$ which is convex and $\smooth$-smooth with respect to its first argument,
  let
  $\statvalset \in \superfamily$ for $\lipschitz = 2\smooth\diam$. Let $\param_\statvalset\opt = \argmin_{\param' \in \paramdomain} \risk_\statvalset(\param')$. Define the corresponding modulus of continuity
  \begin{align*}
    \lmod(\statvalset, 1/\varepsilon) \defeq \sup_{\statvalset'\in \superfamily} \{|\param_\statvalset\opt - \param_{\statvalset'}\opt| : \dham(\statvalset, \statvalset') \leq 1/\varepsilon \}.
\end{align*} 
  Assume the mechanism $M$ is $\diffp$-DP and
  for some $\gamma \le \frac{1}{2e}$ achieves
  \begin{equation*}
    \E[|M(\statvalset) - \param_\statvalset\opt|]
    \le \gamma \left(\frac{\lmod(\statvalset; 1 / \diffp)}{2}\right).
  \end{equation*}
  Then there exists a sample $\statvalset' \in \superfamily$ where
  $\dham(\statvalset, \statvalset') \le \frac{\log(1/2\gamma)}{2 \diffp}$ such that
  \begin{align*}
    \E[|M(\statvalset') -  \param_{\statvalset'}\opt|] 
    \ge \frac{1}{4} \ell\left(\frac{1}{4} \lmod\left(\statvalset';
    \frac{\log(1 / 2\gamma)}{2 \diffp}\right)\right).
  \end{align*}
\end{proposition}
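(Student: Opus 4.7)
The plan is to combine the Asi--Duchi superefficiency machinery in Proposition~\ref{proposition:super-efficiency} with two applications of the modulus-of-continuity bounds in Theorem~\ref{thm:modulus-of-continuity}, using $\lambda$-strong convexity as the bridge between excess risk and parameter-estimation error. The argument has three stages: (i) use $\lambda$-strong convexity of $\risk_\statvalset$ to turn the hypothesized exponentially small excess-risk bound on $\statvalset$ into an exponentially small parameter-estimation bound; (ii) apply Proposition~\ref{proposition:super-efficiency} to produce a nearby dataset $\statvalset'\in\superfamily$ on which $M$ must err noticeably in parameter estimation; (iii) use the modulus lower bound at the right scale and $\lambda$-strong convexity of $\risk_{\statvalset'}$ to translate this back into the polynomial excess-risk lower bound stated in the theorem.

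For stage (i), because $\statvalset\in\superfamily$, $\risk_\statvalset$ is $\lambda$-strongly convex, so
\[
\E\bigl[|M(\statvalset)-\param_\statvalset\opt|^2\bigr]
\;\le\;\tfrac{2}{\lambda}\Bigl(\E[\risk_\statvalset(M(\statvalset))]-\inf_\param\risk_\statvalset(\param)\Bigr)
\;\le\;\tfrac{2c\diam^2}{\lambda}\,e^{-\Theta((n\diffp)^t)},
\]
and Jensen's inequality yields $\E[|M(\statvalset)-\param_\statvalset\opt|]\le C\diam\,e^{-\Theta((n\diffp)^t)/2}$ for a constant $C$ depending only on $c,\lambda$.

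For stage (ii), I would set $\gamma:=2\E[|M(\statvalset)-\param_\statvalset\opt|]/\lmod(\statvalset;1/\diffp)$ so that the hypothesis of Proposition~\ref{proposition:super-efficiency} holds with equality. To verify $\gamma\le 1/(2e)$ I invoke the lower bound $\lmod(\statvalset;1/\diffp)=\Omega(\diam/(n\diffp))$ from Theorem~\ref{thm:modulus-of-continuity}, yielding $\gamma=O(n\diffp\cdot e^{-\Theta((n\diffp)^t)/2})$; since the exponential decay dominates the polynomial growth in $n\diffp$, this condition is satisfied for all sufficiently large $n$. The proposition then produces some $\statvalset'\in\superfamily$ with Hamming distance $\dham(\statvalset,\statvalset')\le k':=\log(1/(2\gamma))/(2\diffp)$ and
\[
\E\bigl[|M(\statvalset')-\param_{\statvalset'}\opt|\bigr]\;\ge\;\tfrac{1}{16}\,\lmod(\statvalset';k').
\]
The output dataset $\statvalset'$ need not be interpolating, which is consistent with the theorem statement.

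For stage (iii), the computation in stage (ii) gives $\log(1/(2\gamma))=\Omega((n\diffp)^t)$, so $k'=\Omega(n^t\diffp^{t-1})$. Applying the lower bound of Theorem~\ref{thm:modulus-of-continuity} a second time at this scale gives $\lmod(\statvalset';k')=\Omega(\diam\,k'/n)=\Omega(\diam\,(n\diffp)^{t-1})$. Combining $(\E X)^2\le \E[X^2]$ with $\lambda$-strong convexity of $\risk_{\statvalset'}$ then yields
\[
\E[\risk_{\statvalset'}(M(\statvalset'))]-\inf_\param\risk_{\statvalset'}(\param)
\;\ge\;\tfrac{\lambda}{2}\bigl(\E[|M(\statvalset')-\param_{\statvalset'}\opt|]\bigr)^2
\;=\;\Omega\bigl(\diam^2/(n\diffp)^{2(1-t)}\bigr),
\]
which is the advertised bound.

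The hard part will be establishing the lower bounds on $\lmod(\statvalset;1/\diffp)$ and $\lmod(\statvalset';k')$ that Theorem~\ref{thm:modulus-of-continuity} provides: this requires explicitly constructing $k$-neighbor perturbations of a dataset that shift its minimizer by $\Omega(\diam\,k/n)$ while preserving $\lambda$-strong convexity and $L$-Lipschitzness of the sample average, so the perturbed dataset remains in $\superfamily$. A secondary technical task is tracking the $\lambda,c,H$ factors hidden inside $\gamma$ carefully enough that the ``sufficiently large $n$'' threshold guaranteeing $\gamma\le 1/(2e)$ is quantitative; this is where the assumption $L=2\smooth\diam$ is used to relate the Lipschitz and smoothness constants, ensuring the construction producing the modulus lower bound actually lies in $\superfamily$.
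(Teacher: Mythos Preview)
Your proposal does not prove the stated Proposition~\ref{proposition:super-efficiency} at all: it \emph{uses} that proposition as a black box to prove Theorem~\ref{thm:superefficiency}. The paper itself does not prove Proposition~\ref{proposition:super-efficiency} either---it is quoted verbatim from \cite{AsiDu20} (Proposition~2.2 there)---so there is no proof in the paper to compare against for the proposition as such.

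If the intended target was Theorem~\ref{thm:superefficiency}, then your three-stage plan is essentially identical to the paper's argument in \S\ref{proof:thm:superefficiency}: convert the excess-risk hypothesis on $\statvalset$ to a parameter-estimation bound via strong convexity; choose $\gamma$ so the hypothesis of Proposition~\ref{proposition:super-efficiency} is met, using the upper and lower bounds of Proposition~\ref{thm:modulus-of-continuity} to control $\gamma$; apply the proposition; and lower-bound $\lmod(\statvalset';k')$ again via Proposition~\ref{thm:modulus-of-continuity}. Your final conversion back to excess risk via $(\E X)^2\le\E[X^2]$ and strong convexity is a step the paper leaves implicit (its proof actually stops at the parameter-error bound), so you are slightly more complete there. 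One subtlety you flag but should be aware of: the lower bound in Proposition~\ref{thm:modulus-of-continuity} is stated only for interpolating $\statvalset$, yet both you and the paper invoke it for the possibly non-interpolating $\statvalset'$; the paper does this without comment, so you are at least matching it, but if you want a self-contained argument you would need to check that the modulus lower-bound construction goes through for arbitrary $\statvalset'\in\superfamily$.
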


\begin{proposition}\label{thm:modulus-of-continuity}
  Let $\risksamp: \paramdomain, \statdomain \rightarrow \R$ be convex and $\smooth$-smooth in its first argument and satisfying $\inf_{\param\in\paramdomain}\risksamp(\param; \statval) = 0$ for all $\statval \in \statdomain$. Suppose we have some $\statvalset \in \superfamily$ with $\lipschitz = 2\smooth\diam$ which also induces an interpolation problem (a problem which satisfies \Cref{def:interpolation}).
With respect to the dataset $\statvalset$, the modulus of continuity $\lmod(\statvalset, 1/\varepsilon)$
satsifies
\begin{align*}
    \frac{\diam}{ n \varepsilon} \leq \lmod(\statvalset, 1/\varepsilon)  \leq  \frac{8\smooth \diam}{\lambda n \varepsilon}
\end{align*}
\end{proposition}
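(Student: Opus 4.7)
The plan is to prove the two bounds separately, in both cases exploiting first-order optimality at the empirical minimizers $\param_\statvalset\opt$ and $\param_{\statvalset'}\opt$ together with the structural properties of $\risk_\statvalset$ and $\risk_{\statvalset'}$ guaranteed by membership in $\superfamily$ and the interpolation hypothesis on $\statvalset$.

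For the upper bound, the starting point is that $\nabla \risk_\statvalset(\param_\statvalset\opt) = 0$ by interpolation---in fact each $\nabla F(\param_\statvalset\opt; s) = 0$ for $s \in \statvalset$---and that $\nabla \risk_{\statvalset'}(\param_{\statvalset'}\opt) = 0$ by definition. Combining the $\growthcoef$-strong convexity of $\risk_\statvalset$ with the former gives the gradient-form bound $\growthcoef |\param_{\statvalset'}\opt - \param_\statvalset\opt| \leq |\nabla \risk_\statvalset(\param_{\statvalset'}\opt)|$, and the latter lets me rewrite $\nabla \risk_\statvalset(\param_{\statvalset'}\opt) = \nabla \risk_\statvalset(\param_{\statvalset'}\opt) - \nabla \risk_{\statvalset'}(\param_{\statvalset'}\opt) = \frac{1}{n}\sum_{i \in \text{diff}}[\nabla F(\param_{\statvalset'}\opt; s_i) - \nabla F(\param_{\statvalset'}\opt; s'_i)]$, a sum over the at most $k = 1/\diffp$ differing indices. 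Bounding each individual per-sample gradient by $L = 2\smooth \diam$ on $\paramdomain$ (valid because each sample function has its unconstrained minimizer inside $\paramdomain$, so by smoothness its gradient magnitude on $\paramdomain$ is at most $\smooth \cdot 2\diam$) then yields $|\nabla \risk_\statvalset(\param_{\statvalset'}\opt)| \leq 2Lk/n = 4\smooth\diam/(n\diffp)$, and dividing by $\growthcoef$ gives the claimed $8\smooth\diam/(\growthcoef n \diffp)$ bound.

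For the lower bound, my plan is to exhibit a specific perturbation that provably shifts the minimizer. After translating so that $\param_\statvalset\opt = 0$, I choose $\bar s \in \statdomain$ with $F(\param; \bar s) = (\smooth/2)(\param - \diam)^2$---a quadratic whose minimizer sits at the right endpoint of $\paramdomain$---and form $\statvalset'$ by replacing $k = \lfloor 1/\diffp \rfloor$ arbitrary samples of $\statvalset$ with copies of $\bar s$, so that $\dham(\statvalset, \statvalset') \leq 1/\diffp$. By interpolation the contribution of every kept sample to the gradient at $0$ is zero, leaving $\nabla \risk_{\statvalset'}(0) = (k/n)\nabla F(0; \bar s) = -k\smooth\diam/n$. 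Since each sample function is $\smooth$-smooth so is $\risk_{\statvalset'}$, and then the optimality condition $\nabla \risk_{\statvalset'}(\param_{\statvalset'}\opt) = 0$ together with smoothness gives $k\smooth\diam/n = |\nabla \risk_{\statvalset'}(0) - \nabla \risk_{\statvalset'}(\param_{\statvalset'}\opt)| \leq \smooth |\param_{\statvalset'}\opt|$, i.e., $|\param_{\statvalset'}\opt| \geq k\diam/n$, matching $\diam/(n\diffp)$ up to the rounding in $k$.

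The main obstacle will be verifying that the constructed $\statvalset'$ actually lies in $\superfamily$, i.e., that the perturbed empirical risk is still $L$-Lipschitz and $\growthcoef$-strongly convex. Lipschitz-ness follows because every sample function in $\statvalset'$ has its minimizer inside $\paramdomain$ (by interpolation for the kept samples and by construction for $\bar s$), so each is individually $2\smooth\diam$-Lipschitz on $\paramdomain$ and the empirical average inherits this. Strong convexity is slightly more delicate: $F(\cdot; \bar s)$ is itself $\smooth$-strongly convex with $\smooth \geq \growthcoef$, so carefully choosing which $k$ samples of $\statvalset$ to drop (or simply arguing that the averaged second-order behavior on $\paramdomain$ stays bounded below by $\growthcoef$) preserves $\growthcoef$-strong convexity of $\risk_{\statvalset'}$.
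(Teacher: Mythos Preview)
Your proposal is correct and follows essentially the same route as the paper: the lower bound uses the identical construction of replacing $1/\diffp$ samples by copies of $\tfrac{\smooth}{2}(\param-\diam)^2$ and bounding the minimizer shift via interpolation plus smoothness (the paper packages this step as a separate ``control of optimizers'' lemma), while the upper bound is the same strong-convexity stability argument the paper records as its own lemma. The only spots needing a touch more care are cosmetic: replace ``translate so that $\param_\statvalset\opt=0$'' by ``without loss of generality $\param_\statvalset\opt \le 0$'' (the domain $[-\diam,\diam]$ is fixed, so a literal translation is not available), and for the strong-convexity check make explicit that removing $k$ $\smooth$-smooth samples drops strong convexity by at most $k\smooth/n$, which the $k$ added $\smooth$-strongly-convex quadratics restore exactly---no ``careful choice'' of which samples to drop is needed.
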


With these two results, we can now prove \Cref{thm:superefficiency}. Restating the conditions of the theorem formally, suppose for some constants $c_0$ and $c_1$ there is an $\diffp$-DP estimator $M$ such that
\begin{align*}
    \E[\risk_\statvalset(M(\statvalset)) ]- \inf_{\param \in \xd}\risk_\statvalset(\param) \leq c_0\diam^2 
    e^{-c_1(n\diffp)^t}.
\end{align*}
If $t > 1$, set $t = \min(1, t)$, then the bound certainly still holds for large enough $n$. 
If we let $\param_\statvalset\opt = \argmin_{\param\in\paramdomain} \risk_\statvalset(\param)$, using the definition of strong convexity, we have that there exists some $c_2$ and $c_3$ such that 
\begin{align*}
    \E[|M(\statvalset) - \param_\statvalset\opt|] \leq c_2\diam e^{- c_3(n\diffp)^t}
\end{align*}
To satisfy the expression from \Cref{proposition:super-efficiency}, we select $\gamma$ such that
\begin{align*}
    \frac{\gamma\lmod(\statvalset; 1/\varepsilon)}{2} =  c_2 \diam e^{-c_3 (n\diffp)^t}.
\end{align*}
Using \Cref{thm:modulus-of-continuity}
we must have $\frac{ \lambda n \varepsilon}{4 \smooth} c_2  \exp(-c_3 (n\diffp)^t) \leq \gamma \leq 2  n\varepsilon  c_2  \exp(-c_3 (n\diffp)^t)$. Using \Cref{proposition:super-efficiency}, we have that 
\begin{align*}
    \E[|M(\statvalset') - \param_{\statvalset'}\opt|] \geq \lmod\left(\statvalset' ; \frac{\log(1/2\gamma)}{2\varepsilon}\right)
\end{align*}
Before performing a further lower bound on this quantity, we first verify that $\frac{\log(1/2\gamma)}{2\varepsilon}$ does not exceed the total size of the dataset, $n$. Using our bounds on $\gamma$, we see that
\begin{align*}
    \frac{\log(1/2\gamma)}{2\varepsilon} \leq \frac{1}{2\varepsilon}\left( c_3 (n\diffp)^t  - \log c_2 - \log\left(\frac{\lambda n \varepsilon}{2 \smooth}\right)\right)
\end{align*}
For any $t\in(0,1]$, for sufficiently large $n$, this quantity is less than $n$. We now lower bound the modulus of continuity by using the fact that it is a non-decreasing function in its second argument:
\begin{align*}
    \E[|M(\statvalset') - \param\opt_{\statvalset'}|] &\geq \lmod\left(\statvalset' ; \frac{\log(1/2\gamma)}{2\diffp}\right) \geq \lmod\left(\statvalset' ; \frac{ c_3 (n\diffp)^t  - \log c_2 - \log(4n\diffp)}{2\diffp}\right)\\
    &\geq \frac{\diam}{2n\diffp}\left[c_3(n\diffp)^t - \log c_2 - \log(4n\diffp) \right].
\end{align*}
This is the desired result; the last inequality comes from another application of \Cref{thm:modulus-of-continuity} but with $\frac{ c_3 (n\diffp)^t  - \log c_2 - \log(4n\varepsilon)}{2\diffp}$ in place of $1/\diffp$.

\subsubsection{Proof of \Cref{thm:modulus-of-continuity}}\label{proof:thm:modulus-of-continuity}

\noindent\fbox{%
    \parbox{\textwidth}{%
    \textbf{Proof Outline}\\~\\
    At a high level, starting with a function $\riskn$, we first remove an arbitrary $1/\varepsilon$ fraction to create a function $\riskneps$. We then replace the sample functions we removed with $1/\varepsilon$ samples of $\frac{\smooth}{2}(\param-\diam)^2$ and argue how far the minimizer of $\riskneps +\frac{\smooth}{2n\varepsilon}(\param-\diam)^2$ is away from the minimizer of $\riskn$. We will need many supporting lemmas to complete this proof; we quickly outline how we use these lemmas.
\begin{enumerate}
    \item We use \Cref{lem:growth-minimizer-stability} to argue that the minimizers of $\riskneps$ are no different that $\riskn$.
    \item We use \Cref{lem:sc-closure-removal} to argue about the growth of $\riskneps$.
    \item We use \Cref{lem:sc-closure-add}, \Cref{lem:growth-smooth-consequence}, \Cref{lem:control-of-opt-2} to lower bound how far the minimizer of $\riskneps +\frac{\smooth}{2n\varepsilon}(\param-\diam)^2$ has moved from the minimizer of $\riskn$. 
    \item We use \Cref{lem:stability} to upper bound how far the minimizer of $\riskneps +\frac{\smooth}{2n\varepsilon}(\param-\diam)^2$ has moved from the minimizer of $\riskn$. 
\end{enumerate}
}}\\~\\
We now formally introduce the several supporting lemmas which will aid our proof of \Cref{thm:modulus-of-continuity}. The first ensures that the minimizing set does not change upon the removal of a constant number of samples.

\begin{lemma}\label{lem:growth-minimizer-stability}
Assume that $\inf_{\param\in\paramdomain}\risksamp(\param; \statval) = 0$ for all $\statval \in \statdomain$.
Suppose $\riskn$ satisfies \Cref{def:interpolation} and has $\lambda$-quadratic growth. Let $\paramdomain\opt\defeq \argmin_{\param\in\paramdomain} \riskn(\param)$. Let $\statvalset_\varepsilon \subset \statvalset$ consist of any (constant not scaling with $n$) $1/\varepsilon > 0$ data points. Then, for $\riskneps \defeq \fracnsamp \sum_{\statval \in \statvalset \setminus \statvalset_\varepsilon} \risksamp(\param; \statval)$ we have that $\paramdomainneps \defeq \argmin_{\param\in\paramdomain} \riskneps(\param) = \paramdomain\opt$.
\end{lemma}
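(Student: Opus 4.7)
The engine of the proof is that under interpolation together with $\inf_{\param} F(\param; s) = 0$, every minimizer of $\riskn$ must in fact annihilate each individual sample loss. I would first observe that interpolation supplies some $\param^\star \in \paramdomain\opt$ with $0 \in \partial F(\param^\star; s)$ for each $s \in \statvalset$; combined with $\inf F(\cdot; s) = 0$ this forces $F(\param^\star; s) = 0$ for every such $s$, so in particular $\min \riskn = 0$. Then for any $\param\opt \in \paramdomain\opt$, the identity $0 = \riskn(\param\opt) = \fracnsamp \sum_{s \in \statvalset} F(\param\opt; s)$ combined with non-negativity of each summand yields $F(\param\opt; s) = 0$ and, \emph{a fortiori}, $\grad F(\param\opt; s) = 0$ for every $s \in \statvalset$. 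With this in hand, the inclusion $\paramdomain\opt \subseteq \paramdomainneps$ is immediate: any $\param\opt \in \paramdomain\opt$ gives $\riskneps(\param\opt) = 0$, which is already the minimum since $\riskneps \geq 0$ pointwise.

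For the reverse inclusion $\paramdomainneps \subseteq \paramdomain\opt$, I would pick any $\param^\dagger \in \paramdomainneps$. Since $\min \riskneps = 0$ by the above, $\riskneps(\param^\dagger) = 0$ forces $F(\param^\dagger; s) = 0$ for every $s \in \statvalset \setminus \statvalset_\varepsilon$, and hence
\begin{align*}
\riskn(\param^\dagger) = \fracnsamp \sum_{s \in \statvalset_\varepsilon} F(\param^\dagger; s).
\end{align*}
The crucial step is then to sandwich $\riskn(\param^\dagger)$ from both sides. The $\smooth$-smoothness of each sample loss around any $\param\opt \in \paramdomain\opt$ (where $F(\param\opt;s) = 0$ and $\grad F(\param\opt;s) = 0$) gives the descent bound $F(\param^\dagger; s) \leq \frac{\smooth}{2} \ltwo{\param^\dagger - \param\opt}^2$; minimizing over $\param\opt \in \paramdomain\opt$ gives $F(\param^\dagger; s) \leq \frac{\smooth}{2} \dist(\param^\dagger, \paramdomain\opt)^2$. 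On the other side, $\lambda$-quadratic growth of $\riskn$ yields $\riskn(\param^\dagger) \geq \frac{\lambda}{2} \dist(\param^\dagger, \paramdomain\opt)^2$. Combining, using $|\statvalset_\varepsilon| = 1/\varepsilon$,
\begin{align*}
\frac{\lambda}{2} \dist(\param^\dagger, \paramdomain\opt)^2 \leq \riskn(\param^\dagger) \leq \frac{\smooth}{2 n \varepsilon} \dist(\param^\dagger, \paramdomain\opt)^2,
\end{align*}
which, as soon as $n > \smooth/(\lambda\varepsilon)$, is only possible if $\dist(\param^\dagger, \paramdomain\opt) = 0$, so $\param^\dagger \in \paramdomain\opt$ (the minimizing set is closed, being the zero level set of the convex $\riskn$).

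The hard part is this final sandwich: it depends on each individual $F(\cdot; s)$ being $\smooth$-smooth so that the pointwise descent bound holds, which I can take for granted from the ambient setting of \Cref{thm:superefficiency} where $\risksamp$ is postulated $\smooth$-smooth in its first argument. The asymmetry driving the conclusion is that the smoothness side scales like $1/(n\varepsilon)$ while the growth side is $n$-independent, so any sufficiently large $n$ closes the gap, compatible with the ``sufficiently large $n$'' qualification already present in \Cref{thm:superefficiency}. Beyond this mild size condition on $n$, I do not anticipate further obstacles.
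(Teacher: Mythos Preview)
Your proof is correct and follows essentially the same route as the paper: both arguments observe that interpolation together with $\inf_\param F(\param;s)=0$ forces every $\param\opt\in\paramdomain\opt$ to zero each sample loss, then sandwich $\riskn(\param^\dagger)$ for $\param^\dagger\in\paramdomainneps$ between the smoothness upper bound $\frac{\smooth}{n\varepsilon}\dist(\param^\dagger,\paramdomain\opt)^2$ (coming from the at most $1/\varepsilon$ surviving terms) and the $\lambda$-quadratic-growth lower bound, which forces $\dist(\param^\dagger,\paramdomain\opt)=0$ once $n$ is large. The paper phrases this as a contradiction rather than a direct sandwich, but the substance is identical; your write-up is in fact more explicit about the inclusion $\paramdomain\opt\subseteq\paramdomainneps$ and the ``sufficiently large $n$'' caveat than the paper's own proof.
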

\begin{proof}
Suppose for the sake of contradiction that $\paramdomain\opt \neq \paramdomainneps$
Since $\riskn$ is an interpolation problem, the removal of samples can only increase the size of $\paramdomainneps$. Suppose that $\paramdomainneps \setminus \paramdomain\opt \neq \emptyset$. There exists at most $1/\varepsilon$ points in $\statvalset$ that have non-zero error on $\paramdomainneps \setminus \paramdomain\opt$. However, by smoothness of each sample function (and the fact that $\risk(\param\opt) = 0$ and $\risk'(\param\opt) = 0$ by construction), we have that for $\param \in [a,b]$
\begin{align*}
    \riskn(\param) \leq \frac{\smooth }{n\varepsilon}\dist(\param, \paramdomain\opt)^2.
\end{align*}
Since $\lim_{n\to\infty}\frac{\smooth}{n\varepsilon} = 0$, this contradicts $\lambda$-quadratic growth.
\end{proof}

This second lemma ensures that deleting a constant number of samples does not affect the growth or strong convexity of the population function by too much.

\begin{lemma}\label{lem:sc-closure-removal}
Assume that $\inf_{\param\in\paramdomain}\risksamp(\param; \statval) = 0$ for all $\statval \in \statdomain$.
Suppose $\riskn$ satisfies \Cref{def:interpolation} and has $\lambda$-quadratic growth (respectively $\lambda$-strong convexity). Let $\riskneps$ be defined as in \Cref{lem:growth-minimizer-stability}. Then $\riskneps$ has $\gamma$-quadratic growth (respectively $\gamma$-strong convexity) for any $\gamma \leq \lambda  - \frac{\smooth}{n \varepsilon}$.
\end{lemma}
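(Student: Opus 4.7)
The plan is to prove both assertions by the same decomposition: write $\risk_{n,\varepsilon}(\param) = \risk_n(\param) - \frac{1}{n}\sum_{\statval \in \statvalset_\varepsilon} \risksamp(\param;\statval)$ and show that removing a constant number of $\smooth$-smooth sample functions can degrade strong-convexity/quadratic-growth by at most $\smooth/(n\varepsilon)$. The main input is the standard ``upper descent lemma'': since each $\risksamp(\cdot;\statval)$ is convex and $\smooth$-smooth, we have $\risksamp(y;\statval) \le \risksamp(x;\statval) + \langle \nabla \risksamp(x;\statval), y-x\rangle + \frac{\smooth}{2}\ltwo{y-x}^2$ for all $x,y \in \paramdomain$.

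For strong convexity, I would apply $\lambda$-strong convexity of $\risk_n$ at an arbitrary pair $x,y$, subtract off $|\statvalset_\varepsilon|/n = 1/(n\varepsilon)$ copies of the upper descent inequality for each $\risksamp(\cdot;\statval)$ with $\statval \in \statvalset_\varepsilon$, and collect terms. The linear pieces reassemble into $\langle \nabla \risk_{n,\varepsilon}(x), y-x\rangle$, and the quadratic coefficient becomes $\tfrac{1}{2}(\lambda - \tfrac{\smooth}{n\varepsilon})$, yielding $(\lambda - \smooth/(n\varepsilon))$-strong convexity of $\risk_{n,\varepsilon}$.

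For quadratic growth, first invoke \Cref{lem:growth-minimizer-stability} to get $\paramdomain_{n,\varepsilon}\opt = \paramdomain\opt$. Fix any $\param\opt \in \paramdomain\opt$ and any $\param \in \paramdomain$. Interpolation plus the assumption $\inf_\param \risksamp(\param;\statval) = 0$ imply $\risksamp(\param\opt;\statval) = 0$ and $\nabla \risksamp(\param\opt;\statval) = 0$ for every $\statval$, so the upper descent lemma collapses to $\risksamp(\param;\statval) \le \tfrac{\smooth}{2}\ltwo{\param-\param\opt}^2$; taking $\param\opt$ to be the projection of $\param$ onto $\paramdomain\opt$ gives $\risksamp(\param;\statval) \le \tfrac{\smooth}{2}\dist(\param, \paramdomain\opt)^2$. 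Averaging over $\statvalset_\varepsilon$ loses at most $\tfrac{\smooth}{2n\varepsilon}\dist(\param, \paramdomain\opt)^2$. Since all sample functions are nonnegative and vanish on $\paramdomain\opt$, we have $\min \risk_{n,\varepsilon} = \risk_{n,\varepsilon}(\param\opt) = 0$, so combining with the $\lambda$-quadratic growth of $\risk_n$ yields
\begin{equation*}
\risk_{n,\varepsilon}(\param) - \min \risk_{n,\varepsilon} \;\ge\; \tfrac{\lambda}{2}\dist(\param,\paramdomain\opt)^2 - \tfrac{\smooth}{2n\varepsilon}\dist(\param,\paramdomain\opt)^2 \;=\; \tfrac{1}{2}\Bigl(\lambda - \tfrac{\smooth}{n\varepsilon}\Bigr)\dist(\param,\paramdomain_{n,\varepsilon}\opt)^2,
\end{equation*}
which is the claimed $\gamma$-quadratic growth for any $\gamma \le \lambda - \smooth/(n\varepsilon)$.

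There is no genuine obstacle here; the two bullets are essentially the same computation packaged differently. The only subtlety worth stating carefully is that the argument needs $\nabla \risksamp(\param\opt;\statval) = 0$ for the quadratic-growth case, which is precisely the content of the interpolation assumption combined with $\inf \risksamp = 0$, and that \Cref{lem:growth-minimizer-stability} is what guarantees the distance to the minimizing set is unchanged after the deletion.
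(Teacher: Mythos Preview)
Your proposal is correct and follows essentially the same approach as the paper: bound the contribution of the removed $1/\varepsilon$ sample functions via their $\smooth$-smoothness (the upper descent lemma), then subtract that bound from the $\lambda$-strong-convexity/quadratic-growth inequality for $\risk_n$. The paper phrases each case as a proof by contradiction while you argue directly, but the underlying computation and the use of \Cref{lem:growth-minimizer-stability} are identical.
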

\begin{proof}
By \Cref{lem:growth-minimizer-stability}, that the minimizing set of $\riskneps$ is the same as $\riskn$.
Suppose for the sake of contradiction that $\riskneps$ does not have $\gamma$-quadratic growth. Then there must exist $\param_1$ such that 
\begin{align*}
    \riskneps(\param_1) - \riskneps(\param\opt) < \frac{\gamma}{2}\ltwo{\param_1 - \param\opt}^2.
\end{align*}
By smoothness and growth we have
\begin{align*}
    \frac{\smooth}{2 n \varepsilon} \ltwo{\param_1 - \param\opt}^2 + \frac{\gamma}{2}\ltwo{\param_1 - \param\opt}^2 > \riskn(\param_1) - \riskn(\param\opt) \geq \frac{\lambda}{2}\ltwo{\param_1 - \param\opt}^2.
\end{align*}
This implies that $\gamma > \lambda - \frac{\smooth}{n\varepsilon}$, a contradiction.

Suppose for the sake of contradiction that $\riskneps$ does not have $\gamma$-strong convexity. Then there must exist $\param_1$ and $\param_2$ such that 
\begin{align*}
    \riskneps(\param_1) - \riskneps(\param_2) < \frac{\gamma}{2}\ltwo{\param_1 - \param\opt}^2 + \langle \nabla \riskneps(\param_2), \param_1 - \param_2 \rangle.
\end{align*}
By smoothness and strong convexity we have
\begin{align*}
    \frac{\smooth}{2 n \varepsilon} \ltwo{\param_1 - \param_2}^2 + \frac{\gamma}{2}\ltwo{\param_1 - \param_2}^2 + \langle \nabla \riskn(\param_2), \param_1 - \param_2 \rangle > \riskn(\param_1) - \riskn(\param_2) \geq \frac{\lambda}{2}\ltwo{\param_1 - \param_2}^2 + \langle \nabla \riskn(\param_2), \param_1 - \param_2 \rangle.
\end{align*}
However, this implies that $\gamma > \lambda - \frac{\smooth}{n\varepsilon}$ which is a contradiction.
\end{proof}

The next lemma is a standard result on the closure under addition of strongly convex functions.
\begin{lemma}\label{lem:sc-closure-add}
Let functions $h_1$ and $h_2$ be $\lambda$ and $\gamma$ strongly convex respectively, then $h_1 + h_2$ is $\lambda + \gamma$ strongly convex.
\end{lemma}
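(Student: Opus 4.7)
The plan is to use the definition of strong convexity directly and exploit the linearity of the gradient operator. Writing out Definition~\ref{def:lipschitz}(3) for each of $h_1$ and $h_2$, we get for all $x, y \in \paramdomain$
\begin{align*}
    h_1(y) &\ge h_1(x) + \nabla h_1(x)^T(y - x) + \frac{\lambda}{2}\ltwo{y - x}^2,\\
    h_2(y) &\ge h_2(x) + \nabla h_2(x)^T(y - x) + \frac{\gamma}{2}\ltwo{y - x}^2.
\end{align*}
Adding these two inequalities and using $\nabla(h_1 + h_2)(x) = \nabla h_1(x) + \nabla h_2(x)$ immediately yields
\begin{align*}
    (h_1 + h_2)(y) \ge (h_1 + h_2)(x) + \nabla (h_1 + h_2)(x)^T(y - x) + \frac{\lambda + \gamma}{2}\ltwo{y - x}^2,
\end{align*}
which is precisely the statement that $h_1 + h_2$ is $(\lambda + \gamma)$-strongly convex.

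There are essentially no obstacles here: the proof is a one-line consequence of adding the two defining inequalities and using linearity of the gradient. The only subtle point would be if either $h_1$ or $h_2$ were non-differentiable, in which case one would replace $\nabla h_i(x)$ with an arbitrary subgradient $g_i \in \partial h_i(x)$ and use the fact that $g_1 + g_2 \in \partial(h_1 + h_2)(x)$ (Moreau--Rockafellar), but since the paper's definition of strong convexity (Definition~\ref{def:lipschitz}) is stated in terms of $\nabla h$, the differentiable version suffices. No separate treatment is needed for the quadratic growth version used earlier (Lemma~\ref{lem:sc-closure-removal}) because strong convexity subsumes it up to the minimizer.
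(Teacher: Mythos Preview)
Your proof is correct; the paper itself states this lemma as a standard result and gives no proof at all, so your direct verification from Definition~\ref{def:lipschitz}(3) is exactly what one would supply to fill the gap.
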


This lemma provides some growth conditions on the gradient under smoothness, strong convexity and  quadratic growth.
\begin{lemma}\label{lem:growth-smooth-consequence}
Let $g: \paramdomain \rightarrow \R_+$ be a convex function with $\paramdomain\opt = \argmin_{\param \in \paramdomain}g(\param)$ such that for $\param\opt \in \paramdomain\opt$, $g(\param\opt) =0$. Suppose $g$ has $\lambda$-quadratic growth, then 
\begin{align*}
    |g'(\param)| \geq \frac{\lambda}{2} \dist(\param, \paramdomain\opt). 
\end{align*}
If instead $g$ has $\lambda$-strong convexity, then
\begin{align*}
    |g'(\param)| \geq \lambda \dist(\param, \paramdomain\opt). 
\end{align*}
Alternatively, suppose $g$ has $\smooth$-smoothness, then
\begin{align*}
    |g'(\param)| \leq \smooth \dist(\param, \paramdomain\opt).
\end{align*}
\end{lemma}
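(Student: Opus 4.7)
The three bounds are standard convex-analytic consequences of the stated regularity conditions, so the plan is to treat each in turn using only the definitions from \Cref{def:lipschitz} and \Cref{ass:growth}.

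For the smoothness upper bound, I would apply the definition of smoothness with $\param\opt$ the projection of $\param$ onto $\paramdomain\opt$. Since $\paramdomain\opt$ is a set of minimizers, $g'(\param\opt) = 0$, so smoothness gives $|g'(\param)| = |g'(\param) - g'(\param\opt)| \le \smooth |\param - \param\opt| = \smooth \dist(\param, \paramdomain\opt)$.

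For the strong convexity lower bound, I would combine two strong convexity inequalities. With $\param\opt$ the projection of $\param$ onto $\paramdomain\opt$, strong convexity at $\param$ evaluated at $\param\opt$ gives $g(\param\opt) \ge g(\param) + g'(\param)(\param\opt - \param) + \tfrac{\lambda}{2}|\param - \param\opt|^2$, while strong convexity at $\param\opt$ evaluated at $\param$ (using $g'(\param\opt) = 0$) gives $g(\param) \ge g(\param\opt) + \tfrac{\lambda}{2}|\param - \param\opt|^2$. Summing and rearranging yields $g'(\param)(\param - \param\opt) \ge \lambda |\param - \param\opt|^2$, and Cauchy-Schwarz produces $|g'(\param)| \ge \lambda \dist(\param, \paramdomain\opt)$.

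For the quadratic-growth lower bound, which is the most delicate of the three because we only have a one-sided growth condition rather than full strong convexity, I would use convexity of $g$ at $\param$ against the projection $\param\opt$ of $\param$ onto $\paramdomain\opt$: $g(\param\opt) \ge g(\param) + g'(\param)(\param\opt - \param)$, i.e.\ $g'(\param)(\param - \param\opt) \ge g(\param) - g(\param\opt)$. Quadratic growth lower bounds the right-hand side by $\tfrac{\lambda}{2}\dist(\param, \paramdomain\opt)^2$, so Cauchy-Schwarz gives $|g'(\param)| \cdot \dist(\param, \paramdomain\opt) \ge \tfrac{\lambda}{2}\dist(\param, \paramdomain\opt)^2$, and dividing through yields the claim (the case $\param \in \paramdomain\opt$ is trivial since both sides vanish).

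The only subtle point is justifying that the projection $\param\opt$ onto $\paramdomain\opt$ exists and achieves $|\param - \param\opt| = \dist(\param, \paramdomain\opt)$; this follows because $\paramdomain\opt$ is closed and convex as the sublevel set of a convex function, so the quadratic-growth half is the main obstacle only in the mild sense of ensuring this projection step is valid.
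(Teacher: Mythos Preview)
Your proposal is correct and follows essentially the same route as the paper: convexity plus growth for the first bound, strong monotonicity of the gradient for the second, and the Lipschitz-gradient definition for the third, each combined with the fact that $g'(\param\opt)=0$. The only cosmetic differences are that the paper writes the inequalities for an arbitrary $\param\opt\in\paramdomain\opt$ and then minimizes over $\param\opt$ at the end (rather than fixing the projection up front), and for the strong-convexity bound it simply invokes the known inequality $|g'(\param)-g'(\param\opt)|\ge\lambda|\param-\param\opt|$ directly instead of deriving it from two applications of the definition as you do.
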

\begin{proof}
We note that by first order optimality conditions, for all $\param\opt \in\paramdomain\opt$, $\nabla g(\param\opt) = 0$.
To prove the first inequality, we have that for any $\param\opt \in \paramdomain\opt$, the following is true:
\begin{align*}
    \frac{\lambda}{2} \dist(\param, \paramdomain\opt)^2 \leq g(\param) - g\opt \leq |g'(\param)||\param - \param\opt|.
\end{align*}
In particular, minimizing over $\param\opt$ on the right hand side and rearranging gives the desired result.
To prove the second result, we know that by strong convexity for any $\param\opt \in \paramdomain\opt$
\begin{align*}
    |g'(\param)|= |g'(\param) - g'(\param\opt)| \geq \lambda |\param - \param\opt|.
\end{align*}
To prove the last result, we know that by smoothness for any $\param\opt \in \paramdomain\opt$
\begin{align*}
    |g'(\param)|= |g'(\param) - g'(\param\opt)| \leq \smooth |\param - \param\opt|.
\end{align*}
Minimizing over $\param\opt$ on the right hand side gives the desired result.
\end{proof}

This lemma controls how much the minimizers of a function can change if another function is added. This will directly be useful in lower bounding the modulus of continuity.
\begin{lemma}\label{lem:control-of-opt-2}
Suppose $h: [-\diam, \diam] \to \R_+$ and $g: [-\diam, \diam] \to \R_+$. Let $\param_h\opt$ be the largest minimizer of $h$ and $\param_g\opt$ be the smallest minimizer of $g$, and assume that $\param_h\opt \leq \param_g\opt$. Let $\param\opt$ be any minimizer of $h +g$. Assume that $h(\param_h\opt) = 0$ and $g(\param_g\opt)= 0$. 
If $h$ has $\lambda_h$-quadratic growth and $g$ is $\smooth_g$-smooth, then 
\begin{align*}
\param\opt -\param_h\opt \leq \frac{\smooth_g (\param_g\opt-\param_h\opt)}{\frac{\lambda_h}{2} + \smooth_g}.
\end{align*}
If $h$ is $\smooth_h$-smooth and $g$ has $\lambda_g$-quadratic growth, then
\begin{align*}
    \frac{\frac{\lambda_g}{2} (\param_g\opt-\param_h\opt)}{\frac{\lambda_g}{2} + \smooth_h} 
    \leq\param\opt -\param_h\opt.
\end{align*}
The same relation holds with $\lambda_g / 2$ and $\lambda_h / 2$ replaced with $\lambda_g$ and $\lambda_h$ respectively if the above statement is modified such that $g$ and $h$ are $\lambda_g$ and $\lambda_h$ strongly convex instead.
\end{lemma}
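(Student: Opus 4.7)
The plan is to combine first-order optimality at $\param\opt$ with the quantitative gradient bounds from \Cref{lem:growth-smooth-consequence}. First I would establish that $\param\opt \in [\param_h\opt, \param_g\opt]$. Since $\param_h\opt$ is the largest minimizer of $h$, on $(\param_h\opt, D]$ every subgradient of $h$ is strictly positive, and analogously every subgradient of $g$ is strictly negative on $[-D, \param_g\opt)$; reversing these inequalities outside $[\param_h\opt, \param_g\opt]$ makes $0 \in \partial h(\param\opt) + \partial g(\param\opt)$ impossible, which contradicts first-order optimality of $\param\opt$ for $h+g$. Consequently there exist subgradients $u \in \partial h(\param\opt)$, $v \in \partial g(\param\opt)$ with $u \geq 0$, $v \leq 0$, and $u + v = 0$, so $|u| = |v|$. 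Moreover, with this ordering, $\dist(\param\opt, \paramdomain_h\opt) = \param\opt - \param_h\opt$ and $\dist(\param\opt, \paramdomain_g\opt) = \param_g\opt - \param\opt$, since $\param_h\opt$ and $\param_g\opt$ are the closest respective minimizers.

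For Case 1 ($h$ has $\lambda_h$-quadratic growth, $g$ is $\smooth_g$-smooth), applying \Cref{lem:growth-smooth-consequence} gives
\begin{equation*}
    |u| \ge \frac{\lambda_h}{2}\,(\param\opt - \param_h\opt),
    \qquad
    |v| \le \smooth_g\,(\param_g\opt - \param\opt).
\end{equation*}
Equating $|u| = |v|$ and rearranging by adding $\smooth_g(\param\opt - \param_h\opt)$ to both sides yields
\begin{equation*}
    \Bigl(\tfrac{\lambda_h}{2} + \smooth_g\Bigr)(\param\opt - \param_h\opt) \le \smooth_g (\param_g\opt - \param_h\opt),
\end{equation*}
which is the desired upper bound. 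Case 2 is exactly symmetric: smoothness of $h$ gives $|u| \le \smooth_h(\param\opt - \param_h\opt)$ and quadratic growth of $g$ gives $|v| \ge \tfrac{\lambda_g}{2}(\param_g\opt - \param\opt)$, and the same algebraic step (adding $\tfrac{\lambda_g}{2}(\param\opt - \param_h\opt)$ to both sides after equating) produces the stated lower bound. For the strong-convexity variant, \Cref{lem:growth-smooth-consequence} replaces $\lambda/2$ by $\lambda$ in both one-sided gradient bounds, so the identical argument goes through with $\lambda_h$ and $\lambda_g$ in place of $\lambda_h/2$ and $\lambda_g/2$.

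The only subtle step is confining $\param\opt$ to $[\param_h\opt, \param_g\opt]$; after that the proof is a one-line manipulation of the optimality condition. The delicacy comes from the fact that $h$ and $g$ need not be differentiable and may have flat minimizing sets, which is why the hypothesis that $\param_h\opt$ is the \emph{largest} minimizer of $h$ and $\param_g\opt$ is the \emph{smallest} minimizer of $g$ is essential: it guarantees that the subgradient inequalities are strict on the relevant side and that the distances to the minimizing sets are exactly $\param\opt - \param_h\opt$ and $\param_g\opt - \param\opt$. I do not foresee any further obstacles.
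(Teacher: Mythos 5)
Your proof is correct and follows essentially the same route as the paper's: confine $\param\opt$ to $[\param_h\opt, \param_g\opt]$ via first-order optimality, apply the gradient bounds of \Cref{lem:growth-smooth-consequence}, and combine them through $h'(\param\opt)+g'(\param\opt)=0$ before rearranging. Your use of subgradients rather than derivatives is a minor refinement of the same argument, not a different approach.
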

\begin{proof}
If $\param_h\opt \neq \diam$, then the first order condition for optimality implies
\begin{align*}
    h'(\param_h\opt) + g'(\param_h\opt) = g'(\param_h\opt) < 0 \quad \text{and} \quad     h'(\param_g\opt) + g'(\param_g\opt) = h'(\param_g\opt) > 0.
\end{align*}
Thus, $\param\opt \in (\param_h\opt, \param_g\opt)$. By the monotonicty of the first derivative of convex functions that for $\param\opt \in (\param_h\opt, \param_g\opt)$, $g'(\param\opt) < 0$ and $h'(\param\opt) > 0$. Combining this with \Cref{lem:growth-smooth-consequence}, we get 
\begin{align*}
    \frac{\lambda_h}{2}(\param\opt - \param_h\opt) \leq h'(\param\opt) \leq \smooth_h(\param\opt - \param_h\opt)\\
    \smooth_g (\param\opt - \param_g\opt) \leq g'(\param\opt)\leq \frac{\lambda_g}{2}(\param\opt - \param_g\opt).
\end{align*}
Combining these facts gives
\begin{align*}
\frac{\lambda_h}{2}(\param\opt - \param_h\opt) + \smooth_g (\param\opt -\param_g\opt) \leq h'(\param\opt) + g('\param\opt) = 0 \leq \smooth_h(\param\opt - \param_h\opt) + \frac{\lambda_g}{2} (\param\opt -\param_g\opt)
\end{align*} 
Rearranging these two inequalities gives the desired result. We note that the lower bound only requires that $h$ is $\smooth_h$-smooth and $g$ has $\lambda_g$-quadratic growth, and the upper bound only requires $h$ has $\lambda_h$-quadratic growth and $g$ is $\smooth_g$-smooth. The last statement about strong convexity follows from the same reasoning, except using the strong convexity inequality in \Cref{lem:growth-smooth-consequence} instead of the quadratic growth inequality.
\end{proof}

The following lemma is a slight modification of Claim 6.1 from \cite{ShalevShSrSr09} and will be helpful for us to upper bound the modulus of continuity.
\begin{lemma}\label{lem:stability}
Let $\statvalset'$ consist of $n$ data points where $|\statvalset \triangle \statvalset'| = k$. Suppose that $\riskn$ is $\lambda$-strongly convex and satisfies \Cref{def:interpolation}. Assume the sample function $\risksamp: \paramdomain \times \statdomain \to \R_+$ is $\lipschitz$-Lipschitz in its first argument and that $\inf_{\param\in\paramdomain}\risksamp(\param; \statval) = 0$ for all $\statval \in \statdomain$. For $\paramopts \in \argmin_{\param \in \paramdomain} \riskn(\param)$ and $\paramoptsp \in \argmin_{\param \in \paramdomain} \risknp(\param)$, we have that 
\begin{align*}
    \ltwo{\paramopts - \paramoptsp} \leq \frac{4k\lipschitz}{\lambda n}.
\end{align*}
\end{lemma}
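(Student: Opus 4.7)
The plan is to run the standard two-sided stability argument: strong convexity of $\riskn$ gives a quadratic lower bound on $\riskn(\paramoptsp) - \riskn(\paramopts)$, while optimality of $\paramoptsp$ for $\risknp$ lets us upper bound that same quantity by a Lipschitz-type perturbation coming from the $k$ swapped samples. Nothing about the interpolation assumption or the condition $\inf_\param F(\param;\statval) = 0$ seems essential for the mechanics; the proof is driven entirely by strong convexity and Lipschitzness.

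Concretely, I would first use $\lambda$-strong convexity of $\riskn$ together with the first-order optimality $\nabla \riskn(\paramopts) = 0$ (or, at the boundary, the variational inequality) to obtain
\begin{equation*}
\riskn(\paramoptsp) - \riskn(\paramopts) \;\geq\; \frac{\lambda}{2} \ltwo{\paramoptsp - \paramopts}^2 .
\end{equation*}
Next, I would use the minimality of $\paramoptsp$ for $\risknp$, namely $\risknp(\paramoptsp) \leq \risknp(\paramopts)$, and rearrange to get
\begin{equation*}
\riskn(\paramoptsp) - \riskn(\paramopts) \;\leq\; \bigl[\riskn(\paramoptsp) - \risknp(\paramoptsp)\bigr] - \bigl[\riskn(\paramopts) - \risknp(\paramopts)\bigr].
\end{equation*}
The right-hand side telescopes over the symmetric difference $\statvalset \triangle \statvalset'$: after writing $\riskn - \risknp = \tfrac{1}{n}\bigl(\sum_{s \in \statvalset \setminus \statvalset'} F(\cdot;s) - \sum_{s \in \statvalset' \setminus \statvalset} F(\cdot;s)\bigr)$, each summand contributes $F(\paramoptsp;s) - F(\paramopts;s)$, which the $L$-Lipschitz assumption bounds by $L\ltwo{\paramoptsp - \paramopts}$ in absolute value. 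With at most $2k$ such terms (the two set differences each have size at most $k$), this gives an upper bound of $\frac{2kL}{n} \ltwo{\paramoptsp - \paramopts}$.

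Combining the two bounds yields
\begin{equation*}
\frac{\lambda}{2} \ltwo{\paramoptsp - \paramopts}^2 \;\leq\; \frac{2kL}{n} \ltwo{\paramoptsp - \paramopts},
\end{equation*}
and dividing by $\ltwo{\paramoptsp - \paramopts}$ (trivially true if it equals zero) gives the claimed $\frac{4kL}{\lambda n}$ bound. The only real bookkeeping subtlety is the factor arising from the symmetric-difference convention, namely counting both $\statvalset \setminus \statvalset'$ and $\statvalset' \setminus \statvalset$ when summing the per-sample Lipschitz deviations; this is exactly what produces the constant $4$ rather than $2$ in the final expression. I do not anticipate any substantive obstacle beyond this accounting.
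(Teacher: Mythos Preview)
Your proposal is correct and follows essentially the same argument as the paper: strong convexity gives the quadratic lower bound, the decomposition over $\statvalset \setminus \statvalset'$ and $\statvalset' \setminus \statvalset$ together with optimality of $\paramoptsp$ for $\risknp$ gives the linear upper bound, and combining the two yields the result. The only cosmetic difference is that the paper explicitly invokes the interpolation assumption to justify $\nabla \riskn(\paramopts) = 0$ (since $\paramopts$ minimizes each sample function, the gradient vanishes even at a boundary point), whereas you correctly observe that the first-order variational inequality for constrained minimization would suffice; either way the inequality $\riskn(\paramoptsp) - \riskn(\paramopts) \ge \tfrac{\lambda}{2}\ltwo{\paramoptsp - \paramopts}^2$ goes through.
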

\begin{proof}
By strong convexity, we have that 
\begin{align*}
    \riskn(\paramoptsp) - \riskn(\paramopts) \geq \frac{\lambda}{2} \ltwo{\paramoptsp - \paramopts}^2,
\end{align*}
since by first order optimality conditions, we know that $\nabla \riskn(\paramopts) = 0$ as a consequence of \Cref{def:interpolation}. We also have
\begin{align*}
    \riskn(\paramoptsp) - \riskn(\paramopts) &= \fracnsamp \sum_{\statval \in \statvalset \setminus \statvalset'} \left[ \risksamp(\paramoptsp; \statval) - \risksamp(\paramopts; \statval)\right] + \fracnsamp \sum_{\statval \in \statvalset \cap \statvalset'} \left[ \risksamp(\paramoptsp; \statval) - \risksamp(\paramopts; \statval)\right]\\
    &=\fracnsamp \sum_{\statval \in \statvalset \setminus \statvalset'} \left[ \risksamp(\paramoptsp; \statval) - \risksamp(\paramopts; \statval)\right]
    - \fracnsamp \sum_{\statval \in \statvalset' \setminus \statvalset} \left[ \risksamp(\paramoptsp; \statval) - \risksamp(\paramopts; \statval)\right]
    +\risknp(\paramoptsp) - \risknp(\paramopts)\\
    &\leq \frac{2 k \lipschitz}{n} \ltwo{\paramoptsp - \paramopts},
\end{align*}
where the last inequality comes from the Lipschitzness of $\risksamp$ and that $\paramoptsp \in \argmin_{\param \in \paramdomain} \risknp(\param)$.
\end{proof}

Armed with these supporting lemmas, we can now bound the modulus of continuity. Let $\param_0\opt$ be the largest minimizer of $\riskn$ following the steps of the proof outline.
Without loss of generality, we assume that $\param_0\opt \leq 0$. If $\param_0\opt >0$, by symmetry, it suffices to consider the problem replacing $\frac{\smooth}{2}(\param -\diam)^2$ with $\frac{\smooth}{2}(\param +\diam)^2$ in the following proof.
By \Cref{lem:growth-minimizer-stability}, $\riskneps$ has the same minimizing set as $\riskn$. By \Cref{lem:sc-closure-removal}, $\riskneps$ has $\lambda - \frac{\smooth}{n\varepsilon}$-strong convexity. Replace the $1/\varepsilon$ datapoints removed with samples that have the loss function $\frac{\smooth}{2}(\param-\diam)^2$; we note that it is clear that $\frac{\smooth}{2}(\param-\diam)^2$ satisifies the desired Lipschitz condition. Our constructed non-interpolation population function is
\begin{align*}
    \riskneps(\param) + \frac{\smooth}{2 n\varepsilon}(\param - \diam)^2,
\end{align*}
which is $\lambda$-strongly convex by \Cref{lem:sc-closure-removal} and \Cref{lem:sc-closure-add} and is $2\smooth\diam$-Lipschitz. This means that the $\statvalset'$ this function corresponds to belongs to $\superfamily$. Let $\param\opt$ be the minimizer of $\riskneps(\param) + \frac{\smooth}{2 n\varepsilon}(\param - \diam)^2$.

By the triangle inequality, we have $\riskneps$ is $\left(\frac{n-1/\varepsilon}{n}\right)\smooth$-smooth. $\frac{\smooth}{2n\varepsilon}(\param - \diam)^2$ is $\frac{\smooth}{n\varepsilon}$- strongly convex. Thus, by \Cref{lem:control-of-opt-2} setting $h(x)\defeq \riskneps(x)$ and $g(x)\defeq \frac{\smooth}{2n\varepsilon}(\param - \diam)^2$, we have that 
\begin{align*}
    |\param\opt - \param_0\opt| = \param\opt - \param_0\opt \geq \frac{\frac{\smooth}{n\varepsilon} (\diam - \param_0\opt)}{\left(\frac{n-1/\varepsilon}{n}\right)\smooth + \frac{\smooth}{n\varepsilon}} = \frac{\diam - \param_0\opt}{n \varepsilon} = \frac{\diam + |\param_0\opt|}{n \varepsilon} \geq  \frac{\diam}{n \varepsilon}.
\end{align*}
Here, implicitly, we are using the fact that $\param_0\opt$ is also a minimizer of $\riskneps$ by \Cref{lem:sc-closure-removal}. This completes the proof of the lower bound.

The upper bound follows from \Cref{lem:stability} with $k = 1/\varepsilon$ and $\lipschitz = 2\smooth\diam$.


\end{document}